\DeclareMathOperator*{\argmin}{arg\,min} 
\theoremstyle{plain}
\newtheorem{thm}{Theorem}
\newtheorem{lem}[thm]{Lemma}
\newtheorem{cor}[thm]{Corollary}
\newtheorem{remark}[thm]{Remark}
\newtheorem{prop}[thm]{Proposition}
\newtheorem{defi}[thm]{Definition}
\newtheorem{assumption}[thm]{Assumption}
\newcommand{\EE}{\mathbb{E}}
\newcommand{\NN}{\mathbb{N}}
\newcommand{\PP}{\mathbb{P}}
\newcommand{\RR}{\mathbb{R}}
\newcommand{\XX}{\mathbb{X}}
\newcommand{\ZZ}{\mathbb{Z}}
\newcommand{\cA}{\mathcal{A}}
\newcommand{\cC}{\mathcal{C}}
\newcommand{\cH}{\mathcal{H}}
\newcommand{\cN}{\mathcal{N}}
\newcommand{\cS}{\mathcal{S}}
\newcommand{\cU}{\mathcal{U}}
\DeclarePairedDelimiter\abs{\lvert}{\rvert}%
\DeclarePairedDelimiter\norm{\lVert}{\rVert}%
\DeclarePairedDelimiter\autoparens{(}{)}
\newcommand{\prs}[1]{\autoparens*{#1}}
\DeclarePairedDelimiter\autobrackets{[}{]}
\newcommand{\brs}[1]{\autobrackets*{#1}}
\DeclarePairedDelimiter\autocurlybrackets{\{}{\}}
\newcommand{\cbs}[1]{\autocurlybrackets*{#1}}
\DeclarePairedDelimiter\autoinnerproduct{\langle}{\rangle}
\newcommand{\innerprod}[1]{\autoinnerproduct*{#1}}
\let\oldabs\abs
\def\abs{\@ifstar{\oldabs}{\oldabs*}}
\let\oldnorm\norm
\def\norm{\@ifstar{\oldnorm}{\oldnorm*}}
\title{Flow Straight and Fast in Hilbert Space: \\ Functional Rectified Flow}
\author{%
  Jianxin Zhang \\
  Electrical Engineering and Computer Science \\
  University of Michigan \\
  Ann Arbor, MI 48109 \\
  \texttt{jianxinz@umich.edu} \\
  \And
  Clayton Scott \\
  Electrical Engineering and Computer Science \\
  University of Michigan \\
  Ann Arbor, MI 48109 \\
  \texttt{clayscot@umich.edu} \\
}
\begin{document}

\maketitle

\begin{abstract}
    Many generative models originally developed in finite-dimensional Euclidean space have functional generalizations in infinite-dimensional settings. However, the extension of rectified flow to infinite-dimensional spaces remains unexplored. In this work, we establish a rigorous functional formulation of rectified flow in an infinite-dimensional Hilbert space. Our approach builds upon the superposition principle for continuity equations in an infinite-dimensional space. We further show that this framework extends naturally to functional flow matching and functional probability flow ODEs, interpreting them as nonlinear generalizations of rectified flow. Notably, our extension to functional flow matching removes the restrictive measure-theoretic assumptions in the existing theory of \citet{kerrigan2024functional}. Furthermore, we demonstrate experimentally that our method achieves superior performance compared to existing functional generative models.
\end{abstract}

\section{Introduction}
Generative modeling has witnessed significant advancements, with methods such as flow matching \citep{lipman2023flow}, diffusion models \citep{song2020improved, ho2020denoising, song2021scorebased}, and rectified flows \citep{liu2022flow} achieving state-of-the-art performance across various data types, including audio \cite{kong2021diffwave, goel2022sashimi}, image \cite{dhariwal2021diffusion, kang2023scaleupgan}, and video \cite{ho2022imagenvideohighdefinition, saharia2022photorealistic}.

Many of these techniques were originally developed in finite-dimensional Euclidean spaces and their extensions to infinite-dimensional settings have garnered increasing attention due to their potential for greater flexibility and applicability, as seen in functional flow matching \citep{kerrigan2024functional}, functional GAN \citep{rahman2022generative} and functional diffusion models \citep{franzese2023continuoustime, sun2023scorebased, bond-taylor2024inftydiff, na2025probabilityflow}. However, despite the success of rectified flow in finite dimensions, its functional generalization remains an open problem.

In this work, we develop a mathematically rigorous framework for rectified flows in general separable Hilbert spaces. We further show that this framework naturally extends to proposed models of functional flow matching \cite{kerrigan2024functional} and function probability flow ODEs \cite{na2025probabilityflow} in Hilbert spaces, interpreting them as nonlinear generalizations of rectified flow. Notably, our approach removes the restrictive measure-theoretic assumptions required in \citet{kerrigan2024functional}. This theoretical foundation offers a unified lens for functional generative models. Additionally, we empirically validate functional rectified flow models and demonstrate their superior performance compared to other functional generative models.

The remainder of the paper is organized as follows. Section~\ref{sec:litrev} reviews related work on rectified flows and functional generative models. In Section~\ref{sec:main}, we present our main theoretical result, which lifts rectified flows to general Hilbert spaces. Section~\ref{sec:nonlinearextension} discusses connections to functional flow matching and functional probability flow ODE. Architectural choices for functional rectified flow models are discussed in Section~\ref{sec:implementation}. Experimental results across several domains are reported in Section~\ref{sec:exp}. Finally, Section~\ref{sec:summary} concludes with a summary of findings and a discussion of limitations and future directions.

\section{Related work}
\label{sec:litrev}

Generative models have significantly advanced in recent years, with methods such as Generative Adversarial Networks (GANs), diffusion models, and flow matching achieving state-of-the-art results. GANs, introduced in \citet{goodfellow2014generativeadversarialnetworks}, leverage adversarial training to generate high-quality samples from complex data distributions. Diffusion models, based on stochastic differential equations, iteratively remove noise from corrupted data through a learned denoising process, demonstrating strong generative capabilities \citep{ho2020denoising, song2021scorebased}. Flow matching constructs a path of conditional Gaussian distributions to interpolate the data distribution and a reference distribution \citep{lipman2023flow, chen2018neural}.

Rectified flows, introduced by \citet{liu2022flow}, offer a deterministic alternative to stochastic generative models by constructing straight transport paths between source and target distributions. In contrast to diffusion models, which rely on stochastic sampling, rectified flows enable more efficient and interpretable generation with reduced computational overhead. The associated straightening effect has been shown to facilitate high-quality generation with very few sampling steps \citep{lee2024improving}.  Further theoretical development, such as its connection to Optimal Transport, has been explored in \citet{liu2022rectified}. Recent advances have extended rectified flow methods to a broad range of generative tasks, including image generation and editing \citep{zhu2024flowie, rout2024semantic, ma2024janusflow, yang2024text, dalva2024fluxspace}, 3D content creation \citep{go2024splatflow}, text-to-speech synthesis and editing \citep{guan2024reflow, liu2024flashaudio, guo2024voiceflow, yin2025robust}, audio reconstruction \citep{yuan2025flowsep, liu2024rfwave}, video generation \citep{wu2024mind}, and multi-modal generative modeling \citep{li2024omniflow, liu2024syncflowtemporallyalignedjoint}. Despite this growing popularity, existing rectified flow models are still constrained to finite-dimensional spaces. In this work, we address this limitation by extending rectified flow to general Hilbert spaces, thereby enabling modeling in infinite-dimensional function spaces.

A key motivation for studying functional generative models is that many data sources are inherently functional—such as snapshots of time series or solutions to partial differential equations. A prominent example is neural stochastic differential equations (Neural SDEs), where neural networks are trained to model path-valued random variables that solve SDEs \citep{kidger2021neuralsdesinfinitedimensionalgans}. However, these models typically assume that the data follows an underlying SDE structure, as in financial time series \citep{zhang2025efficient, issa2023nonadversarial}, which limits their general applicability. On the other hand, representing finite-dimensional data as continuous functions offers several advantages: it naturally supports variable resolution, accommodates diverse data modalities using simple architectures, and enhances memory efficiency \citep{dupont2022functa}. Motivated by these benefits, recent works have extended generative modeling beyond finite-dimensional Euclidean spaces to functional settings. \citet{dutordoir2023neural} and \citet{zhuang2023diffusion} adapt existing diffusion models to functional data by conditioning on discretized pointwise evaluations. 

A more general direction involves defining stochastic processes directly in infinite-dimensional Hilbert spaces, leading to the development of functional diffusion models \citep{kerrigan23diffinf, franzese2023continuoustime, bond-taylor2024inftydiff, mittal2022from, lim2023score, lim2023scorebased, Pidstrigach24infdiff, hagemann2023multilevel, na2025probabilityflow}. These works extend score-based methods by studying diffusion processes over function spaces. Similarly, functional flow matching \citep{kerrigan2024functional} generalizes the flow matching framework of \citet{lipman2023flow} to infinite-dimensional settings. However, as noted in their work, the analysis of \cite{kerrigan2024functional} relies on strong measure-theoretic assumptions that are often difficult to verify in practice. In contrast, we extend rectified flow to Hilbert spaces under more tractable and verifiable conditions, providing a rigorous and broadly applicable foundation for functional generative modeling.

\section{Rectified flows on Hilbert space}
\label{sec:main}
 

 In this section, we extend rectified flows to infinite-dimensional Hilbert spaces, demonstrating the fundamental property of marginal distribution preservation. Let $\cH$ be a separable Hilbert space. Given a probability triplet $(\Omega, \cS, \PP)$, denoting the sample space, sigma algebra, and probability measure, we take the perspective that the law of a stochastic process $\XX: [0, 1] \times \Omega \to \cH$ is the distribution of the function-valued random variable defined as $\omega \mapsto \XX(\cdot, \omega)$. For simplicity, we denote the value of $\XX$ at time $t \in [0,1]$ by $X_t$ and the associated marginal measure by $\mu_t$. A random process $\XX$ will thus also be denoted $\{X_t\}_{t \in [0,1]}$ or simply $\{X_t\}_{t}$. Unless otherwise specified, $\innerprod{\cdot, \cdot}$ refers to the inner product in $\cH$, and the norm $\norm{\cdot}$ refers to its induced norm.

\subsection{Definitions of rectifiable and rectified flows}

Given a process $\XX$, 
the induced rectified flow is a random process $\{Z_t\}_{t \in [0,1]}$ that is defined in terms a quantity called the expected velocity of $\XX$.

\begin{defi} \label{def:expected_velocity}
    Let $\XX = \cbs{X_t}_{t\in [0,1]}$ be a pathwise continuously differentiable random process on a separable Hilbert space $\cH$. The \emph{expected velocity} of $\XX$ is defined as 
    \begin{equation}
        v^{\XX}(t, x) = \EE\brs{\dot{X_t} \mid X_t = x}, \quad \forall x \in \operatorname{supp}(X_t),  \forall t \in [0, 1]
    \end{equation}
    and is set to zero if $x \notin \operatorname{supp}(X_t)$. Here, $\dot{X_t}$ denotes the time derivative of $X_t$, and $\operatorname{supp}(X_t)$ the support of the random variable $X_t$.
\end{defi}


Let $C^1$ denote the space of continuously differentiable functions. An initial value problem (IVP) is a differential equation with a specified initial condition for the unknown function. To ensure the well-posedness of rectified flows in Hilbert space, we assume that the corresponding IVP admits a unique solution and a continuous solution map.

\begin{assumption} \label{assumption:ivp}
    Given a function $v: [0, 1] \times \cH \to \cH$, the initial value problem 
    \begin{equation} \label{eqn:ivp}
        z(t) = u + \int_0^t v(s, z(s)) ds, \quad u \in \cH,
    \end{equation}
    admits a unique solution $z(\cdot) \in C^1([0, 1]; \cH)$. Furthermore, the solution map $\Phi: \cH \to C^1([0, 1]; \cH)$ mapping from the initial value to the solution path, defined by $\Phi(u) = z$, is a continuous mapping from $(\cH, \norm{\cdot}_{\cH})$ to $(C^1([0, 1]; \cH), \norm{\cdot}_{\infty})$.
\end{assumption}

In the following assumption, we enforce that $v^{\XX}(t, x)$ has a finite integral in time to ensure that the process does not exhibit pathological behavior, such as infinite total drift.

\begin{assumption} \label{assumption:finite} 
    Given a pathwise continuously differentiable random process $\XX = \cbs{X_t}_{t\in [0,1]}$, 
        \begin{equation}
            \int_{0}^{1} \int_{\cH} \norm{v^{\XX}(t, x)} d\mu_t(x) dt < \infty
        \end{equation}
        and 
        \begin{equation}
            \EE\brs{\sup_{t \in [0, 1]} \norm{\dot{X_t}}} < \infty,
        \end{equation}
    where $\mu_t$ denotes the distribution of $X_t \in \cH$ for each $t\in [0, 1]$.
\end{assumption}

With these notions, we can now define a rectified flow.

\begin{defi}
    We say that a stochastic process $\XX = \cbs{X_t}_{t\in [0,1]}$ is \textbf{rectifiable} if Assumption \ref{assumption:ivp} holds for $v^{\XX}(t, x)$ and Assumption \ref{assumption:finite} holds for $\XX$.
    In this case, the process defined by 
    \begin{equation}\label{eqn:rectifiedflow}
        Z_t = Z_0 + \int_0^t v^{\XX}(s, Z_s) ds, \quad Z_0 \sim X_0, 
    \end{equation}
    is called the \textbf{rectified flow} induced by $\XX$.
\end{defi}

Unlike diffusion models, which rely on stochastic differential equations (SDEs), rectified flows are defined in terms of a deterministic ordinary differential equation (ODE) guided by $v^{\XX}(t, x)$.

\subsection{The marginal preserving property}

A key property of a rectified flow is that it preserves the marginal distributions of the original stochastic process. This is formalized in the following theorem which generalizes \citet{liu2022flow} from a finite-dimensional Euclidean space to a separable Hilbert space $\cH$. 

\begin{thm} \label{thm:mainMrginalPreservation}
    Assume that an $\cH$-valued stochastic process $\cbs{X_t}$ is rectifiable, and let $\cbs{Z_t}$ be the induced rectified flow. Then, for all $t \in [0,1]$, $Z_t \overset{d}{=} X_t.$
\end{thm}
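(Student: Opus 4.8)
The plan is to show that the time-marginals $(\mu_t)_{t\in[0,1]}$ of $\XX$ and the time-marginals of the rectified flow $\{Z_t\}$ both solve the \emph{same} continuity equation driven by the expected velocity $v^{\XX}$, and then to force the two families to coincide by combining a superposition principle for continuity equations on $\cH$ with the ODE uniqueness guaranteed by Assumption~\ref{assumption:ivp}. This mirrors the finite-dimensional argument of \citet{liu2022flow}, except that the Fokker--Planck/uniqueness step there must be replaced by an infinite-dimensional superposition argument, since there is no convenient PDE uniqueness theorem on $\cH$.

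First I would establish that $(\mu_t)_t$ is a weakly (narrowly) continuous solution of $\partial_t\mu_t + \nabla\cdot(v^{\XX}(t,\cdot)\mu_t)=0$ tested against a suitable class of functions $\phi:\cH\to\RR$, e.g.\ bounded smooth cylindrical functions with bounded Fréchet derivative. For such $\phi$, pathwise $C^1$ regularity and the chain rule give $\tfrac{d}{dt}\phi(X_t)=\innerprod{\nabla\phi(X_t),\dot X_t}$, and the domination $\EE[\sup_t\norm{\dot X_t}]<\infty$ from Assumption~\ref{assumption:finite} justifies differentiating under the expectation; the tower property of conditional expectation then yields $\tfrac{d}{dt}\EE[\phi(X_t)] = \EE[\innerprod{\nabla\phi(X_t),\EE[\dot X_t\mid X_t]}] = \int_\cH \innerprod{\nabla\phi(x),v^{\XX}(t,x)}\,d\mu_t(x)$, invoking Definition~\ref{def:expected_velocity}. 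The first bound in Assumption~\ref{assumption:finite} guarantees this weak solution has finite total flux, which is precisely the integrability hypothesis required to apply the superposition principle.

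Next I would invoke the superposition principle for continuity equations on a separable Hilbert space: there exists a probability measure $\eta$ on the path space $C([0,1];\cH)$, concentrated on solutions of the IVP~\eqref{eqn:ivp} with $v=v^{\XX}$, such that $\mu_t=(e_t)_\#\eta$ for every $t$, where $e_t$ denotes evaluation at time $t$; in particular $(e_0)_\#\eta=\mu_0$. By the uniqueness half of Assumption~\ref{assumption:ivp}, every such solution $\gamma$ satisfies $\gamma=\Phi(\gamma(0))$, so $\eta=\Phi_\#\mu_0$. Since the rectified flow is by construction the path $t\mapsto\Phi(Z_0)(t)$ with $Z_0\sim\mu_0$ (well-definedness of this process as a path-valued random variable following from continuity of $\Phi$ in Assumption~\ref{assumption:ivp}), its law on path space is also $\Phi_\#\mu_0=\eta$, and therefore $Z_t$ has law $(e_t)_\#\eta=\mu_t$, the law of $X_t$, for every $t\in[0,1]$.

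I expect the main obstacle to be the infinite-dimensional superposition principle itself and the verification of its hypotheses: one must fix the right class of cylindrical test functions, establish the narrow compactness/tightness of the candidate measures on path space (in finite dimensions this is essentially free from local compactness, which fails in $\cH$), and handle the disintegration defining $\EE[\dot X_t\mid X_t]$ together with the measurable-selection issues in reconstructing $\eta$ on a separable Hilbert space. A secondary technical point is confirming that the interchange of $\tfrac{d}{dt}$ and $\EE$ and the passage from pathwise to marginal statements are valid under Assumption~\ref{assumption:finite}, and that the boundary-in-time behavior (continuity of $t\mapsto\mu_t$, so that $Z_0\sim X_0$ propagates to all $t$) is correctly controlled.
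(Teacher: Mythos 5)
Your proposal follows essentially the same route as the paper: derive the continuity equation for $(\mu_t, v^{\XX})$ tested against cylindrical functions (using dominated convergence via $\EE[\sup_t\norm{\dot X_t}]<\infty$, the chain rule, and the tower property), invoke a superposition principle on $C([0,1];\cH)$ to obtain $\eta$ with $(E_t)_\#\eta=\mu_t$ concentrated on ODE solutions, use uniqueness from Assumption~\ref{assumption:ivp} to identify $\eta=\Phi_\#\mu_0$, and conclude. You correctly flagged the infinite-dimensional superposition principle as the main technical burden, which the paper discharges by verifying the hypotheses of Stepanov--Trevisan via Lemma~\ref{lem:main} (mollified, cut-off cylindrical approximants controlling asymptotic Lipschitz constants) and then upgrading the weak conclusion $\frac{d}{dt}\varphi(z(t))=(V_t\varphi)(z(t))$ to the genuine ODE in Corollary~\ref{cor:superpos}.
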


The central challenge in proving the theorem lies in developing the superposition principle within a general separable Hilbert space setting. This requires establishing several nontrivial technical results and rigorously connecting the resulting measure-theoretic decomposition to the solution of Equation~(\ref{eqn:ivp}). We defer the proof to Section \ref{suppsec:proof} in the appendix. 

Note that $Z_t$ derives its randomness entirely from the random initial condition $Z_0$.
Intuitively, this result states that while $Z_t$ evolves deterministically given $Z_0$ according to an ODE, it preserves the marginal distributions as $X_t$. When $X_t$ is constructed to interpolate between two distributions, where $X_0$ corresponds to a simple reference distribution (e.g., i.i.d.\ Gaussian noise) and $X_1$ represents the target data distribution, then learning the velocity field $v^{\XX}$ enables sampling from the data distribution by solving (\ref{eqn:rectifiedflow}) with randomly initialized $Z_0$.

The rectified flow method of \citet{liu2022flow} corresponds to a specific choice of $\XX=\{X_t\}$ given by
\begin{equation} \label{eqn:rf}
    X_t = tX_1 + (1-t)X_0,
\end{equation}
where $X_0$ and $X_1$ are independently sampled from the noise and data distributions, respectively. This formulation was shown to be effective in finite-dimensional Euclidean spaces.
 
If we define $X_t = tX_1 + (1-t)X_0$ as in (\ref{eqn:rf}), where $X_0, X_1 \in \cH$ , then $\dot{X_t} = X_1-X_0$. The velocity field $v^{\XX}(t, x)$, modeled by a neural network $v_{\theta}$, can be trained to minimize 
\begin{equation} \label{eq:loss}
\min_{\theta} \int_0^1 \EE_{x \sim \XX} \norm{(x_1-x_0) - v_{\theta}(x_t, t)}^2 dt,
\end{equation}
where $x$ is a realization of the stochastic process and $x_t$ is the value of $x$ at time $t$. Note that the objective can be approximated by randomly sampled data $X_1$, noise $X_0$, and timestamp $t$.

For sampling, we numerically integrate the ODE:
\begin{equation}
    Z_t = Z_0 + \int_0^t v_{\theta}(s, Z_s) ds, \quad Z_0 \sim X_0,
\end{equation}
allowing us to generate new samples from $X_1$ by evolving from $X_0$ using the learned expected velocity $v_{\theta}$. This approach is computationally more efficient than stochastic sampling methods used in diffusion models, as it avoids the nuances of reversing an SDE.

\subsection{Properties of rectified flow}
The desirable properties of rectified flows~(\ref{eqn:rf}) in finite-dimensional settings, as discussed in \citet{liu2022flow}—including transport cost reduction and the straightening effect—also hold in the infinite-dimensional Hilbert space setting. These results follow directly from the marginal-preserving property established in Theorem~\ref{thm:mainMrginalPreservation}, using arguments that largely mirror those in finite dimensions. As the extensions mainly involve technical but routine bookkeeping, we defer the detailed statements and proofs to Section ~\ref{sec:supRFproperties} in the appendix.

\section{Connections to other functional models}
\label{sec:nonlinearextension}


We place our work in the broader context of functional generative modeling by examining its relationship to functional flow matching \citep{kerrigan2024functional} and functional probability flow ODE \citep{na2025probabilityflow}. 
Our framework accommodates interpolation paths of the form:
\begin{equation}
\label{eq:rectifiedflow_nonlinear}
    X_t = \alpha_t X_1 + \beta_t X_0,
\end{equation}
where \(\alpha_t, \beta_t\) are continuously differentiable functions of time \(t\). This generalized parameterization recovers rectified flow when \(\alpha_t = t\), \(\beta_t = 1 - t\). We call Equation~(\ref{eq:rectifiedflow_nonlinear}) the nonlinear extension of the function rectified flow.
We show that both functional flow matching \citep{kerrigan2024functional} and functional probability flow ODE \citep{na2025probabilityflow} can be viewed as nonlinear extensions of functional rectified flow within our general framework.

To facilitate our discussion, we first define the Cameron–Martin space and the notion of a Wiener process in Hilbert space. Let $Q$ be a trace-class, positive, symmetric operator on $\cH$. The associated Cameron–Martin space is given by $\cH_Q \coloneq Q^{\frac{1}{2}}(\cH)$, the range of $Q^{\frac{1}{2}}$, equipped with the inner product $\innerprod{x, y}_{\cH_Q} \coloneq \innerprod{Q^{-\frac{1}{2}}x , Q^{-\frac{1}{2}}y}_{\cH}$. A $Q$-Wiener process $W_t$ is a continuous Gaussian process in $\cH$ satisfying $W_0 = 0$ and having independent, stationary increments, with $W_t - W_s \sim \cN(0, (t-s)Q)$ for all $0 \leq s \leq t$.

\subsection{Comparison with functional flow matching}
\label{sec:ffm_comparison}

By examining Equation~(6) in \citet{kerrigan2024functional}, we can directly see that Equation~(\ref{eq:rectifiedflow_nonlinear}) subsumes their proposed flow matching models as special cases. In Table~\ref{table:ffm}, we summarize how the “VP” and “OT” paths introduced in their work arise as specific instances of \((\alpha_t, \beta_t)\).

The framework of \citet{kerrigan2024functional} relies on a strong measure-theoretic assumption that may not hold even in finite-dimensional Euclidean space. Suppose there exists a stochastic process \(\{X_t\}_{t \in [0,1]}\) interpolating \(X_0\) (noise) and \(X_1\) (data), with \(\mu_t\) denoting the marginal law of \(X_t\), and \(\mu_t^x\) the conditional law of \(X_t\) given \(X_1 = x\). \citet{kerrigan2024functional} assume:
\[
\mu_t^x \ll \mu_t,
\]
\emph{i.e.}, the conditional measure \(\mu_t^x\) must be absolutely continuous with respect to the marginal \(\mu_t\) for almost every \(x\). As admitted by the authors in \cite{kerrigan2024functional}, this assumption is \textbf{difficult to satisfy and verify, and generally fails even in finite-dimensional Euclidean space}. If \(X_0\) is a trace-class Gaussian process, this reduces to the requirement that \(X_1\) lies in the Cameron–Martin space of the initial Gaussian measure, i.e., \(X_1 \in C_0^{1/2}(\cH)\) when \(X_0 \sim \mathcal{N}(0, C_0)\). As also noted in \citet{kerrigan2024functional}, even this condition is \textbf{highly restrictive and challenging to verify in practice}. Although we assume that the processes are pathwise continuously differentiable, this assumption is more plausible and readily satisfied in practical settings.

\begin{table}[h]
\caption{Functional flow matching paths proposed in \citet{kerrigan2024functional} are special cases of (\ref{eq:rectifiedflow_nonlinear})}
\label{table:ffm}
\centering
\begin{tabular}{lcc}
\textbf{Flow Path} & \(\alpha_t\) & \(\beta_t\) \\
\midrule
OT (Optimal Transport) & \(t\) & \(1 - (1 - \sigma_{\min}) t\), \(\sigma_{\min} \in (0, 1)\) \\
VP (Variance Preserving) & arbitrary \(\in [0, 1]\) & \(\sqrt{1 - \alpha_t^2}\) \\
\end{tabular}
\end{table}

\subsection{Comparison with functional probability flow ODE}
\label{sec:pfode_comparison}

Throughout this section, we adopt the convention that time $t=0$ corresponds to the data distribution and $t=1$ to the noise distribution. This is the reverse of the convention used in the rest of the paper, and is chosen to align with the standard formulation of score-based generative modeling and SDEs. This section also references technical concepts (e.g., $Q$-Wiener process, Cameron-Martin space) that will be defined in the appendix.

The concurrent work of \citet{na2025probabilityflow} introduces an analogue of the probability flow ODE from \citet{song2021scorebased}, adapted to infinite-dimensional Hilbert spaces. Specifically, they consider the following variance-preserving SDE in $\cH$:

\begin{equation} \label{eqn:sde}
dY_t = -\frac{\sigma_t}{2} Y_t\,dt + \sqrt{\sigma_t}\,dW_t, \quad Y_0 \sim P_{\text{data}},
\end{equation}

where $W_t$ is a $Q$-Wiener process in $\cH$, $\sigma_t$ is a bounded, continuous function taking values on $\RR_{\geq 0}$. Let $\{\zeta_i\}$ denote an orthonormal basis of $\cH$ consisting of eigenvectors of $Q$, and let $\cH_Q$ denote the corresponding Cameron–Martin space. To support the analysis, we recall the formal definition of the logarithmic gradient.

\begin{defi}
A Borel probability measure $\mu$ is said to be Fomin differentiable along $h \in \cH_Q$ if there exists a function $\rho_h^\mu \in L^1(\cH, \mu)$ such that for all cylinder functions of the form 
\[
F(x) = f\left(\langle \zeta_1, x \rangle, \dots, \langle \zeta_m, x \rangle\right), \quad f \in \cC_0^{\infty}(\RR^m),\, m \in \NN,
\]
we have
\[
\int_{\cH} \partial_h F(x)\, \mu(dx) = -\int_{\cH} F(x)\, \rho_h^\mu(x)\, \mu(dx),
\]
where $\partial_h$ denotes the Gâteaux differential along $h$. Let \(\mathcal{K}\) be a subspace of \(\mathcal{H}\). If there exists a function $\rho_{\mathcal{K}}^\mu : \cH \to \cH$ such that
\[
\langle \rho_{\mathcal{K}}^\mu(x), h \rangle_{\mathcal{K}} = \rho_h^\mu(x) \quad \text{for all } x \in \cH,\, h \in \mathcal{K},
\]
then $\rho_{\mathcal{K}}^\mu$ is called the logarithmic gradient of $\mu$ along $\mathcal{K}$.
\end{defi}

During training, a score network $S(t, Y_t)$ is learned to approximate the logarithmic gradient by minimizing the objective

\[
\int_0^1 \EE_{Y_0 \sim P_{\text{data}}} \EE_{Y_t \sim \mu_{t \mid Y_0}} \norm{S(t, Y_t) - \rho_{\cH_Q}^{\mu_{t \mid Y_0}}(Y_t)} dt,
\]

where $\mu_{t \mid Y_0}$ denotes the conditional distribution of $Y_t$ given $Y_0$. Let $S^*$ denote its minimizer.  

For sampling, \citet{na2025probabilityflow} propose the following probability flow ODE, to be solved in reverse time from $t=1$ to $t=0$:
\begin{equation} \label{eqn:pfode}
dY_t = -\frac{\sigma_t}{2} (Y_t + S^*(t, Y_t))\,dt, \quad Y_1 \sim \cN(0, Q).
\end{equation}

The following proposition shows how this ODE fits within our rectified flow framework.

\begin{prop} \label{prop:pfode}
    Equation~\eqref{eqn:pfode} is the rectified flow induced by the process $Y'_t$ defined as
    \begin{equation} \label{eqn:alternativeYt}
    Y'_t =  \eta(t)\, Y'_0 + \sqrt{\kappa(t)}\, U,
    \end{equation}
    where $\eta(t) = \exp\left(-\frac{1}{2}\int_0^t \sigma_s\, ds\right)$, $\kappa(t) = \int_0^t \exp\left(-\int_s^t \sigma_{\tau} d\tau\right) \sigma_s\, ds$, and $Y'_0 \sim P_{\text{data}}$, $U \sim \cN(0, Q)$ are sampled independently. Equivalently, $\EE[\dot{Y}'_t \mid Y'_t] = -\frac{\sigma_t}{2} \left(Y'_t + S^*(t, Y'_t)\right).$
\end{prop}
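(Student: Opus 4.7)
I would prove the equivalent form $\EE[\dot{Y}'_t \mid Y'_t] = -\frac{\sigma_t}{2}(Y'_t + S^{*}(t, Y'_t))$; the identification of Equation~(\ref{eqn:pfode}) as the rectified flow induced by $Y'_t$ is then immediate from Definition~\ref{def:expected_velocity} and Equation~(\ref{eqn:rectifiedflow}), with endpoint laws matched by Theorem~\ref{thm:mainMrginalPreservation}. The argument has three parts: (i) showing $Y'_t$ has the same marginals and conditionals as the SDE solution $Y_t$ so that the same score $S^*$ is relevant; (ii) computing $\EE[\dot{Y}'_t \mid Y'_t]$ directly from the explicit form of $Y'_t$; and (iii) identifying $\EE[U \mid Y'_t]$ with $S^*$ via an infinite-dimensional Tweedie-type formula.

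For (i), variation of constants applied to~(\ref{eqn:sde}) yields $Y_t = \eta(t) Y_0 + \eta(t)\int_0^t \eta(s)^{-1}\sqrt{\sigma_s}\, dW_s$; the stochastic integral against the $Q$-Wiener process is a centered $\cH$-valued Gaussian with covariance operator $\eta(t)^2 \left(\int_0^t \eta(s)^{-2}\sigma_s\, ds\right) Q = \kappa(t) Q$, and is independent of $Y_0$. Hence $Y_t \overset{d}{=} \eta(t) Y_0 + \sqrt{\kappa(t)}\, U$ with $U \sim \cN(0, Q)$ independent of $Y_0$, matching~(\ref{eqn:alternativeYt}), and the conditional laws satisfy $\mu_{t \mid Y'_0 = y_0} = \cN(\eta(t) y_0, \kappa(t) Q)$. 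For (ii), pathwise differentiation of~(\ref{eqn:alternativeYt}) gives $\dot{Y}'_t = \eta'(t) Y'_0 + \frac{\kappa'(t)}{2\sqrt{\kappa(t)}}\, U$, where $\eta'(t)/\eta(t) = -\sigma_t/2$ and a Leibniz computation yields $\kappa'(t) = \sigma_t(1-\kappa(t))$. Using $Y'_0 = \eta(t)^{-1}(Y'_t - \sqrt{\kappa(t)}\, U)$ and linearity of conditional expectation then reduces $\EE[\dot{Y}'_t \mid Y'_t]$ to an expression involving only $Y'_t$ and $\EE[U \mid Y'_t]$.

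Step (iii) is the core of the proof. Since $\mu_{t\mid Y'_0=y_0} = \cN(\eta(t) y_0, \kappa(t) Q)$, a Fomin-differentiability calculation along the Cameron--Martin direction gives the logarithmic gradient $\rho_{\cH_Q}^{\mu_{t\mid Y'_0}}(x) = -\kappa(t)^{-1}(x - \eta(t) y_0)$, which equals $-\kappa(t)^{-1/2}\, U$ along paths. A standard denoising-score-matching orthogonality argument applied to the objective defining $S^*$ identifies the minimizer with the Bayes predictor, $S^{*}(t, y) = \EE[\rho_{\cH_Q}^{\mu_{t\mid Y'_0}}(Y'_t) \mid Y'_t = y]$, so $\EE[U \mid Y'_t] = -\sqrt{\kappa(t)}\, S^{*}(t, Y'_t)$. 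Plugging this back into the expression from (ii), the coefficient of $S^{*}$ collapses via $-\frac{\sigma_t}{2}\kappa(t) - \frac{\sigma_t(1-\kappa(t))}{2} = -\frac{\sigma_t}{2}$, yielding the claimed identity. The main obstacle is this last step: making rigorous the infinite-dimensional Tweedie identification along the Cameron--Martin space $\cH_Q$, which requires careful use of Fomin differentiability of the Gaussian conditional together with the $L^2$-projection characterization of the score-matching minimizer, both available in the framework of \citet{na2025probabilityflow}. The remaining arguments are elementary ODE arithmetic and linearity of conditional expectation.
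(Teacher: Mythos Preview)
Your proposal is correct and follows essentially the same route as the paper: solve the linear SDE by variation of constants to identify the conditional law $\mu_{t\mid Y_0}=\cN(\eta(t)Y_0,\kappa(t)Q)$, compute the logarithmic gradient of this Gaussian along $\cH_Q$ to obtain the Tweedie-type identity $S^*(t,y)=\EE[-U/\sqrt{\kappa(t)}\mid Y'_t=y]$, differentiate $Y'_t$ using $\dot\eta=-\tfrac{\sigma_t}{2}\eta$ and $\dot\kappa=\sigma_t(1-\kappa)$, and conclude by linearity of conditional expectation. The one place where the paper supplies more detail than your sketch is precisely the step you flag as the main obstacle: it isolates the identity $\rho_{\cH_Q}^{\cN(m,\kappa Q)}(x)=(m-x)/\kappa$ as a standalone lemma and proves it by projecting to finite dimensions, applying Stein's identity there, and handling the orthogonal complement via a conditional-mean computation for jointly Gaussian vectors.
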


Thus, the probability flow ODE~\eqref{eqn:pfode} corresponds to the time-reversal of the nonlinear rectified flow in Equation~\eqref{eq:rectifiedflow_nonlinear}, with the interpolation weights given by $\alpha_t = \eta(1-t)$ and $\beta_t = \sqrt{\kappa(1-t)}$. 

\section{Architectures for functional generative modeling}
\label{sec:implementation}

Let $\cH$ be a Hilbert space of functions from some coordinate space $M$ to $\RR$. To implement functional rectified flows in practice, one must approximate the expected velocity field $v^{\XX}(x_t, t) : \cH \times [0, T] \to \cH$, which is inherently defined over an infinite-dimensional domain. 
Directly learning such a mapping is generally intractable. However, if $\cH = L_2 (M)$, under the conditions of Theorem 2 in \citet{franzese2023continuoustime}, any $x \in \cH$ can be fully characterized by its pointwise evaluations $\{(x[p_i], p_i)\}$, where $p_i \in M$. This insight enables the design of networks that act on sampled representations rather than abstract functional inputs, thus making implementation feasible. Following \citet{franzese2023continuoustime} and \citet{kerrigan2024functional}, we consider three practical architectures for this purpose: \emph{Implicit Neural Representations} (INRs), \emph{Transformers}, and \emph{Neural Operators}. We include this discussion of architectural choices for completeness but refer the reader to \citet{franzese2023continuoustime} for a comprehensive treatment.


The first approach builds on \emph{implicit neural representations} (INRs) \cite{sitzmann2020implicit} and the model‑agnostic meta‑learning (MAML) paradigm \cite{finn2017model}.  
Following the modulation strategy introduced for functional diffusion processes by \citet{franzese2023continuoustime}, we represent the velocity field $v_{\theta}(x_t,t)$ with a network $n(\psi, t, \theta): M \to \RR$ where $\theta$ is the shared base parameter and  $\psi$ is a \emph{sample‑specific} modulation vector that is adapted online to encode each $x_t$.  
Formally,
\begin{equation} \label{eq:modulation_inr}
    v_{\theta}(x_t, t) = n(\psi^*, t, \theta), \quad \psi^* = \argmin_{\psi} \sum_{p_i} (n(\psi, t, \theta)[p_i] - x_t[p_i])^2.
\end{equation}
For every $x_t$ we initialize $\psi=\mathbf 0$ and perform a small number of gradient‑descent steps, with $\theta$ fixed, to minimize the residual in~\eqref{eq:modulation_inr}.  
During training, the global parameters $\theta$ are updated via the loss~\eqref{eq:loss}, whereas $\psi$ is re‑optimized from scratch for each $x_t$, enabling the network to have a functional representation while operating on finite discretizations $\{x_t[p_i], p_i\}$.


Our second approach employs \emph{transformer architectures} \citep{Vaswani2017attention}, treating discretized function evaluations as sequences with positional information. For each $x_t$, we consider the set of finite discretizations $\{x_t[p_i]\}$ as input features and their corresponding coordinates $\{p_i\}$ as positions. Each $x_t[p_i]$ is embedded into a higher-dimensional vector space, and summed with the positional encodings of $p_i$, resulting in a sequence of embeddings $\{y_i\}$. This sequence is then processed by a transformer to obtain $\{v_{\theta}(x_t, t)[p_i]\}$. We note that \citet{cao2021choose} interpret transformers as mappings between Hilbert spaces, and \citet{kovachki2021neural} discuss the learned rather than guaranteed nature of resolution invariance of transformers. While these theoretical perspectives are relevant, we do not explore them further and refer interested readers to the original papers for a detailed discussion.


Neural operators \citep{kovachki2021neural, kossaifi2024neural} offer a principled framework for modeling functions in infinite-dimensional settings and have demonstrated success in various applications \citep{PENG2024111063, QIN2025112668}. A neural operator learns a mapping $\mathcal{G}: \mathcal{\cH} \to \mathcal{\cH}$, trained using finite evaluations of functions. Once trained, it maps a uniform discretization $\{(x[p_i], p_i)\}$ to output values at the same locations, producing $\{\mathcal{G}(x)[p_i]\}$. In our implementation, we directly parameterize the velocity field $v_{\theta}$ using a neural operator. Similar to the transformer-based approach, the positions $\{p_i\}$ are encoded via positional embeddings and appended to the embeddings of the corresponding $\{x_t[p_i]\}$. Although neural operators typically require inputs to be defined on fixed, uniformly spaced grids and produce outputs at the same grid locations—limiting flexibility for irregular or adaptive sampling—they have been shown to be highly effective for PDE data \citep{kerrigan2024functional}. \citet{bond-taylor2024inftydiff} implement an infinite-dimensional generative model using a neural operator-based architecture but rely on a specialized diffusion autoencoder \citep{preechakul2022diffusion}, making their approach highly specialized and not applicable to our setting. 

\section{Experiments}
\label{sec:exp}

We evaluate the proposed Functional Rectified Flow (FRF) model on three datasets: MNIST (MIT Licence) \citep{lecun2010mnist}, CelebA  $64 \times 64$ (CC BY-SA 4.0) \citep{Liu2015celeba}, and the Navier-Stokes dataset (MIT Licence) \citep{li2022learning}, using the INR, transformer, and neural operator implementations, respectively. Our goal is to assess whether extending rectified flow to infinite-dimensional Hilbert spaces yields competitive or superior performance compared to existing state-of-the-art functional generative models under different architectural choices. For clarity, we divide the experimental results into two parts: (1) image data experiments on MNIST and CelebA, and (2) PDE data experiments on the Navier-Stokes dataset. Additional details on the experiments can be found in Section \ref{suppsec:experiments} of the appendix.

\subsection{Image datasets}

For image datasets, we focus our comparisons on Functional Diffusion Processes (FDP) \citep{franzese2023continuoustime} and $\infty$-DIFF \citep{bond-taylor2024inftydiff}, which are among the most competitive methods for image generation in functional settings. To ensure a fair comparison, we adopt the exact same architectural design as \citet{franzese2023continuoustime}, without introducing any specialized enhancements. FRF is implemented in JAX \citep{bradbury2018jax}, building directly on the released codebase of \citet{franzese2023continuoustime} (Apache-2.0 License). For INR-based models, modulation codes are optimized using 3 steps of gradient descent per sample. For both MNIST and CelebA experiments, $X_0$ is sampled from the stationary distribution of the functional SDE (9) in \citet{franzese2023continuoustime}.

To evaluate generative performance, we consider the Fréchet Inception Distance (FID) \citep{heusel2017gans} and the FID-CLIP score \citep{Kynkaanniemi2022}, where lower values indicate higher visual fidelity and closer alignment with the target data distribution.

\subsubsection{INR on MNIST}
\label{sec:exp_mnist}

On MNIST ($32 \times 32$), we benchmark an INR-based FRF model against the Functional Diffusion Process (FDP) \citep{franzese2023continuoustime}. We specifically adopt the INR architecture to test the effectiveness of FRF on lightweight models, as INRs naturally have a small parameter footprint while retaining strong functional approximation capabilities.

As shown in Table~\ref{tab:mnist_results}, our INR-based FRF achieves a lower FID score than FDP while using the same number of parameters, indicating that functional rectified flow leads to improved sample quality even with lightweight models.

Figure~\ref{fig:mnist_results} further illustrates the qualitative advantages of FRF. Notably, the super-resolved samples produced by our model at both $64 \times 64$ and $128 \times 128$ resolutions (Figures~(b) and (d)) exhibit smoother digit contours compared to the naïvely upscaled real MNIST images (Figures~(c) and (e)). This highlights the model's ability to generate coherent high-resolution samples by leveraging its continuous functional representation while being trained on a lower resolution. 

\begin{table}[h]
\centering
\caption{Results on MNIST ($32\times 32$) using INR.}
\label{tab:mnist_results}
\begin{tabular}{lcc}
\textbf{Method} & \textbf{FID} $(\downarrow)$ & \textbf{Params} \\
\hline
FRF (INR, ours) & \textbf{0.41} & $\mathcal{O}(0.1\mathrm{M})$ \\
FDP (INR) & 0.43 & $\mathcal{O}(0.1\mathrm{M})$ \\
\end{tabular}
\end{table}
\begin{figure}[htbp]
    \centering

    \begin{minipage}[c]{0.19\textwidth}
        \centering
        \includegraphics[width=\linewidth]{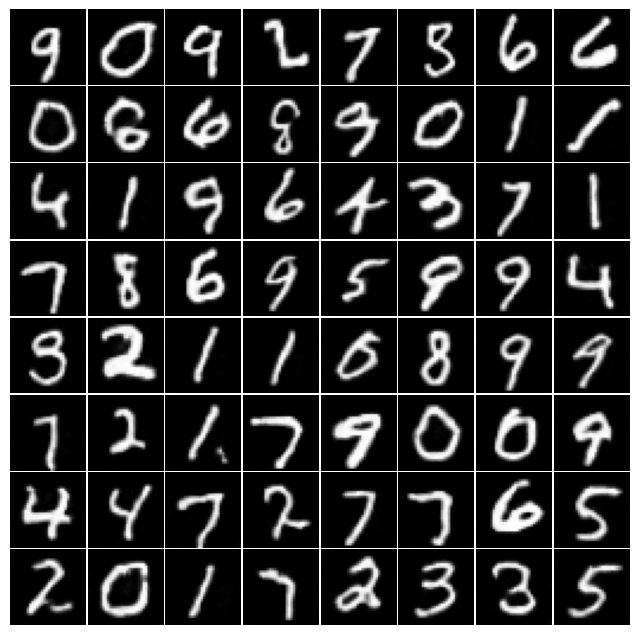}
        {(a)}
    \end{minipage}
    \hfill
    \begin{minipage}[c]{0.19\textwidth}
        \centering
        \includegraphics[width=\linewidth]{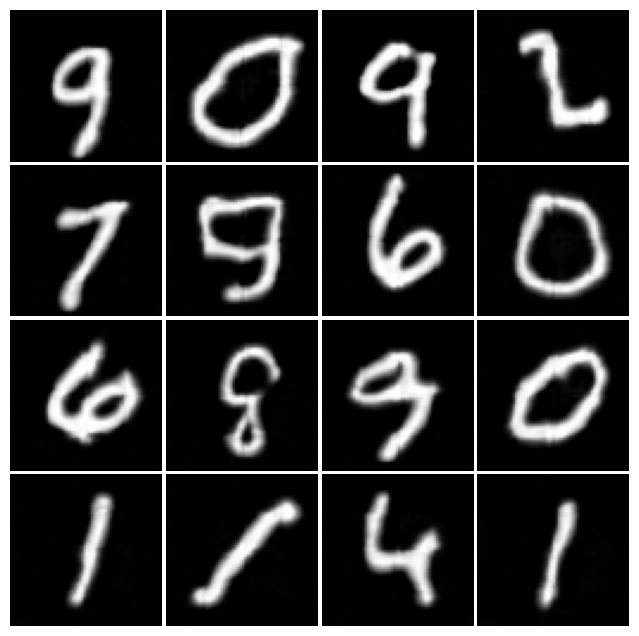}
        {(b)}
    \end{minipage}
    \hfill
    \begin{minipage}[c]{0.19\textwidth}
        \centering
        \includegraphics[width=\linewidth]{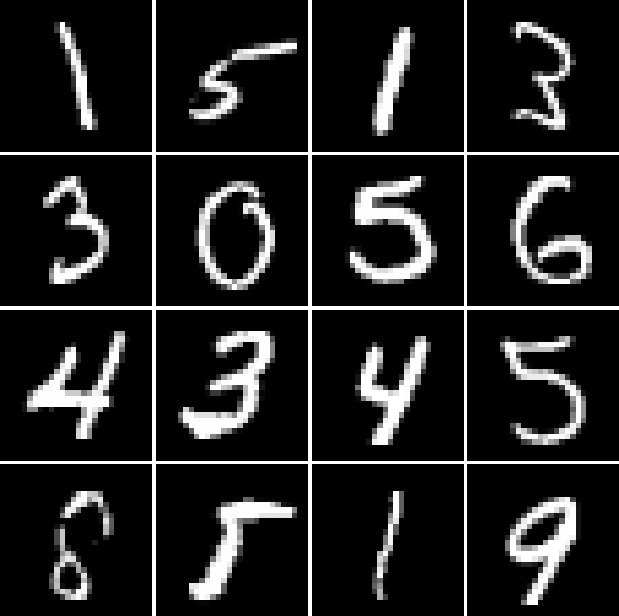}
        {(c)}
    \end{minipage}
    \hfill
    \begin{minipage}[c]{0.19\textwidth}
        \centering
        \includegraphics[width=\linewidth]{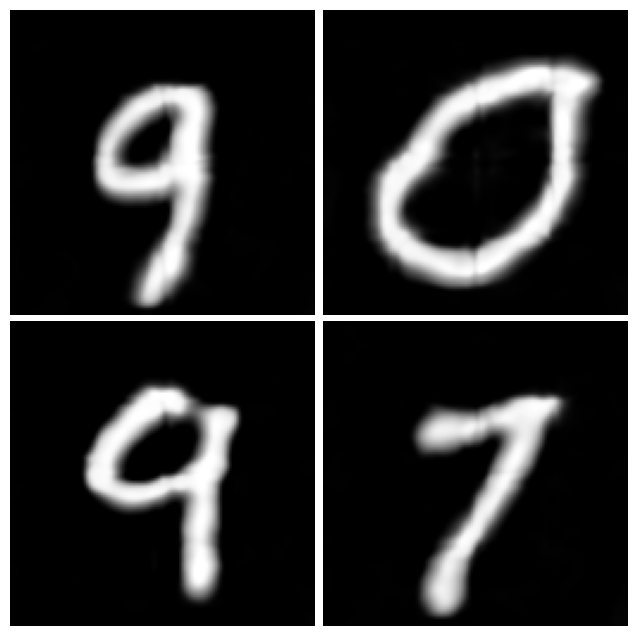}
        {(d)}
    \end{minipage}
    \hfill
    \begin{minipage}[c]{0.19\textwidth}
        \centering
        \includegraphics[width=\linewidth]{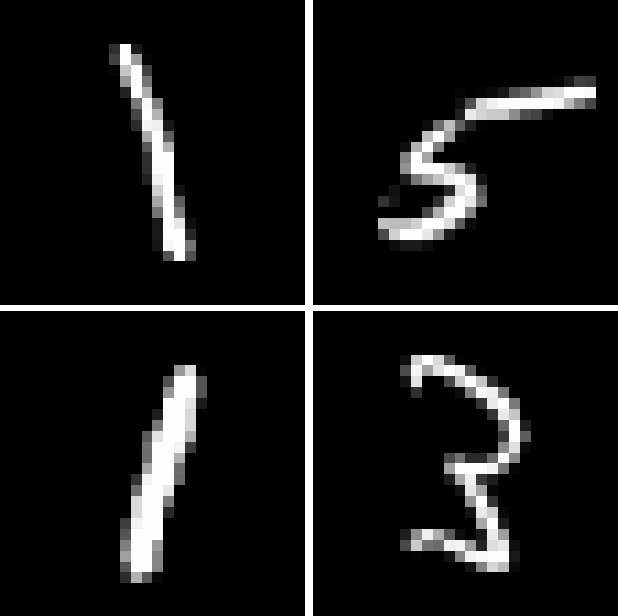}
        {(e)}
    \end{minipage}

    \caption{
        Qualitative results on MNIST:  
        (a) samples generated at the original $32 \times 32$ resolution;  
        (b) super-resolved samples at $64 \times 64$;  
        (c) real MNIST images upscaled to match (b);  
        (d) super-resolved samples at $128 \times 128$;  
        (e) real MNIST images upscaled to match (d).
    }
    \label{fig:mnist_results}
\end{figure}

\subsubsection{Transformer on CelebA}
\label{sec:exp_celeba}
On CelebA ($64 \times 64$), we evaluate transformer-based FRF models and compare against FDP (both INR and transformer variants) \citep{franzese2023continuoustime}, FD2F \citep{dupont2022functa}, and $\infty$-DIFF \citep{bond-taylor2024inftydiff}. We specifically adopt the transformer architecture to demonstrate that FRF is capable of generating high-quality and visually compelling images on complex datasets.

Table~\ref{tab:celeba_results} shows that FRF with a transformer backbone outperforms other functional generative models in terms of both FID and FID-CLIP scores, while also being significantly more parameter-efficient than $\infty$-DIFF. Figure~\ref{fig:celeba_qualitative} further illustrates the visual quality of samples generated by FRF. The model successfully captures rich facial details and produces sharp, high-fidelity images.

\begin{table}[h]
\centering
\caption{Results on CelebA ($64 \times 64$) using transformer-based architectures.}
\label{tab:celeba_results}
\begin{tabular}{lccc}
\textbf{Method} & \textbf{FID} $(\downarrow)$ & \textbf{FID-CLIP} $(\downarrow)$ & \textbf{Params} \\
\hline
FRF (Vision Transformer, ours) & \textbf{6.63} & \textbf{3.70} & $\mathcal{O}(20\mathrm{M})$ \\
FDP (INR) & 35.00 & 12.44 & $\mathcal{O}(1\mathrm{M})$ \\
FDP (Vision Transformer) & 11.00 & 6.55 & $\mathcal{O}(20\mathrm{M})$ \\
FD2F & 40.40 & -- & $\mathcal{O}(10\mathrm{M})$ \\
$\infty$-DIFF & -- & 4.57 & $\mathcal{O}(100\mathrm{M})$ \\
\end{tabular}
\end{table}

\begin{figure}[htbp]
    \centering
    \includegraphics[width=0.8\textwidth]{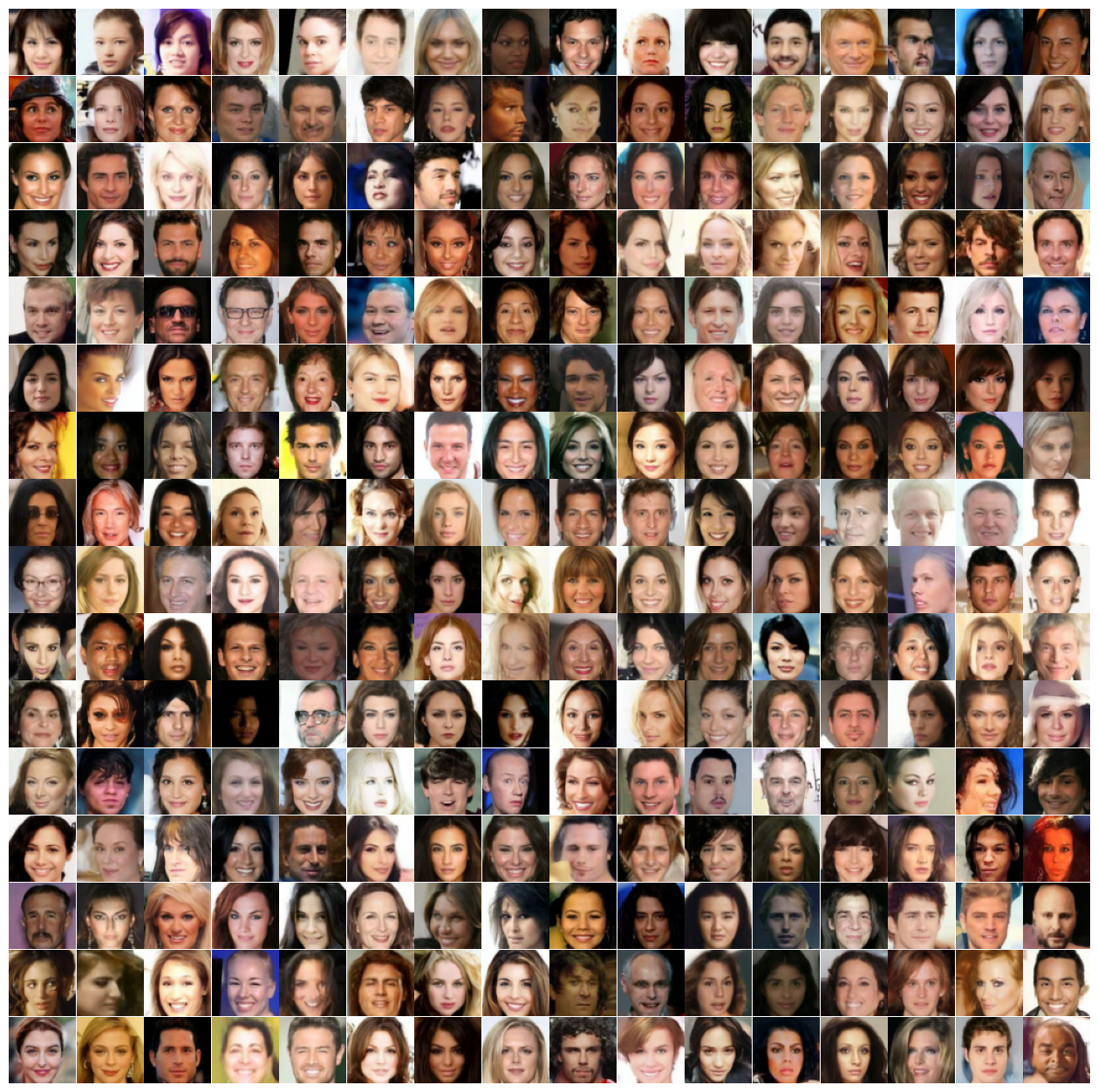} 
    \caption{Qualitative results of functional rectified flow with vision transformer.}
    \label{fig:celeba_qualitative} 
\end{figure}

\subsection{Neural operator on Navier-Stokes dataset}

We evaluate the proposed FRF model with a neural operator backbone on the Navier-Stokes dataset, a benchmark consisting of solutions to the Navier-Stokes equations on a 2D torus. This dataset captures complex fluid dynamics, making it a challenging testbed for functional generative models. To be consistent with prior work \citep{kerrigan2024functional}, we use a $\frac{1}{2}$-Matern kernel to sample the initial condition $X_0$. Our code is built upon the open-sourced codebase of \citet{kerrigan2024functional} (MIT license).

We compare our model against several baselines:  the Denoising Diffusion Operator (DDO) \citep{lim2023score} using the NCSN noise scale, GANO \citep{rahman2022generative},  Functional DDPM \citep{kerrigan23diffinf}, and Functional Flow Matching (FFM) \citep{kerrigan2024functional}. 
 
Following \citet{kerrigan2024functional}, we evaluate the model performance by \emph{Density MSE}, which is computed by applying pointwise Kernel Density Estimation (KDE) to 1,000 generated and 1,000 real samples, followed by computing the Mean Squared Error between the resulting estimated densities. This metric evaluates how well the generated samples reproduce the overall spatial distribution of the data.

As shown in Table~\ref{tab:pde_results}, the FRF model achieves the lowest density MSE among all methods, indicating that the generated samples accurately match the distribution of real samples in the spatial domain. 

\begin{table}[h]
\centering
\caption{MSEs between the density of the real and generated samples of the Navier-Stokes dataset.}
\label{tab:pde_results}
\begin{tabular}{lc}

\textbf{Method} & \textbf{Density MSE (Mean $\pm$ Std)} \\
\midrule
FRF  & \textbf{\boldmath $2.39 \times 10^{-5} \; \pm \; 4.45 \times 10^{-6}$} \\
FFM  & $4.50 \times 10^{-5} \; \pm \; 1.52 \times 10^{-5}$ \\
DDPM & $1.02 \times 10^{-4} \; \pm \; 8.20 \times 10^{-6}$ \\
DDO  & $9.61 \times 10^{-3} \; \pm \; 1.26 \times 10^{-2}$ \\
GANO & $4.16 \times 10^{-3} \; \pm \; 1.82 \times 10^{-3}$ \\

\end{tabular}

\end{table}

\section{Conclusion, limitations and broader impact}
\label{sec:summary}
We have introduced a functional extension of rectified flow by lifting it to general Hilbert spaces. Our theoretical results demonstrate that the marginal preserving property is preserved in the infinite-dimensional setting. This provides a principled and tractable foundation for functional generative modeling. We demonstrate experimentally that our method achieves superior performance compared to existing functional generative models. Notably, while most competing approaches develop specialized architectures tailored to their methods, we demonstrate the flexibility of our framework by applying it across three distinct model architectures—each originally designed for one of our competitors. Although our framework is broadly applicable, domain-specific architectures and inductive biases may still be required for optimal performance in high-complexity tasks. 

As with all generative models, risks such as misuse in synthetic media generation remain, including the potential for creating misleading or harmful content. We encourage responsible use and further research into interpretability, robustness, and safety in functional generative models to mitigate these concerns. At the same time, functional generative models offer significant positive societal impacts, including advancing scientific discovery through improved simulation of complex physical systems and supporting creative industries with new tools for high-quality content generation. 

\clearpage






\bibliography{refs}

\newpage
\appendix

\section{Proofs and remarks}
\label{suppsec:proof}
In this section, we present the proofs together with remarks on technical details.
\subsection{Remarks on the existence of the velocity field}
Under our setting, the existence of $v^{\mathbb{X}}$ follows from standard results.

\begin{prop}[Proposition 1.10 of \citet{Da_Prato_Zabczyk_1992}] 
    Assume that $E$ is a separable Banach space. Let $V$ be a Bochner integrable $E$-valued random variable defined on $(\Omega, \cS, \PP)$ and let $\mathcal{G}$ be a $\sigma$-field contained in $\cS$. There exists a unique, up to a set of $\PP$-probability zero, integrable $E$-valued random variable $R$, measurable with respect to $\mathcal{G}$ such that
    \[
    \int_A V \, d\PP = \int_A R \, d\PP, \quad \forall A \in \mathcal{G}.
    \]
    The random variable $R$ is denoted as $\EE[V \mid \mathcal{G}]$ and called the conditional expectation of $V$ given $\mathcal{G}$.
\end{prop}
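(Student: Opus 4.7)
The plan is to prove this by the standard two-step strategy: first handle simple $E$-valued random variables where the conditional expectation can be defined componentwise using the scalar-valued theory, then pass to general Bochner integrable $V$ by $L^1$-approximation, exploiting the fact that conditional expectation is a contraction in $L^1(\Omega;E)$. Uniqueness will be handled separately using a countable separating family of functionals coming from the separability of $E$.

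For uniqueness, suppose $R_1,R_2$ both satisfy the defining identity; then $D := R_1 - R_2$ is $\mathcal{G}$-measurable, Bochner integrable, and satisfies $\int_A D\,d\PP = 0$ for all $A\in\mathcal{G}$. For any $\varphi \in E^*$, commuting $\varphi$ with the Bochner integral gives $\int_A \varphi(D)\,d\PP = 0$ for all $A\in\mathcal{G}$, so $\varphi(D)=0$ $\PP$-a.s. Because $E$ is separable, $D$ takes values a.s.\ in a separable subspace, and one can find a countable family $\{\varphi_n\}\subset E^*$ which is norming on that subspace; intersecting the null sets then yields $D=0$ $\PP$-a.s.

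For existence, I would first define $R$ for simple functions $V = \sum_{i=1}^n x_i\, \mathbf{1}_{A_i}$ (with $x_i\in E$ and $A_i\in\cS$) by setting $R := \sum_{i=1}^n x_i\, \EE[\mathbf{1}_{A_i}\mid\mathcal{G}]$, using scalar conditional expectations. Verifying the identity $\int_A V\,d\PP = \int_A R\,d\PP$ on $A\in\mathcal{G}$ is immediate from linearity. The crucial observation is the pointwise bound
\begin{equation*}
\norm{R(\omega)}_E \le \sum_i \norm{x_i}_E\, \EE[\mathbf{1}_{A_i}\mid\mathcal{G}](\omega) = \EE[\norm{V}_E \mid \mathcal{G}](\omega),
\end{equation*}
which, after taking expectations, yields the $L^1$-contraction $\EE\,\norm{R}_E \le \EE\,\norm{V}_E$. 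Now for arbitrary Bochner integrable $V$, by definition there exist simple functions $V_n$ with $\EE\,\norm{V_n - V}_E \to 0$; their conditional expectations $R_n$ are therefore Cauchy in $L^1(\Omega;E)$ and, by completeness of this Bochner space (which holds because $E$ is Banach), converge to some $R$. Passing to the limit in $\int_A V_n\,d\PP = \int_A R_n\,d\PP$ using continuity of the Bochner integral in $L^1$-norm gives the required identity, and $R$ inherits $\mathcal{G}$-measurability as an a.s.\ limit of $\mathcal{G}$-measurable maps into a separable space.

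The main technical obstacle I anticipate is the uniqueness step in the absence of separability of $E^*$: one cannot simply use a countable dense subset of $E^*$. The fix is precisely to exploit that Bochner integrable random variables are essentially separably valued, so one only needs a countable norming family on a separable subspace, which exists by Hahn--Banach plus separability. The other minor point requiring care is verifying that the constructed $R$ is genuinely $\mathcal{G}$-measurable in the strong (Bochner) sense; this follows since each $R_n$ is a finite $E$-linear combination of $\mathcal{G}$-measurable scalar functions, and strong measurability is preserved under a.s.\ limits into a separable Banach space.
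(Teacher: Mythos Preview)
The paper does not supply its own proof of this proposition: it is quoted verbatim as Proposition~1.10 of \citet{Da_Prato_Zabczyk_1992} and used as a black box to justify existence of the expected velocity field. Your argument is the standard one (and essentially the one in the cited reference): reduce uniqueness to the scalar case via a countable norming family available from separability, construct the conditional expectation on simple functions, establish the $L^1$-contraction, and pass to the limit. One small point worth tightening: the pointwise identity $\sum_i \norm{x_i}_E\,\EE[\mathbf{1}_{A_i}\mid\mathcal{G}] = \EE[\norm{V}_E\mid\mathcal{G}]$ requires the $A_i$ to be pairwise disjoint (so that $\norm{V}_E = \sum_i \norm{x_i}_E\,\mathbf{1}_{A_i}$); this is harmless since any simple function admits such a representation, but it should be stated.
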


In our case, the state space $\mathcal{H}$ is a separable Banach space. Hence, it suffices to assume $\mathbb{E}\| \dot X_t \| < \infty$ for integrability. Under this condition, $\mathbb{E}[\dot X_t \mid \sigma(X_t)]$ exists and is $\sigma(X_t)$-measurable. By the Doob–Dynkin lemma, there exists a measurable function $v_t:\mathcal{H}\to\mathcal{H}$ such that
\[
\mathbb{E}[\dot X_t \mid \sigma(X_t)] = v_t(X_t).
\]
Defining $v^{\mathbb{X}}(t,x) = v_t(x)$ for $x \in \mathrm{supp}(X_t)$ and $v^{\mathbb{X}}(t,x) = 0$ otherwise (which affects only a set of measure zero) yields the desired vector field. For completeness, we recall the Doob–Dynkin lemma:
\begin{lem}[Lemma 1.14 of \citet{Kallenberg2021}]
    Let $f,g$ be measurable functions from $(\Omega,\mathcal{A})$ into measurable spaces $(S,\mathcal{S})$ and $(T,\mathcal{T})$, where $S$ is Borel. The following are equivalent:
    \begin{enumerate}
        \item $f$ is $g$-measurable, i.e. $\sigma(f)\subset \sigma(g)$,
        \item there exists a measurable mapping $h:T\to S$ such that $f = h\circ g$.
    \end{enumerate}
\end{lem}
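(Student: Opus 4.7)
The plan is to prove the two implications separately, concentrating the real work in the forward direction where the standard-Borel hypothesis on $S$ is essential. The implication (2) $\Rightarrow$ (1) is immediate: if $f = h \circ g$ with $h$ measurable, then for every $B \in \cS$ we have $f^{-1}(B) = g^{-1}(h^{-1}(B)) \in \sigma(g)$, so $\sigma(f) \subset \sigma(g)$.

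For (1) $\Rightarrow$ (2), the first step is a reduction to a real-valued target. Since $(S,\cS)$ is Borel, there is a measurable isomorphism $\phi: S \to S_0$ onto some Borel set $S_0 \subset \RR$. The composition $\tilde f := \phi \circ f$ is then $\sigma(g)$-measurable and $S_0$-valued. If I can produce a measurable $\tilde h: T \to \RR$ with $\tilde f = \tilde h \circ g$, I will overwrite $\tilde h$ on $T \setminus \tilde h^{-1}(S_0)$ by a fixed point $s_0 \in S_0$ (say $s_0 = \phi(f(\omega_0))$ for any $\omega_0$); the overwritten map remains measurable and still satisfies $\tilde h \circ g = \tilde f$, because $\tilde h(g(\omega)) = \tilde f(\omega) \in S_0$ for every $\omega$. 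Then $h := \phi^{-1} \circ \tilde h: T \to S$ is the desired factorization.

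With $f$ now real-valued and $\sigma(g)$-measurable, I would run the standard monotone-class argument in three stages. For indicators $f = \mathbf{1}_A$ with $A \in \sigma(g)$, the identity $A = g^{-1}(B)$ for some $B \in \cT$ lets me take $h = \mathbf{1}_B$. Extending by linearity yields the factorization for all $\sigma(g)$-measurable simple functions. For general nonnegative measurable $f$, choose simple approximants $f_n \uparrow f$ with representations $f_n = h_n \circ g$ and set $h := \limsup_n h_n$ with the convention $h(t) = 0$ wherever that limsup is infinite. By construction $h$ is measurable on $(T,\cT)$, and for each $\omega$, $h(g(\omega)) = \lim_n h_n(g(\omega)) = \lim_n f_n(\omega) = f(\omega)$ since $f(\omega)$ is finite. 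A positive/negative decomposition handles general $f$.

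The main obstacle is the bookkeeping surrounding the image $g(\Omega)$: the identity $f = h \circ g$ only constrains $h$ on this image, which need not be measurable in $T$, yet $h$ must be a globally measurable function on $T$. Using a $\limsup$ rather than a pointwise limit, and assigning default values outside the ``controlled'' region, sidesteps this without any regularity assumption on $g(\Omega)$. The Borel hypothesis on $S$ is not cosmetic: it is what allows Step~1, and without it one can construct examples where $\sigma(f) \subset \sigma(g)$ admits no measurable factorization, so this is the step where the assumption genuinely pays off.
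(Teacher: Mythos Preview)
Your argument is correct and follows the standard route to the Doob--Dynkin lemma: the immediate backward implication, the Borel-isomorphism reduction to a real-valued target, and the monotone-class buildup from indicators through simple functions to general measurable functions via a $\limsup$ with a default value off the controlled region. The handling of the non-measurable image $g(\Omega)$ and the overwriting step to force $\tilde h$ into $S_0$ are both sound.

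There is nothing to compare against in the paper: the lemma is quoted verbatim from Kallenberg (Lemma~1.14) and invoked without proof, serving only to justify that the conditional expectation $\EE[\dot X_t \mid \sigma(X_t)]$ can be written as $v_t(X_t)$ for some measurable $v_t$. Your proof would be a complete substitute for the citation.
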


This shows that the velocity field $v^{\mathbb{X}}$ is well-defined under mild integrability conditions.

\subsection{Definitions and technical lemma}
In this section, we present definitions and technical lemmas for the invocation of the superposition principle in Hilbert space. 
We first review Fréchet differentiability on a normed space.

\begin{defi}
    Let $E$ and $F$ be normed vector spaces, $O$ an open subset of $E$  containing $0$. The derivative of $f: E \to F$ at $x \in O$ is a continuous linear map $D_x f$ from $E$ to $F$ such that
    \begin{equation}
        \lim_{h \to 0} \frac{f(x+h) - f(x) - D_x f(h)}{\norm{h}} = 0
    \end{equation}
\end{defi}

When $E=\cH$ and $F=\RR$, $D_x f \in \cH^*$ where $ \cH^*$ is the dual space of $\cH$. By the Riesz representation theorem, there is a unique $g \in \cH$ such that $D_x f (\cdot) = \innerprod{\cdot, g}$, and we denote such $g$ by $\nabla f(x)$. We define the second-order derivative $\nabla^2 f(x)$ as the derivative of $x \in \cH \to \nabla f(x) \in \cH$.

\begin{defi}
    Let $\cH$ be a separable Hilbert space with orthonormal basis $\{e_i\}$ and $\varphi: \cH \to \RR$. Define its asymptotic Lipschitz constant $L_{\varphi}(x)$ at $x\in\cH$ by

    $$L_{\varphi}(x) = \lim_{\epsilon\to 0^+} L_{\varphi}(x, \epsilon),$$
    where 
    $$L_{\varphi}(x, \epsilon) = \sup_{u, v \in B_{\epsilon}(x)} \frac{\abs{\varphi(u)-\varphi(v)}}{\norm{u-v}}.$$
\end{defi}

Notice that $L_{\varphi}$ is upper semicontinuous \citep{STEPANOV2017}.

\begin{lem}\label{lem:lipgradient}
If $\varphi: \cH \to \RR$ is a twice-differentiable function with bounded second-order derivative, then $L_{\varphi}(x)=\norm{\nabla \varphi(x)}$.
\end{lem}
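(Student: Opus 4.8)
\textbf{Proof proposal for Lemma~\ref{lem:lipgradient}.}

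The plan is to prove the two inequalities $L_{\varphi}(x) \leq \norm{\nabla\varphi(x)}$ and $L_{\varphi}(x) \geq \norm{\nabla\varphi(x)}$ separately. For the first, I would use the mean value / fundamental theorem of calculus along line segments. Fix $x$ and let $M = \sup_{y}\norm{\nabla^2\varphi(y)}$ be the bound on the second derivative (interpreted as operator norm). For $u,v \in B_{\epsilon}(x)$, write $\varphi(u)-\varphi(v) = \int_0^1 \innerprod{\nabla\varphi(v + s(u-v)), u-v}\, ds$, which is valid because $\varphi$ is $C^1$ (Fréchet differentiability plus the Riesz identification gives the gradient, and $t \mapsto \varphi(v+s(u-v))$ is differentiable with the stated derivative). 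Then
\begin{equation}
\abs{\varphi(u)-\varphi(v)} \leq \sup_{s\in[0,1]}\norm{\nabla\varphi(v+s(u-v))}\cdot\norm{u-v},
\end{equation}
and since each point $v+s(u-v)$ lies in $B_{\epsilon}(x)$, and $\norm{\nabla\varphi(\cdot)}$ varies by at most $2\epsilon M$ over that ball (because $\nabla\varphi$ is $M$-Lipschitz by the bounded-second-derivative hypothesis), we get $L_{\varphi}(x,\epsilon) \leq \norm{\nabla\varphi(x)} + 2\epsilon M$. Letting $\epsilon \to 0^+$ yields $L_{\varphi}(x) \leq \norm{\nabla\varphi(x)}$.

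For the reverse inequality, I would test the difference quotient along the direction of steepest ascent. If $\nabla\varphi(x) = 0$ the inequality is trivial, so assume $\nabla\varphi(x)\neq 0$ and set $g = \nabla\varphi(x)/\norm{\nabla\varphi(x)}$. For small $\delta > 0$, take $u = x + \delta g$ and $v = x - \delta g$, both in $B_{2\delta}(x)$. By Fréchet differentiability, $\varphi(x\pm\delta g) = \varphi(x) \pm \delta\innerprod{g,\nabla\varphi(x)} + o(\delta) = \varphi(x) \pm \delta\norm{\nabla\varphi(x)} + o(\delta)$, so $\abs{\varphi(u)-\varphi(v)} = 2\delta\norm{\nabla\varphi(x)} + o(\delta)$ while $\norm{u-v} = 2\delta$. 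Hence $L_{\varphi}(x,2\delta) \geq \norm{\nabla\varphi(x)} + o(1)$, and letting $\delta\to 0^+$ gives $L_{\varphi}(x) \geq \norm{\nabla\varphi(x)}$. Combining the two bounds completes the proof.

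I do not anticipate a serious obstacle here; the main thing to be careful about is justifying the fundamental theorem of calculus along segments in the Hilbert-space setting (this is standard for $C^1$ Fréchet-differentiable maps, since the composition with the affine path is differentiable in one real variable and the derivative is continuous), and making sure the ball over which the supremum in $L_{\varphi}(x,\epsilon)$ is taken is genuinely controlled — I use $B_{\epsilon}(x)$ for both $u$ and $v$ and note the whole segment between them stays inside $B_{\epsilon}(x)$ by convexity. The bounded-second-derivative hypothesis is used only to get the (global, or at least local) Lipschitz continuity of $\nabla\varphi$, which both pins down the $\epsilon\to 0$ limit in the upper bound and guarantees that the limit defining $L_{\varphi}(x)$ exists rather than merely a limsup.
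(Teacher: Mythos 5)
Your proposal is correct and takes essentially the same two‑inequality approach as the paper: the upper bound comes from a mean‑value estimate along segments together with the Lipschitz continuity of $\nabla\varphi$ (which follows from the bounded second derivative), and the lower bound comes from testing the difference quotient along the steepest‑ascent direction and invoking Fréchet differentiability at $x$. The only cosmetic differences are that you use the integral form of the mean value theorem and a symmetric two‑sided perturbation $x \pm \delta g$, whereas the paper uses the pointwise mean value theorem twice and a one‑sided perturbation; neither difference affects the substance.
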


\begin{proof}
    Fix $\epsilon > 0$ and $x\in \cH$. We first show $L_{\varphi}(x) \leq \norm{\nabla \varphi(x)}$.

    Let $a, b \in B_{\epsilon}(x)$ and $b\neq a$. By the mean value theorem (Theorem 3.2 of \citet{coleman2012calcnorm}),
    \begin{align}
        \varphi(b) - \varphi(a) 
        &= \innerprod{\nabla \varphi (c), b-a} \\
        &= \innerprod{\nabla \varphi (x), b-a} + \innerprod{\nabla \varphi (c)-\nabla \varphi (x), b-a}
    \end{align}
    where $c = ta + (1-t)b$ for some $t\in [0, 1]$. Hence,

    \begin{align}
        \abs{\varphi(b) - \varphi(a) } 
        &\leq \norm{\nabla \varphi (x)}\norm{b-a} + \norm{\nabla \varphi (c)-\nabla \varphi (x)}\norm{b-a} \\
        &\leq \norm{\nabla \varphi (x)}\norm{b-a} + \sup_{c'\in\cbs{t'x+(1-t')c: t' \in [0,1]}} \norm{\nabla^2\varphi(c')}_{\cH, \cH}\norm{b-a}\norm{c'-x}
    \end{align}
    We obtain the second inequality by applying the mean value inequality (Corollary 3.2 of \citet{coleman2012calcnorm}) to $\nabla \varphi $. Suppose $\nabla^2\varphi$ is bounded by a constant $B_0$. Note that both $c, c' \in B_{\epsilon}(x)$. Hence,
    \begin{align}
        \frac{\abs{\varphi(b) - \varphi(a) } }{\norm{b-a}} 
        & \leq \norm{\nabla \varphi (x)} + B_0\epsilon
    \end{align}
    The choice of $a, b$ is arbitrary, so $L_{\varphi}(x, \epsilon) \leq \norm{\nabla \varphi (x)} + B_0\epsilon$ and $L_{\varphi}(x) \leq \norm{\nabla \varphi (x)}$.

    Now it remains to show $L_{\varphi}(x) \geq \norm{\nabla \varphi(x)}$. WOLOG, we assume $\nabla \varphi(x) \neq 0$ and let $v = \frac{\nabla \varphi(x)}{\norm{\nabla \varphi(x)}}$. Let $y_n = x +\frac{1}{n}v, z_n=x, n \in \NN$.
    
    Fix $\alpha > 0$. Then $\exists N\in\NN$ such that $\forall n \geq N$, $y_n \in B_{\epsilon}(x)$ and by definition of Fréchet differentiability

    $$\abs{\varphi(x+\frac{1}{n}v) - \varphi(x) - \innerprod{\nabla\varphi(x), \frac{1}{n}v}} \leq \alpha \norm{\frac{v}{n}}=\frac{\alpha}{n}.$$

    Hence, $\varphi(x+\frac{1}{n}v) - \varphi(x) \geq \innerprod{\nabla\varphi(x), \frac{1}{n}v} - \frac{\alpha}{n} = \frac{1}{n}\norm{\nabla \varphi(x)} - \frac{\alpha}{n}$.

    For $y_n \in B_{\epsilon}(x)$,
    \begin{align}
        L_{\varphi}(x, \epsilon) 
        &\geq \frac{\abs{\varphi(y_n) - \varphi(z_n)}}{\norm{y_n - z_n}} \\
        &= \frac{\abs{\varphi(x+\frac{1}{n}v) - \varphi(x)}}{\frac{1}{n}} \\
        &\geq \frac{{\varphi(x+\frac{1}{n}v) - \varphi(x)}}{\frac{1}{n}} \\
        &\geq \norm{\nabla \varphi(x)} - \alpha.
    \end{align}
    Since $\alpha$ is arbitrary, $L_{\varphi }(x, \epsilon) \geq \norm{\nabla \varphi(x)}$. Therefore, $L_{\varphi }(x) = \norm{\nabla \varphi(x)}$.
\end{proof}

\begin{defi}
    Let $\cA$ be the cylindrical functions on $\cH $, $i.e.$, 
    \begin{equation}
        \cA = \cbs{\varphi \in \cH \to \RR: \varphi = \phi\circ P_d, d\text{ is a non-negative integer}, \phi\in C^{\infty}(\RR^d) \text{ has compact support}}
    \end{equation}
    where
    $P_d(x) = (\innerprod{x, e_1}, \dots, \innerprod{x, e_d})$ is a finite dimensional projection and $C^{\infty}(\RR^d)$ is the set of compactly supported function from $\RR^d$ to $\RR$.
\end{defi}

$\cA$ is an algebra ($i.e.$, a set closed under pointwise linear combinations and products), and a constant function is trivially an element in $\cA$ by letting $d=0$. By Lemma \ref{lem:lipgradient} $L_{\varphi}(x)=\norm{\nabla \varphi(x)}, \forall \varphi \in \cA$. Let $v_t: \cH \to \cH$, for $t \in [0, 1]$ be a vector field. $v_t$ defines an operator $V_t: \cA \to \RR$ acting on elements in $\cA$, $i.e.$, for $\varphi \in \cA$, $(V_t \varphi)(x) = \innerprod{\nabla \varphi(x), v_t(x)}$. Hence, $V_t$ satisfies $V_t(fg) = fV_t(g) + gV_t(f)$. Also, $\abs{(V_t \varphi)(x)} = \abs{\innerprod{\nabla \varphi(x), v_t(x)}} \leq \norm{v_t(x)}L_{\varphi}(x)$. 

We have to show one more lemma to invoke the results of \citet{STEPANOV2017}. 

\begin{lem} \label{lem:main}
    Let $\varphi$ be a Lipschitz, bounded function on $\cH \times [0, 1]$, with metric $((x, t), (y,s)) \to (\norm{x-y}^2+ \abs{t-s}^2)^{\frac{1}{2}}$. Then there is a sequence of uniformly bounded functions $\cbs{\varphi_k}$ on $\cH \times [0, 1]$ that are also Lipschitz with uniform Lipschitz constants, such that $\varphi_k(\cdot, t) \in \cA$ for $t \in [0, 1]$, $\lim_k \varphi_k = \varphi$ pointwisely, and 
    \begin{equation}
        \limsup_k L_{\varphi_k(\cdot, t)}(x) \leq L_{\varphi(\cdot, t)}(x), \forall (x, t) \in \cH \times [0 ,1].
    \end{equation}
\end{lem}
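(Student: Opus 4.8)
The plan is to construct the approximating sequence $\{\varphi_k\}$ by first discretizing in the Hilbert-space variable $x$ via the finite-dimensional projections $P_d$, then mollifying, and finally truncating with a smooth cutoff so that the result lands in $\cA$ for each fixed $t$. Concretely, for $k\in\NN$ I would set $\varphi^{(1)}_k(x,t)=\varphi(P_k x, t)$, where $P_k x = \sum_{i=1}^k \langle x, e_i\rangle e_i$ is viewed as an element of $\cH$; since $\|P_k x - P_k y\|\le\|x-y\|$, this is Lipschitz in $(x,t)$ with the same constant as $\varphi$, it is uniformly bounded by $\|\varphi\|_\infty$, and $\varphi^{(1)}_k(\cdot,t)$ depends only on the first $k$ coordinates. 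Because $P_k x \to x$ for every $x\in\cH$ (using that $\{e_i\}$ is an orthonormal basis) and $\varphi$ is continuous, $\varphi^{(1)}_k\to\varphi$ pointwise. The remaining issue is that $\varphi^{(1)}_k(\cdot,t)$, although a function of $(\langle x,e_1\rangle,\dots,\langle x,e_k\rangle)$, need be neither smooth nor compactly supported in those coordinates.

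To fix smoothness, mollify in the $k$ finite coordinates only: let $\rho_\delta$ be a standard mollifier on $\RR^k$ and replace the profile $\phi_k(\xi,t):=\varphi(\sum_{i\le k}\xi_i e_i, t)$ by its convolution $\phi_k*\rho_\delta$ in $\xi$. This preserves the uniform bound and the Lipschitz constant (convolution with a probability density is $1$-Lipschitz in $L^\infty$ operator norm and commutes with difference quotients), produces a $C^\infty$ profile, and converges back to $\phi_k$ locally uniformly as $\delta\to0$. To fix compact support, multiply by $\chi(\xi/R)$ for a fixed smooth bump $\chi$ with $\chi\equiv1$ on the unit ball; the product $\chi(\cdot/R)(\phi_k*\rho_\delta)$ is compactly supported, still uniformly bounded, and for each fixed point converges to $\phi_k*\rho_\delta$ as $R\to\infty$. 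The Lipschitz constant of the truncated function is bounded by $\mathrm{Lip}(\phi_k)+\|\phi_k\|_\infty\,R^{-1}\mathrm{Lip}(\chi)$, which stays uniformly bounded. Then a diagonal argument over the three parameters $(k,\delta(k),R(k))$ — choosing $\delta(k)\to0$ slowly and $R(k)\to\infty$ slowly relative to the modulus of continuity of $\varphi$ — yields a single sequence $\varphi_k$ with $\varphi_k(\cdot,t)\in\cA$, uniformly bounded, uniformly Lipschitz, and converging pointwise to $\varphi$.

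For the asymptotic-Lipschitz inequality $\limsup_k L_{\varphi_k(\cdot,t)}(x)\le L_{\varphi(\cdot,t)}(x)$, I would argue locally: near a fixed $(x,t)$, once $R(k)$ is large enough the cutoff factor equals $1$ on $B_\epsilon(P_{k}x)$-relevant regions and contributes nothing, so on a neighborhood $\varphi_k(\cdot,t)$ agrees with $(\phi_k*\rho_{\delta(k)})\circ P_k$. For any $u,v$ near $x$ we have $|\varphi_k(u,t)-\varphi_k(v,t)| = |(\phi_k*\rho_{\delta(k)})(P_ku)-(\phi_k*\rho_{\delta(k)})(P_kv)|$, which by averaging is at most $\sup_{|w|\le\delta(k)} |\phi_k(P_ku+w,t)-\phi_k(P_kv+w,t)| = \sup_{|w|\le\delta(k)}|\varphi(P_ku+\iota w,t)-\varphi(P_kv+\iota w,t)|$ where $\iota$ embeds $\RR^k\hookrightarrow\cH$; dividing by $\|u-v\|\ge\|P_ku-P_kv\|$ and sending the local radius and then $\delta(k)\to0$, $k\to\infty$ recovers an upper bound by $L_{\varphi(\cdot,t)}(x)$, using upper semicontinuity of $L_{\varphi(\cdot,t)}$ and $P_kx\to x$. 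The main obstacle is bookkeeping the three interacting limits so that all four conclusions hold simultaneously for the \emph{same} sequence — in particular making the truncation radius $R(k)$ grow fast enough that it is locally invisible (for the $L$-inequality and for pointwise convergence at every fixed point) while the mollification width $\delta(k)$ shrinks fast enough for pointwise convergence but the Lipschitz constants stay uniformly controlled; once the diagonal schedule is pinned down, each individual estimate is routine.
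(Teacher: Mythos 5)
Your construction is essentially the paper's: both project onto the first $k$ coordinates (your $\phi_k = $ the paper's $\psi_k$), mollify the finite-dimensional profile with a shrinking-width standard mollifier (preserving boundedness and the Lipschitz constant), then cut off by a smooth bump at a radius tending to infinity, and verify that the cutoff contributes an $O(1/R_k)$ perturbation to the Lipschitz bound while the projection/ball inclusions plus upper semicontinuity of $L_{\varphi(\cdot,t)}$ give the asymptotic-Lipschitz inequality. The only difference is that the paper commits to the explicit diagonal schedule $\epsilon_k = 1/(C_\varphi k)$ and cutoff radius $\sim k$ rather than leaving the schedule abstract, but this is purely a presentational choice and none of the four limits actually conflict.
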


\begin{proof}

    Let $\{e_k \}$ be an orthonormal basis of $\cH$. Let $C_{\varphi}$ be the Lipschitz constant of $\varphi$.

    For an integer $k > 0$, recall that $P_k: \cH \to \RR^k$, $P_k(x) = (\innerprod{x, e_1}, \innerprod{x, e_2}, \dots, \innerprod{x, e_k})$. Define $\pi_k(x): \cH \to \cH$, $\pi_k(x) =  \sum_{i=1}^k \innerprod{x, e_i} e_i$ and $\psi_k: \RR^k \times [0, 1] \to \RR$, $\psi_k(r, t) = \varphi(\sum_{i=1}^k r_i e_i, t)$.

    Note that $\norm{P_k (x) - P_k(y)}_{\RR^k} = \norm{\pi_k(x)-  \pi_k(y)}_{\cH} \leq \norm{x - y}_{\cH}$, and $\psi_k(P_k(x), t) = \varphi(\pi_k(x), t)$.

    Also note that 
    \begin{align}
        \abs{\psi_k(r, t) - \psi_k(\Bar{r}, s)} 
        &\leq C_{\varphi} \prs{\norm{\sum_{i=1}^k r_i e_i - \sum_{i=1}^k \Bar{r}_i e_i }_{\cH}^2 + (t-s)^2}^{1/2} \\
        &=  C_{\varphi} \prs{\norm{ r - \Bar{r}}^2_{\RR^k} + (t-s)^2}^{1/2}.
    \end{align}

    So $\psi_k$ is also $C_{\varphi}$-Lipschitz.

    Fix an integer $k > 0$. We first approximate $\psi_k$ using a standard mollification approach. Following Appendix C.5 of \citet{evans2010pde}, we define the standard mollifier function for $\epsilon > 0$:
    \begin{equation}
        \eta_{\epsilon}: \RR^k \to \RR, \eta_{\epsilon}(r) = \frac{1}{\epsilon^k} \eta(\frac{r}{\epsilon}),
    \end{equation}
    where $\eta: \RR^k \to \RR \in C^{\infty}(\RR^k)$, 
    \begin{align}
    &\eta(r):= \begin{cases}C \exp \left(\frac{1}{|r|^2-1}\right) & \text { if }\norm{r}_{\RR^k}<1 \\ 0 & \text { if }\norm{r}_{\RR^k} \geq 1\end{cases},
    \end{align}
    and the constant $C>0$ is selected so that $\int_{\mathbb{R}^k} \eta=1$. Let $B^k_{\epsilon}(r)$ be the $\epsilon$-ball in $\RR^k$ around $r$ and note that $\eta_{\epsilon}$ is supported on $B^k_{\epsilon}(0)$ and $\int_{\mathbb{R}^k} \eta_{\epsilon}=1$.

    Define the mollification of $\psi_k$ as $h_{k,\epsilon} (r, t) \coloneq \int_{\RR^k} \eta_{\epsilon}(y) \psi_k(r-y,t) dy$. Theorem 7 from Appendix C.5 of \citet{evans2010pde} asserts that $h_{k,\epsilon} \in C^{\infty}(\RR^k)$. Note that
    \begin{align}
        \abs{h_{k,\epsilon} (r, t) - h_{k,\epsilon} (\Bar{r}, s)} 
        &= \abs{ \int_{\RR^k} \eta_{\epsilon}(y) \prs{\psi_k(r-y,t) - \psi_k(\Bar{r}-y, s)} dy} \\
        & \leq \int_{\RR^k} \eta_{\epsilon}(y) \abs{\psi_k(r-y,t) - \psi_k(\Bar{r}-y, s)} dy \\
        & \leq \int_{\RR^k} \eta_{\epsilon}(y) C_{\varphi} \prs{\norm{r-\Bar{r}}_{\RR^k}^2 + (t-s)^2}^{\frac{1}{2}} dy \\
        & = C_{\varphi} \prs{\norm{r-\Bar{r}}_{\RR^k}^2 + (t-s)^2}^{\frac{1}{2}},
    \end{align}
    and
    \begin{align}
        \abs{h_{k,\epsilon} (r, t) - \psi_k (r, t)} 
        &= \abs{ \int_{\RR^k} \eta_{\epsilon}(y) (\psi_k(r-y,t) - \psi_k (r, t)) dy} \\
        &\leq  \int_{B^k_{\epsilon}(0)} \eta_{\epsilon}(y) C_{\varphi} \norm{y}_{\RR^k} dy \\
        &\leq C_{\varphi} \epsilon.
    \end{align}
    Therefore, $h_{k,\epsilon}$ is also $C_{\varphi}$-Lipschitz and uniformly converges to $\psi_k$ at a rate $C_{\varphi} \epsilon$ when $\epsilon \to 0$.

    We can then extract a sequence of smooth functions $\psi_{k, n}$ approximating $\psi_k$ by letting $\epsilon_n = \frac{1}{C_{\varphi} n}$ and $\psi_{k, n} = h_{k, \epsilon_n}$. Note that $\psi_{k, n} \in C^{\infty}(\RR^k)$ is $C_{\varphi}$-Lipschitz and $\norm{\psi_{k, n} - \psi_k}_{\infty} \leq \frac{1}{n}$.

    Let $\chi: \RR \to \RR \in C^{\infty}(\RR)$ be a smooth cutoff function such that $\chi(s) = 1$ on $s\in [-1 ,1]$, $\chi(s) \in [0, 1]$ on $s \in (-4, -1) \cup (1, 4)$, and $\chi(s) = 0$ on $(-\infty, -4] \cup [4, \infty)$. Note that $\chi$ and its derivatives of any order are supported on $[-4, 4]$.

    We are now ready to define the sequence $\varphi_k:\cH \times [0 ,1] \to \RR$ by
    \begin{equation}
        \varphi_k(x, t) = \chi \prs{\frac{\norm{P_k(x)}_{\RR^k}^2}{k^2}} \psi_{k,k}(P_k(x), t).
    \end{equation}
    Hence, $\varphi_k (\cdot, t) \in \cA$ since $u \to \chi\prs{\frac{\norm{u}^2_{\RR^k}}{k^2}}$and $\psi_{k, k}$ are smooth, $u \to \chi\prs{\frac{\norm{u}^2_{\RR^k}}{k^2}}$ is compactly supported, and $\varphi_k$ depends on $x$ only through $P_k(x)$. It remains to verify the following assertions:

    \begin{enumerate}
        \item $\varphi_k$ is uniformly bounded.
        \item $\varphi_k$ is uniformly Lipschitz.
        \item $\varphi_k$ converges to $\varphi$ pointwisely.
        \item $\limsup_k L_{\varphi_k(\cdot, t)}(x) \leq L_{\varphi(\cdot, t)}(x), \forall (x, t) \in \cH \times [0 ,1]$.
    \end{enumerate}

    \textbf{1. }
    \begin{equation}
        \abs{\varphi_k (x, t)} \leq \abs{\psi_{k, k} (P_k(x) ,t)} \leq \abs{\psi_{k} (P_k(x) ,t)} + \frac{1}{k} \leq \norm{\varphi}_{\infty} + 1
    \end{equation}
    Given the boundedness of $\varphi$, $\varphi_k (x, t)$ is uniformly bounded.

    \textbf{2. }
    Note that 
    \begin{equation}
        \forall u \in \RR^d, \nabla_u \chi \prs{\frac{\norm{u}^2_{\RR^k}}{k^2}} = \dot{\chi} \prs{\frac{\norm{u}^2_{\RR^k}}{k^2}} \frac{2u}{k^2},
    \end{equation}
    where $\dot{\chi}$ denotes the derivative of $\chi$, and that $\nabla_u \chi \prs{\frac{\norm{u}^2_{\RR^k}}{k^2}} = 0$ if $\norm{u}_{\RR^k}^2 > 4k^2$. When $\norm{u}_{\RR^k}^2 \leq 4k^2$, $\norm{\nabla_u \chi \prs{\frac{\norm{u}^2_{\RR^k}}{k^2}}}_{\RR^k} \leq \norm{\dot{\chi}}_{\infty} \frac{2}{k^2} \norm{u}_{\RR^k} \leq \norm{\dot{\chi}}_{\infty}\frac{4}{k}$. Therefore,

    \begin{align}
        \abs{\varphi_k(x, t) - \varphi_k(y, s)} =& \abs{\chi \prs{\frac{\norm{P_k(x)}_{\RR^k}^2}{k^2}} \psi_{k,k}(P_k(x), t) - \chi \prs{\frac{\norm{P_k(y)}_{\RR^k}^2}{k^2}} \psi_{k,k}(P_k(y), s)}\\
        \leq & \abs{\chi \prs{\frac{\norm{P_k(x)}_{\RR^k}^2}{k^2}}} \abs{ \psi_{k,k}(P_k(x), t) -  \psi_{k,k}(P_k(y), s)} \\ 
        &+ \abs{\chi \prs{\frac{\norm{P_k(x)}_{\RR^k}^2}{k^2}} - \chi \prs{\frac{\norm{P_k(y)}_{\RR^k}^2}{k^2}}} \abs{ \psi_{k,k}(P_k(y), s)}\\
        \leq & C_{\varphi} \prs{\norm{P_k(x) - P_k(y)}_{\RR^k}^2 + \abs{t-s}^2}^{\frac{1}{2}} \\ 
        &+ \sup_{u \in \RR^k}\norm{\nabla_u \chi \prs{\frac{\norm{u}^2_{\RR^k}}{k^2}}}_{\RR^k} \abs{ \psi_{k,k}(P_k(y), s)} \norm{P_k(x) - P_k(y)}_{\RR^k} \label{ineq:mvtRk} \\
        \leq & C_{\varphi} \prs{\norm{P_k(x) - P_k(y)}_{\RR^k}^2 + \abs{t-s}^2}^{\frac{1}{2}} \\
        &+ \norm{\dot{\chi}}_{\infty}\frac{4}{k} \norm{P_k(x) - P_k(y)}_{\RR^k} \prs{\norm{\varphi}_{\infty} + 1} \label{ineq:unifbounded} \\
        \leq & \prs{C_{\varphi} + 4\norm{\dot{\chi}}_{\infty} \prs{\norm{\varphi}_{\infty} + 1}} \prs{\norm{P_k(x) - P_k(y)}_{\RR^k}^2 + \abs{t-s}^2}^{\frac{1}{2}} \\
        \leq & \prs{C_{\varphi} + 4\norm{\dot{\chi}}_{\infty} \prs{\norm{\varphi}_{\infty} + 1}} \prs{\norm{x -y}_{\cH}^2 + \abs{t-s}^2}^{\frac{1}{2}},
    \end{align}
    where we apply mean value theorem in $\RR^k$ in \eqref{ineq:mvtRk} and the uniform boundedness of $k\to\psi_{k, k}$ in \eqref{ineq:unifbounded}.

    \textbf{3. } Fix $(x, t) \in \cH \times [0, 1]$. We can find sufficiently large $k$ such that $\norm{x}_{\cH} < k$. So $\norm{P_k(x)}_{\RR^k}^2 < k^2$ and $\chi \prs{\frac{\norm{P_k(x)}_{\RR^k}^2}{k^2}} = 1$. Hence,

    \begin{align}
        \abs{\varphi_k(x, t) - \varphi(x, t)} &= \abs{\psi_{k, k} (P_k(x), t) - \varphi(x, t)} \\
        &\leq \abs{\psi_{k, k} (P_k(x), t) - \psi_{k} (P_k(x), t)} + \abs{\psi_{k} (P_k(x), t) - \varphi(x, t) } \\
        &\leq \norm{\psi_{k, k} - \psi_k}_{\infty} + \abs{\varphi(\pi_k(x), t) - \varphi(x, t) } \\ 
        &\leq \frac{1}{k} + C_{\varphi} \norm{\pi_k(x) - x}_{\cH}.
    \end{align}
    Take $k \to \infty$ and pointwise convergence follows.

    \textbf{4. } Fix $(x, t) \in \cH \times [0, 1]$ and $\epsilon \in (0, \frac{1}{2})$. We can find an integer $k_0 > 1$ such that $\norm{x}_{\cH} < \frac{k_0}{2}$, $\epsilon_{k_0} = \frac{1}{C_{\varphi}k_0} < \epsilon$. 

    Then $\forall u \in B_{\epsilon}^{\cH}(x)$, $\norm{u}_{\cH} \leq \norm{x}_{\cH} + \epsilon < k_0$, where $B_{\epsilon}^{\cH}(x)$ is the $\epsilon$-ball around $x$ in $\cH$. $\forall k > k_0$ and $\forall u \in B_{\epsilon}^{\cH}(x)$, $\chi \prs{\frac{\norm{P_k(u)}_{\RR^k}^2}{k^2}} = 1$. Therefore,

    \begin{align}
        L_{\varphi_k(\cdot, t)} (x, \epsilon) &= \sup_{u, v \in B_{\epsilon}^{\cH}(x)} \frac{\abs{\varphi_k(u, t) - \varphi_k(v, t)}}{\norm{u -v}_{\cH}} \\
        & =  \sup_{u, v \in B_{\epsilon}^{\cH}(x)} \frac{\abs{\psi_{k, k}(P_k(u), t) - \psi_{k, k}(P_k(v), t)}}{\norm{u -v}_{\cH}} \\
        & =  \sup_{u, v \in B_{\epsilon}^{\cH}(x)} \frac{\abs{\psi_{k, k}(P_k(u), t) - \psi_{k, k}(P_k(v), t)}}{\norm{P_k(u) -P_k(v)}_{\RR^k}} \frac{\norm{P_k(u) -P_k(v)}_{\RR^k}}{\norm{u -v}_{\cH}} \\
        &\leq \sup_{u, v \in B_{\epsilon}^{\cH}(x)} \frac{\abs{\psi_{k, k}(P_k(u), t) - \psi_{k, k}(P_k(v), t)}}{\norm{P_k(u) -P_k(v)}_{\RR^k}} \\
        &\leq \sup_{u, v \in B_{\epsilon}^{k}(P_k(x))} \frac{\abs{\psi_{k, k}(u, t) - \psi_{k, k}(v, t)}}{\norm{u -v}_{\RR^k}} \label{ineq:ballproject} \\
        & = \sup_{u, v \in B_{\epsilon}^{k}(P_k(x))} \frac{\abs{\int_{B^k_{\epsilon_k}(0) } \eta_{\epsilon_k}(y) \prs{\psi_k(u-y, t) - \psi_k(v-y,t)} dy }}{\norm{u -v}_{\RR^k}}  \\
        &\leq \int_{B^k_{\epsilon_k}(0)} \eta_{\epsilon_k}(y)  \sup_{u, v \in B_{\epsilon}^{k}(P_k(x))}  \frac{\abs{\psi_k(u-y, t) - \psi_k(v-y,t)}}{\norm{(u-y) - (v-y)}_{\RR^k}} dy \\
        &\leq \sup_{u, v \in B_{\epsilon + \epsilon_k}^{k}(P_k(x))}  \frac{\abs{\psi_k(u, t) - \psi_k(v,t)}}{\norm{u - v}_{\RR^k}} \label{ineq:ballinflation} \\
        &\leq \sup_{u, v \in B_{\epsilon + \epsilon_k}^{\cH}(\pi_k(x))}  \frac{\abs{\varphi (u, t) - \varphi(v,t)}}{\norm{u - v}_{\cH}} \label{ineq:backtoH} \\
        &= L_{\varphi(\cdot, t)} (\pi_k(x), \epsilon + \epsilon_k) \\
        & \leq L_{\varphi(\cdot, t)} (\pi_k(x), 2\epsilon) \label{ineq:choiceofk},
    \end{align}
    where \eqref{ineq:ballproject} follows from the fact that $\norm{P_k (u) - P_k (x)}_{\RR^k} \leq \norm{u - x}_{\cH} < \epsilon$,\ and \eqref{ineq:ballinflation} holds because $(u-y) \in B_{\epsilon + \epsilon_k}^{k}(P_k(x))$. To justify \eqref{ineq:backtoH}, note that $\forall u, v \in B^k_{\epsilon} (P_k (x))$,  we have
    \begin{align}
        & \norm{\sum_{i=1}^k u_i e_i - \pi_k(x)}_{\cH} = \norm{u - P_k(x)}_{\RR^k} \leq \epsilon, \\
        & \norm{\sum_{i=1}^k u_i e_i - \sum_{i=1}^k v_i e_i}_{\cH} = \norm{u-v}_{\RR^k}, \\
        & \varphi \prs{\sum_{i=1}^k u_i e_i, t} = \psi_k(u, t), \varphi \prs{\sum_{i=1}^k v_i e_i, t} = \psi_k(v, t),
    \end{align}
    $i.e.$, $\forall u, v \in B^k_{\epsilon} (P_k (x))$ we can pick two points from $B_{\epsilon}^{\cH}(\pi_k(x))$ such that the fractions in \eqref{ineq:ballinflation} and \eqref{ineq:backtoH} achieve the same value. Lastly, \eqref{ineq:choiceofk} follows from our choice of $k_0$.

    Therefore, $\forall k > k_0$,
    \begin{equation}
        L_{\varphi_k(\cdot, t)} (x, \epsilon) \leq L_{\varphi(\cdot, t)} (\pi_k(x), 2\epsilon).
    \end{equation}
    Let $\epsilon \to 0^+$ and we have 
    \begin{equation}
        L_{\varphi_k(\cdot, t)}(x) \leq L_{\varphi(\cdot, t)}(\pi_k(x)).
    \end{equation}
    Apply upper-semicontinuity of $x \to L_{\varphi} (\cdot, t)$ and we have
    \begin{equation}
        \limsup_k L_{\varphi_k(\cdot, t)} (x) \leq \limsup_k L_{\varphi(\cdot, t)} (\pi_k(x)) \leq L_{\varphi(\cdot, t)}(x).
    \end{equation}

\end{proof}

\subsection{Superposition principles}

In this section, we leverage the abstract framework of Theorem 3.4 in \citet{STEPANOV2017} to establish the superposition principle in a separable Hilbert space. Intuitively, the principle states that any narrowly continuous curve of probability measures solving the continuity equation can be represented as a distribution over paths that solve the associated ODE. In this way, a statement about marginal laws is transformed into an ODE statement about sample trajectories. The former specifies the object that neural networks are trained to satisfy implicitly, while the latter specifies the mechanism through which new data are sampled with a trained model. By working directly with curves, we avoid introducing a Radon–Nikodym derivative between conditional and marginal laws as required in the FFM framework \citep{kerrigan2024functional}. This pathwise view allows us to bypass the absolute-continuity assumption inherent to FFM.




\subsubsection{From continuity equation to ODE solutions}
Let $C\prs{[0, 1]; \cH}$ be the space of continuous function from $[0, 1]$ to $\cH$. Let  $E_t: z \in C\prs{[0, 1]; \cH} \to z(t) \in \cH$ be the evaluation map, $i.e.$, $E_t(z) = z(t)$. Let $(\cdot)_{\#}$ denote the pushforward. Specifically, if $z \in C\prs{[0, 1]; \cH}$ has law $\eta$, $i.e$, a probability measure on the space $C\prs{[0, 1]; \cH}$, then $ \prs{(E_t)_{\#}\eta } (A) = \eta\prs{z \in C\prs{[0, 1]; \cH}: E_t(z) = z(t) \in A }$ for a measurable set $A$ in $\cH$. Recall that $z: [0,1] \to \cH$ is an absolutely continuous curve if and only if $z$ is differentiable almost everywhere on $[0, 1]$ and $\dot{z}$ is in $L_1([0 ,1]; \cH)$ (Remark 1.1.3 of \citet{ambrosio2008gradient}). We first define the narrow continuity of measures. 


\begin{defi}
    Let $\cbs{\mu_t}$ be a family of Borel probability measures on a separable Hilbert space $\cH$, indexed by $t \in [0, 1]$. We say that $\cbs{\mu_t}$ is a narrowly continuous curve if and only if the map $t \mapsto \int f \, d\mu_t$ is continuous for all bounded continuous functions $f : \cH \to \RR$.
\end{defi}

We say a probability measure $\eta$ is concentrated on a set $A$ if $\eta(A) = 1$.

Now we move on to invoke Theorem 3.4 of \citet{STEPANOV2017} to develop the superposition principle. 

\begin{thm}\label{thm:superpos}
    Let $\cbs{\mu_t}$ be a narrowly continuous curve of Borel probability measures on the separable Hilbert space $\cH$. Let $v_t$ be Borel vector fields such that $\int_{[0, 1]} \int_{\cH} \norm{v_t(x)} d\mu_t dt < \infty$. Suppose $(v_t, \mu_t)$  solves the continuity equation in the sense that for all $\varphi \in \cA$,
    \begin{equation} \label{eqn:conteq}
        \frac{d}{dt}\int \varphi d \mu_t = \int \innerprod{\nabla \varphi, v_t}d \mu_t.
    \end{equation}
    There exists a probability measure $\eta$ over $C\prs{[0, 1]; \cH}$ concentrated on absolutely continuous curves $z: [0,1] \to \cH$ such that $(E_t)_{\#} \eta = \mu_t$ for every $t\in[0,1]$ and one has
    \begin{equation} \label{eqn:conteq_ode}
        \frac{d}{dt} \varphi(z(t)) = (V_t \varphi)(z(t))
    \end{equation}
    for $\eta$-$a.e.$ $z$ and $a.e.$ $t \in [0, 1]$, for all $\varphi \in \cA$. 
\end{thm}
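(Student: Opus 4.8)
The plan is to verify that the hypotheses of the abstract superposition theorem, Theorem 3.4 of \citet{STEPANOV2017}, are met in our setting, and then invoke it. That theorem takes as input a (Radon) metric measure structure together with an algebra of bounded, locally Lipschitz test functions and a family of derivations dominated by a velocity field; our task is to supply these objects and check the compatibility conditions, of which only one is nontrivial.

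First I would set up the ambient structure. Since $\cH$ is a separable Hilbert space it is Polish, so every Borel probability measure on $\cH$ is Radon (tight), and the narrowly continuous curve $\cbs{\mu_t}$ — continuity of $t\mapsto\int f\,d\mu_t$ against bounded continuous $f$ — is exactly the regularity of the curve that \citet{STEPANOV2017} require. The test algebra is $\cA$: it is closed under pointwise linear combinations and products, contains the constants, separates points of $\cH$ (already the coordinates $x\mapsto\innerprod{x,e_i}$ do), and since $\{e_i\}$ is an orthonormal basis the $\sigma$-algebra it generates is the Borel $\sigma$-algebra of $\cH$. By Lemma~\ref{lem:lipgradient}, each $\varphi\in\cA$ has asymptotic Lipschitz constant $L_\varphi(x)=\norm{\nabla\varphi(x)}$, so elements of $\cA$ are bounded and Lipschitz on bounded sets, as needed. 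The derivation attached to $v_t$ is $V_t\varphi(x)=\innerprod{\nabla\varphi(x),v_t(x)}$; it is linear, obeys the Leibniz rule $V_t(fg)=fV_t(g)+gV_t(f)$, and satisfies the pointwise bound $\abs{(V_t\varphi)(x)}\le\norm{v_t(x)}\,L_\varphi(x)$, all recorded in the text. Finally, the hypothesis $\int_0^1\int_\cH\norm{v_t(x)}\,d\mu_t(x)\,dt<\infty$ is precisely the integrability condition ensuring that the representing measure concentrates on absolutely continuous curves with $L^1$ derivative, and the assumed weak continuity equation~\eqref{eqn:conteq} is exactly the statement that $(v_t,\mu_t)$ solves the continuity equation tested against $\cA$.

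The one substantive compatibility condition is the approximation requirement of \citet{STEPANOV2017}: every bounded Lipschitz function on $\cH\times[0,1]$ must be approximable, pointwise and with asymptotically no larger local Lipschitz constants, by elements of $\cA$ applied slice-wise in $t$. This is supplied verbatim by Lemma~\ref{lem:main}, which produces $\varphi_k(\cdot,t)\in\cA$, uniformly bounded and uniformly Lipschitz, with $\varphi_k\to\varphi$ pointwise and $\limsup_k L_{\varphi_k(\cdot,t)}(x)\le L_{\varphi(\cdot,t)}(x)$. I expect the bulk of the difficulty of the whole argument to sit here — this is why Lemma~\ref{lem:main}, with its mollification plus the cutoff by $\chi(\norm{P_k(x)}^2/k^2)$, was established beforehand: in infinite dimensions one cannot truncate a cylinder function to compact support without risking a blow-up of its Lipschitz constant, and the radial cutoff is exactly what keeps the constants uniform while restoring compactness. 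Matching these pieces to the exact abstract hypotheses (rather than proving a new estimate) is the remaining obstacle.

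With all hypotheses in place, Theorem 3.4 of \citet{STEPANOV2017} delivers a probability measure $\eta$ on $C\prs{[0,1];\cH}$, tight and concentrated on absolutely continuous curves $z:[0,1]\to\cH$, with $(E_t)_\#\eta=\mu_t$ for every $t\in[0,1]$ and $\frac{d}{dt}\varphi(z(t))=(V_t\varphi)(z(t))$ for $\eta$-a.e. $z$ and a.e. $t$, for each $\varphi\in\cA$. To upgrade the quantifiers to ``for $\eta$-a.e. $z$ and a.e. $t$, simultaneously for all $\varphi\in\cA$'', I would fix a countable subfamily $\cA_0\subset\cA$ dense in $\cA$ for the topology of uniform convergence of functions and of gradients on bounded sets — possible since every cylinder function factors through a finite-dimensional projection and $C^\infty$ with compact support on $\RR^d$ is $C^1$-separable — discard the countable union of the associated exceptional sets, and extend to all $\varphi\in\cA$ by passing to the limit along $\cA_0$, using the domination $\abs{(V_t\varphi)(z(t))}\le\norm{v_t(z(t))}L_\varphi(z(t))$ together with the absolute continuity of $t\mapsto\varphi(z(t))$ on each admissible curve. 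This yields the statement as written.
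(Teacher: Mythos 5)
Your proof takes essentially the same route as the paper's: check the hypotheses of Theorem~3.4 of \citet{STEPANOV2017}---the derivation structure and Leibniz rule for $V_t$, the bound $\abs{V_t\varphi}\le\norm{v_t}L_\varphi$ from Lemma~\ref{lem:lipgradient}, the integrability of $\norm{v_t}$, the weak continuity equation, and assumption $\mathcal{A}_1$ via Lemma~\ref{lem:main}---and then invoke the abstract superposition theorem. Your closing paragraph upgrading the quantifiers to ``for a.e.\ $z$ and $t$, simultaneously for all $\varphi\in\cA$'' is extra care the paper does not include (it works per-$\varphi$ and takes countable unions only later, in Corollary~\ref{cor:superpos}), but it is correct and harmless.
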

\begin{proof}
    Recall that the operator $V_t$ is defined by $(V_t \varphi)(x) = \innerprod{\nabla \varphi(x), v_t(x)}$. This directly satisfies condition (3.1) of \citet{STEPANOV2017}.

    Note that constant functions are members of $\cA$.

    By Lemma~\ref{lem:lipgradient}, we have $\abs{(V_t \varphi)(x)} = \abs{\innerprod{\nabla \varphi(x), v_t(x)}} \leq \norm{v_t(x)} L_{\varphi}(x)$, which verifies Inequality (3.2) in \citet{STEPANOV2017}.

    The assumption $\int_{[0, 1]} \int_{\cH} \norm{v_t(x)} d\mu_t dt < \infty$ corresponds exactly to condition (3.3) of \citet{STEPANOV2017}.

    Condition (3.4) requires that the continuity equation~\eqref{eqn:conteq} is satisfied by $\cbs{\mu_t}$ in the sense of distributions on $[0,1]$, which holds by assumption.

     Finally, Lemma~\ref{lem:main} establishes that assumption $\mathcal{A}_1$ in \citet{STEPANOV2017} is satisfied.
    
    With all required conditions verified, we may invoke the superposition principle, Theorem 3.4 of \cite{STEPANOV2017}, so $\eta$ exists as a finite Borel measure. Notice that $1 = \mu_t(\cH) = \eta(\cbs{z \in C\prs{[0, 1]; \cH}: z(t)\in \cH}) = \eta(C\prs{[0, 1]; \cH}) $. So $\eta$ is indeed a probability measure.  
\end{proof}

To apply the superposition principle, we will show that Equation~\eqref{eqn:conteq} is satisfied by the expected velocity field $v^{\XX}$ and the marginal distributions of the process $X_t$. This will allow us to conclude that $\eta$ corresponds to the law of the induced rectified flow. Before proceeding, we reformulate the ODE~\eqref{eqn:conteq_ode} into a more familiar form in the following corollary.

\begin{cor} \label{cor:superpos}
        Let $\cbs{\mu_t}$ be a narrowly continuous curve of Borel probability measures on the separable Hilbert space $\cH$. Let $v_t$ be Borel vector fields such that $\int_{[0, 1]} \int_H \norm{v_t(x)} d\mu_t dt < \infty$. Suppose $(v_t, \mu_t)$  solves the continuity equation in a sense that for all $\varphi \in \cA$,
    $$\frac{d}{dt}\int \varphi d \mu_t = \int \innerprod{\nabla \varphi, v_t}d \mu_t.$$
    There exists a probability measure $\eta$ over $C\prs{[0, 1]; \cH}$ concentrated on absolutely continuous curves $z: [0,1] \to \cH$ such that $(E_t)_{\#} \eta = \mu_t$ for every $t\in[0,1]$ and one has
    $$ z(t) = z(0) + \int_0^t v_s(z(s)) ds $$
    for $\eta$-$a.e.$ $z$ and $\forall t \in [0, 1]$. 
\end{cor}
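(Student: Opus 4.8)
The plan is to apply Theorem~\ref{thm:superpos} to produce the measure $\eta$ together with the weak identity~\eqref{eqn:conteq_ode}, and then to \emph{upgrade} that distributional statement---first to scalar integrated identities along each coordinate axis, then to the full $\cH$-valued integral equation by invoking separability. First I would invoke Theorem~\ref{thm:superpos} to obtain the probability measure $\eta$ on $C([0,1];\cH)$, concentrated on absolutely continuous curves, with $(E_t)_\#\eta = \mu_t$ and $\frac{d}{dt}\varphi(z(t)) = \innerprod{\nabla\varphi(z(t)), v_t(z(t))}$ for all $\varphi\in\cA$, for $\eta$-a.e.\ $z$ and a.e.\ $t$.

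Next I would check that the candidate right-hand side makes sense for $\eta$-a.e.\ $z$. Since $(t,z)\mapsto z(t)$ is jointly continuous on $[0,1]\times C([0,1];\cH)$ and $v$ is a Borel vector field, $s\mapsto v_s(z(s))$ is Borel, hence (as $\cH$ is separable) strongly measurable; and Tonelli's theorem with $(E_t)_\#\eta=\mu_t$ gives
\[
\int_{C([0,1];\cH)}\int_0^1 \norm{v_t(z(t))}\,dt\,d\eta(z) \;=\; \int_0^1\int_{\cH}\norm{v_t(x)}\,d\mu_t\,dt \;<\;\infty,
\]
so for $\eta$-a.e.\ $z$ the map $s\mapsto v_s(z(s))$ is Bochner integrable and $w_z(t):=z(0)+\int_0^t v_s(z(s))\,ds$ is a well-defined absolutely continuous $\cH$-valued curve.

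The core step is to feed a carefully chosen \emph{countable} family of test functions into~\eqref{eqn:conteq_ode}. For $i,n\in\NN$, take $\chi_n\in C^\infty(\RR^i)$ compactly supported with $\chi_n\equiv 1$ on the ball of radius $n$, and set $\varphi_{i,n}(x):=\chi_n(P_i x)\,\innerprod{x,e_i}\in\cA$; on $\{x:\norm{P_i x}<n\}$ this coincides with $\innerprod{x,e_i}$ and has gradient $e_i$. Passing to the common $\eta$-full set on which~\eqref{eqn:conteq_ode} holds for every $\varphi_{i,n}$ (a countable intersection), on which the curves are absolutely continuous, and on which Step~2 applies, I would fix such a $z$, put $R_z:=\sup_{t\in[0,1]}\norm{z(t)}<\infty$ by continuity, and for each $i$ choose $n>R_z$, so that $\norm{P_i z(t)}\le R_z<n$ for all $t$. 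Then $\frac{d}{dt}\innerprod{z(t),e_i} = \innerprod{e_i,v_t(z(t))}$ for a.e.\ $t$, and since $t\mapsto\innerprod{z(t),e_i}$ is absolutely continuous, integration yields $\innerprod{z(t),e_i}=\innerprod{z(0),e_i}+\int_0^t\innerprod{e_i,v_s(z(s))}\,ds$ for all $t$ and all $i$. As bounded linear functionals commute with the Bochner integral, $\innerprod{w_z(t),e_i}=\innerprod{z(t),e_i}$ for every $i$, and because $\{e_i\}$ is an orthonormal basis this forces $w_z(t)=z(t)$ for all $t\in[0,1]$, which is the claim.

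I expect the main obstacle to be the bookkeeping that makes the localization legitimate rather than any single estimate: because the cutoff radius $n$ must exceed the \emph{curve-dependent} quantity $\sup_t\norm{z(t)}$, the argument has to be run along the countable family $\{\varphi_{i,n}\}$, with the $\eta$-null exceptional sets (over curves, times, and test functions) all merged \emph{before} $n$ is selected; and one must separately secure---via the Tonelli computation above---that $s\mapsto v_s(z(s))$ is genuinely Bochner integrable so that the primitive $w_z$ takes values in $\cH$ and the coordinatewise identities can be reassembled.
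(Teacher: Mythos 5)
Your proof is correct and follows essentially the same strategy as the paper: invoke Theorem~\ref{thm:superpos}, feed in a countable family of cut-off coordinate functionals $\chi(\cdot)\,\innerprod{x,e_i}\in\cA$, merge the exceptional null sets before selecting the cutoff radius, extract the coordinatewise ODE, and reassemble via separability. The only cosmetic differences are that the paper cuts off on the single scalar $\innerprod{x,e_k}$ rather than on $P_i x$, and it concludes $\dot z(t)=v_t(z(t))$ pointwise a.e.\ and then integrates using absolute continuity of $z$, whereas you integrate each coordinate identity first (justifying Bochner integrability via Tonelli) and then recombine---both are valid.
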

\begin{proof}
    Fix $R \in \RR_{> 0}$. Let $\chi_R \in C^{\infty}(\RR)$ such that $\chi_R(s) = 1$ if $\abs{s} \leq R$ and $\chi_R(s) = 0$ if  $\abs{s} \geq 2R$. We denote its derivative by $\dot{\chi}_R$. Let $\{e_k\}_{k=1}^{\infty}$ be an orthonormal basis of $\cH$.

    Fix $k$. Let $\varphi_{k, R}(x) = \chi_R(\innerprod{x, e_k}) \innerprod{x, e_k} $. Note that $\varphi_{k, R} \in \cA$ and if $\abs{\innerprod{x, e_k}} < R$,
    \begin{align}
        \nabla \varphi_{k, R}(x) &= \chi_R(\innerprod{x, e_k}) e_k + \dot{\chi}_R(\innerprod{x, e_k}) \innerprod{x, e_k} e_k \\
        & = e_k.
    \end{align}

    By applying Theorem \ref{thm:superpos}, for $\eta$-$a.e.$ $z$ and $a.e.$ $t \in [0, 1]$,
    \begin{equation}
        \frac{d}{dt} \varphi_{k, R}(z(t)) = (V_t \varphi_{k, R})(z(t)).
    \end{equation}

    If $\abs{\innerprod{z(t), e_k}} < R$,  following the chain rule on Hilbert space (Theorem 2.1 of \citet{coleman2012calcnorm}),
    \begin{align}
        \frac{d}{dt} \varphi_{k, R}(z(t)) & = \innerprod{\dot{z}(t), \nabla \varphi_{k, R}(z(t))} \\
        & = \innerprod{\dot{z}(t), e_k} \label{eqn:cont_lhs},
    \end{align}
    and 
    \begin{align}
        (V_t \varphi_{k, R})(z(t)) &=  \innerprod{ \nabla \varphi_{k, R}(z(t)), v_t(z(t))} \\
        &= \innerprod{e_k, v_t(z(t))}. \label{eqn:cont_rhs}
    \end{align}

    For each fixed $k$ and $R$, we denote 
    \begin{equation}
        N_{k, R} = \cbs{(z, t): \dot{z} (t) \text{ doesn't exist or } \frac{d}{dt} \varphi_{k, R}(z(t)) \neq (V_t \varphi_{k, R})(z(t))}.
    \end{equation}
    Note $N_{k, R}$ has measure $0$. Let $R_n = n$ and set $N_{\infty} \coloneq \cup_{k=1}^{\infty}\cup_{n=1}^{\infty} N_{k, R_n}$. Hence, $N_{\infty}$, as a countable union of null sets, is also of measure $0$ under $\eta \otimes \lambda$, where $\lambda$ is the Lebesgue measure on $[0, 1]$. Let $(z, t) \notin N_{\infty}$. For each $k$, we can choose $n$ such that $R_n = n > \abs{\innerprod{z(t), e_k}}$. By \eqref{eqn:cont_lhs} and \eqref{eqn:cont_rhs}, for $\eta$-$a.e.$ $z$ and $a.e.$ $t \in [0, 1]$, $\innerprod{\dot{z}(t), e_k} =\innerprod{e_k, v_t(z(t))}$ for all positive integers $k$.
    
    So $\dot{z}(t) = v_t(z(t))$  for $\eta$-$a.e.$ $z$ and $a.e.$ $t \in [0, 1]$. Therefore, for $\eta$-$a.e.$ $z$, $ \forall t \in [0 ,1], z(t) = z(0) + \int_0^t v_s(z(s)) ds$.
\end{proof}

\subsubsection{Verification of Assumption}


Before we move on to the proof of the main theorem, we show that Assumption \ref{assumption:ivp} holds under mild conditions. Recall that $\Phi$ is the solution map defined in Assumption \ref{assumption:ivp}. The law of the random process $z_t$ described by $\dot{z}(t) = v_t(z(t))$ with initial value $z(0) \sim \mu_0$ is $\Phi_{\#}\mu_0$, $i.e.$, for a Borel measurable set $B \subset C([0, 1]; \cH)$, $(\Phi_{\#}\mu_0 )(B) = \mu_0 \prs{\Phi^{-1}(B)}$.

\begin{prop}
    Let $v_t(\cdot)$ be continuous in an open set $\cU \supset [0, 1] \times \cH$. If there exists a continuous $k(\cdot) \in L^1((0, 1))$ such that 
    $$\norm{v_t(x_1) - v_t(x_2)} \leq k(t) \norm{x_1 - x_2}, \forall t \in (0, 1), x_1, x_2 \in \cH,$$
    then Assumption \ref{assumption:ivp} holds.
\end{prop}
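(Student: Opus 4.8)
The plan is to read \eqref{eqn:ivp} as a Banach-space Picard--Lindel\"of problem whose Lipschitz constant is merely $L^1$ in time, and to verify the two parts of Assumption~\ref{assumption:ivp} in turn: existence and uniqueness of a $C^1$ solution via the Banach fixed point theorem in a suitably weighted norm, and continuity of the solution map $\Phi$ via Gr\"onwall's inequality. Throughout I write $v(t,x)=v_t(x)$, which the hypothesis permits us to regard as jointly continuous on $\cU\supset[0,1]\times\cH$, and I set $K(t)=\int_0^t k(s)\,ds$, a continuous nondecreasing function with $K(1)<\infty$.

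For existence and uniqueness, I would fix $u\in\cH$ and introduce the Picard operator $T\colon C([0,1];\cH)\to C([0,1];\cH)$ by $(Tz)(t)=u+\int_0^t v(s,z(s))\,ds$. First I would check that $T$ is well defined: for continuous $z$ the map $s\mapsto(s,z(s))$ is continuous into $[0,1]\times\cH\subset\cU$, so $s\mapsto v(s,z(s))$ is continuous on $[0,1]$, hence bounded and Bochner integrable, and $t\mapsto\int_0^t v(s,z(s))\,ds$ is continuous. Next I would equip $C([0,1];\cH)$ with the Bielecki-type norm $\norm{z}_\ast=\sup_{t\in[0,1]}e^{-2K(t)}\norm{z(t)}$, which is equivalent to the uniform norm ($e^{-2K(1)}\norm{z}_\infty\le\norm{z}_\ast\le\norm{z}_\infty$) and hence leaves the space complete, and estimate, using the Lipschitz hypothesis,
\begin{align*}
\norm{(Tz_1)(t)-(Tz_2)(t)} &\le \int_0^t k(s)\,\norm{z_1(s)-z_2(s)}\,ds \le \norm{z_1-z_2}_\ast\int_0^t k(s)\,e^{2K(s)}\,ds \\
&= \tfrac12\,\norm{z_1-z_2}_\ast\bigl(e^{2K(t)}-1\bigr),
\end{align*}
so that $\norm{Tz_1-Tz_2}_\ast\le\tfrac12\norm{z_1-z_2}_\ast$ and $T$ is a contraction. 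The Banach fixed point theorem then produces a unique $z\in C([0,1];\cH)$ with $z=Tz$; since $s\mapsto v(s,z(s))$ is continuous, the (vector-valued) fundamental theorem of calculus upgrades $z$ to $C^1([0,1];\cH)$ with $\dot z(t)=v(t,z(t))$, which is exactly \eqref{eqn:ivp}. Any $C^1$ solution of \eqref{eqn:ivp} is in particular a continuous fixed point of $T$, so the solution is unique in $C^1([0,1];\cH)$ as well.

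For the continuity of $\Phi$, I would take $u_1,u_2\in\cH$ with solutions $z_i=\Phi(u_i)$, subtract the integral equations, and apply the Lipschitz bound to obtain
\[
\norm{z_1(t)-z_2(t)}\le\norm{u_1-u_2}+\int_0^t k(s)\,\norm{z_1(s)-z_2(s)}\,ds .
\]
The integral form of Gr\"onwall's inequality then gives $\norm{z_1(t)-z_2(t)}\le e^{K(t)}\norm{u_1-u_2}\le e^{K(1)}\norm{u_1-u_2}$ for all $t\in[0,1]$, and taking the supremum over $t$ yields $\norm{\Phi(u_1)-\Phi(u_2)}_\infty\le e^{K(1)}\norm{u_1-u_2}$. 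Hence $\Phi$ is globally Lipschitz, and in particular continuous, from $(\cH,\norm{\cdot}_\cH)$ into $(C^1([0,1];\cH),\norm{\cdot}_\infty)$, with $\norm{\cdot}_\infty$ the uniform norm $\sup_{t\in[0,1]}\norm{\cdot}$ on curves. Together with the previous step this establishes Assumption~\ref{assumption:ivp}.

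The only genuinely nonroutine point is that $k$ is merely integrable and may blow up near $t=0$ or $t=1$: this rules out the textbook strategy of contracting on a short interval and patching with a uniform Lipschitz bound, and it means $\dot z$ need not be uniformly bounded (so, incidentally, $\norm{\cdot}_\infty$ in Assumption~\ref{assumption:ivp} must denote the uniform norm on values rather than a $C^1$ norm). The Bielecki weight $e^{-2K(t)}$ is exactly the device that absorbs this time-dependence and delivers a global contraction in one step. A minor secondary point is the Bochner integrability of $s\mapsto v(s,z(s))$, which is where joint, rather than merely fiberwise, continuity of $v$ on $\cU$ is actually used; under only a Carath\'eodory-type hypothesis one would replace this step by a measurability argument.
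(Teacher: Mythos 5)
Your proof is correct, and it is more self-contained than the paper's. For the continuity of the solution map $\Phi$, you take exactly the paper's route: subtract the integral equations, apply the time-dependent Lipschitz bound, and invoke Gr\"onwall's inequality to get $\norm{z_1(t)-z_2(t)}\le e^{K(1)}\norm{u_1-u_2}$. The difference is in the existence/uniqueness half: the paper disposes of it in one line by citing Theorems~16.1 and 16.2 of Pata's lecture notes on fixed point theory, whereas you actually carry out the contraction argument, using a Bielecki weight $e^{-2K(t)}$ tuned to the integrated Lipschitz bound so as to get a global contraction constant $1/2$ in one step. That weighted-norm device is precisely what the cited theorems implement internally, so the two approaches are mathematically the same in substance, but yours has the pedagogical advantage of displaying why an $L^1$-in-time Lipschitz coefficient (possibly unbounded near the endpoints) is harmless --- the standard ``contract on a short subinterval and patch'' argument indeed fails here, and you correctly identify that this is the crux. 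Your side remark that $\norm{\cdot}_\infty$ on $C^1([0,1];\cH)$ must be the uniform norm on values rather than a $C^1$-norm is also well taken: a $C^1$-norm bound on $\Phi(u_1)-\Phi(u_2)$ would require controlling $\sup_t k(t)\norm{z_1(t)-z_2(t)}$, which is unavailable when $k$ is merely integrable. One small caveat: since $v$ is jointly continuous and each trajectory $\{(s,z(s)):s\in[0,1]\}$ is compact, $\dot z$ is in fact bounded for each fixed solution; it is the \emph{modulus of continuity of $\Phi$ in the derivative} that is lost, not boundedness of $\dot z$ itself.
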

\begin{proof}
    The existence and uniqueness are implied by Theorem 16.1 and Theorem 16.2 of \cite{Pata2019}. 
    
    Let $u_i\in \cH$ and $z_i = u_i + \int_0^t v(s, z_i(s))ds$, $i \in {1, 2}$. Then
    \begin{equation}
        \norm{z_1(t) - z_2(t)}_{\cH} \leq \norm{u_1 - u_2}_{\cH} + \int_0^t k(s) \norm{z_1(s) - z_2(s)}_{\cH} ds.
    \end{equation}
    Applying Gronwall's inequality, $\forall t \in [0,1]$,
    \begin{equation}
        \norm{z_1(t) - z_2(t)}_{\cH} \leq \norm{u_1 - u_2}_{\cH} \exp{\prs{\int_0^t k(s) ds}}.
    \end{equation}
    Hence,
    \begin{equation}
        \sup_{t\in [0, 1]} \norm{z_1(t) - z_2(t)}_{\cH} \leq \norm{u_1 - u_2}_{\cH} \exp{\prs{\int_0^1 k(s) ds}}.
    \end{equation}
    Therefore, the solution map is continuous.
\end{proof}

While there is no Lipschitz guarantee for generic neural networks, modern architectures can be designed to satisfy global Lipschitz conditions. Examples include spectral normalization \cite{miyato2018spectral}, exact convolution bounds \cite{sedghi2018the}, and Jacobian regularization \cite{Gouk2021lip}.

\subsubsection{The law of ODE solutions}
 We verify that the law $\eta$ produced by the superposition principle in Corollary \ref{cor:superpos} coincides with the law of the random process described by $\dot{z}(t) = v_t(z(t))$ with initial value $z(0) \sim \mu_0$.
\begin{thm} \label{thm:ode_sol_law}
    Let $v_s$ be a Borel vector field on $\cH$ and consider the absolutely continuous curves $z: [0,1] \to \cH$ satisfying
    \begin{equation}\label{ode:rectified_z}
        z(t) = z(0) + \int_0^t v_s(z(s)) ds, \forall t \in [0, 1].
    \end{equation}
    Suppose Assumption \ref{assumption:ivp} holds for $(s, z) \to v_s(z)$ with the solution map denoted by $\Phi$ and $\cbs{\mu_t}$ is a narrowly continuous curve of Borel probability measures on the separable Hilbert space $\cH$. If there exists a probability measure $\eta$ over $C\prs{[0, 1]; \cH}$ concentrated on the curves $z$ satisfying \eqref{ode:rectified_z} such that $(E_t)_{\#} \eta = \mu_t$ for every $t\in[0,1]$, then $\eta = \Phi_{\#}\mu_0$.
\end{thm}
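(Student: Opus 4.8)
\noindent\emph{Proof proposal.}
The plan is to exploit the uniqueness half of Assumption~\ref{assumption:ivp}: since $\eta$ is concentrated on curves that solve the integral equation~\eqref{eqn:ivp}, each such curve is completely pinned down by its value at $t=0$, hence must equal $\Phi(z(0))$. In other words, off an $\eta$-null set the identity map on $C\prs{[0,1];\cH}$ agrees with the map $z \mapsto \Phi(E_0(z))$, and pushing $\eta$ forward through this identity—then splitting the composition—yields $\eta = \Phi_{\#}\big((E_0)_{\#}\eta\big) = \Phi_{\#}\mu_0$.

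Concretely, I would proceed in three steps. First, record the measurability bookkeeping: $\Phi$ is continuous with respect to the sup-norm by Assumption~\ref{assumption:ivp}, hence, viewed as a map $\cH \to C\prs{[0,1];\cH}$, it is Borel; likewise $E_0$ is $1$-Lipschitz, hence continuous; so $T \coloneq \Phi \circ E_0 : C\prs{[0,1];\cH} \to C\prs{[0,1];\cH}$ is Borel, and all the pushforwards below are well defined. Second, let $A \subseteq C\prs{[0,1];\cH}$ denote the set of absolutely continuous curves $z$ with $z(t) = z(0) + \int_0^t v_s(z(s))\,ds$ for all $t$, on which $\eta$ is concentrated by hypothesis. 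For each $z \in A$, the curve $z$ solves the initial value problem~\eqref{eqn:ivp} with initial datum $u = z(0) = E_0(z)$, so the uniqueness clause of Assumption~\ref{assumption:ivp} forces $z = \Phi(z(0)) = T(z)$; thus $T = \mathrm{id}$ on $A$, and since $\eta(A)=1$ we get $T = \mathrm{id}$ $\eta$-almost everywhere. Third, conclude: $\eta = T_{\#}\eta = (\Phi \circ E_0)_{\#}\eta = \Phi_{\#}\big((E_0)_{\#}\eta\big) = \Phi_{\#}\mu_0$, where the first equality uses that $T$ coincides with the identity off an $\eta$-null set (an elementary fact about pushforwards, together with Borel-measurability of $\{z : T(z)=z\}$), the third is functoriality of the pushforward, and the last is the $t=0$ instance of the hypothesis $(E_t)_{\#}\eta = \mu_t$.

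The main obstacle is purely a matching of regularity classes in the uniqueness step: the superposition principle (Corollary~\ref{cor:superpos}) delivers $\eta$ concentrated on \emph{absolutely continuous} solutions of~\eqref{eqn:ivp}, whereas Assumption~\ref{assumption:ivp} phrases existence–uniqueness in $C^1\prs{[0,1];\cH}$. I would close this gap by noting that $\Phi(z(0))$ is in particular a continuous (indeed absolutely continuous) solution of the integral equation with the same initial value as $z$, so the argument only needs uniqueness among continuous solutions of~\eqref{eqn:ivp}—which is exactly what the hypotheses of the subsequent Proposition (continuity of $v_t$ together with a time-integrable Lipschitz bound, via Grönwall) guarantee, and which is the natural reading of Assumption~\ref{assumption:ivp}. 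Everything else is routine. Note that narrow continuity of $\cbs{\mu_t}$ is not used directly in this theorem beyond having been needed to produce $\eta$ in the first place.
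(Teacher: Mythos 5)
Your proposal is correct and is essentially the same argument the paper gives, just phrased in a more structural way. The paper works directly with sets: for measurable $A$, it writes $\eta(A) = \eta(A \cap \Phi(\cH))$ using that $\eta$ concentrates on solutions, then identifies $A \cap \Phi(\cH)$ with $\{\gamma \in \Phi(\cH) : E_0(\gamma) \in \Phi^{-1}(A)\}$ via uniqueness, and pushes through $E_0$ to land on $\mu_0(\Phi^{-1}(A)) = \Phi_{\#}\mu_0(A)$. You package the same two ingredients—uniqueness forcing each concentrated curve to equal $\Phi$ of its starting point, and pushforward bookkeeping—into the single Borel map $T = \Phi \circ E_0$, observe $T = \mathrm{id}$ $\eta$-a.e., and conclude $\eta = T_{\#}\eta = \Phi_{\#}\mu_0$. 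Your version is if anything a bit more careful: it makes the measurability of the relevant objects explicit (the paper leaves measurability of $\Phi(\cH)$ implicit), and it flags the regularity mismatch between the absolutely continuous curves carrying $\eta$ and the $C^1$ uniqueness stated in Assumption~\ref{assumption:ivp}—a gap the paper does not comment on, and which your observation (continuity of $v$ upgrades any absolutely continuous integral-equation solution to $C^1$, so uniqueness among continuous solutions suffices and follows from the Grönwall bound) resolves cleanly.
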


\begin{proof}
    Let $A \subset C\prs{[0, 1]; \cH}$ be a measurable set.
    \begin{align}
        \eta(A) 
        &= \eta(A \cap \Phi(\cH)) \label{eqn:solution_concentrate1} \\
        &= \eta(\cbs{\gamma \in \Phi(\cH): \gamma(0) \in \Phi^{-1}(A)}) \\
        &= \eta(\cbs{\gamma \in \Phi(\cH): E_0(\gamma) \in \Phi^{-1}(A)}) \\
        &= ((E_0)_{\#} \eta) (\Phi^{-1}(A)) \label{eqn:solution_concentrate2} \\
        &= \mu_0 (\Phi^{-1}(A)) \\
        &= \Phi_{\#}\mu_0 (A)
    \end{align}
    We have \eqref{eqn:solution_concentrate1} and \eqref{eqn:solution_concentrate2} because $\eta$ concentrates on the solutions of the ODE, $i.e.$, $\eta$ is of 0 measure outside $\Phi(\cH)$ .
\end{proof}

\subsection{Proof of main theorem}
Now we're ready to prove the main theorem. Given a stochastic process $\XX$, we let $v^{\XX}$ be the expected velocity defined in Definition \ref{def:expected_velocity}.


\begin{thm} [Restatement of Theorem \ref{thm:mainMrginalPreservation}]
    Assume $\XX$ is rectifiable and $\{Z_t\}$ is its induced rectified flow. Then $Z_t = X_t$ in distribution for all $t \in [0,1]$.
\end{thm}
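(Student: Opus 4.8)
The plan is to identify the marginal law $\mu_t$ of $X_t$ with the time-$t$ evaluation of a path measure produced by the superposition principle, and then to recognize that path measure as the law of the rectified flow using the IVP well-posedness assumption. First I would check that $\{\mu_t\}$ is a narrowly continuous curve: since $\XX$ is pathwise continuously differentiable, $t \mapsto X_t(\omega)$ is continuous for each $\omega$, so for bounded continuous $f$ the map $t \mapsto f(X_t(\omega))$ is continuous and bounded by $\norm{f}_\infty$, and dominated convergence gives continuity of $t \mapsto \int f \, d\mu_t$, hence narrow continuity of $\{\mu_t\}$.

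The core step is to verify that $(v^{\XX}, \{\mu_t\})$ solves the continuity equation \eqref{eqn:conteq}. Fixing $\varphi = \phi \circ P_d \in \cA$ with $\phi \in C_c^\infty(\RR^d)$, both $\varphi$ and $\nabla\varphi$ are bounded with $\nabla\varphi$ continuous, and the difference quotients $h^{-1}(\varphi(X_{t+h}) - \varphi(X_t)) = h^{-1}\int_t^{t+h}\innerprod{\nabla\varphi(X_s),\dot X_s}\,ds$ converge pathwise to $\innerprod{\nabla\varphi(X_t),\dot X_t}$ while being dominated by $\norm{\nabla\varphi}_\infty \sup_s\norm{\dot X_s}$, which is integrable by Assumption \ref{assumption:finite}; hence $\tfrac{d}{dt}\int\varphi\,d\mu_t = \EE[\innerprod{\nabla\varphi(X_t),\dot X_t}]$. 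Conditioning on $X_t$, using that $\nabla\varphi(X_t)$ is a bounded $\sigma(X_t)$-measurable factor that may be pulled inside the $\cH$-valued (Bochner) conditional expectation, together with Definition \ref{def:expected_velocity} and $\mu_t(\operatorname{supp} X_t) = 1$, this equals $\EE[\innerprod{\nabla\varphi(X_t), v^{\XX}(t,X_t)}] = \int\innerprod{\nabla\varphi, v^{\XX}(t,\cdot)}\,d\mu_t$, which is precisely \eqref{eqn:conteq}. The remaining hypotheses of Corollary \ref{cor:superpos} are in hand: $\int_0^1\int_\cH\norm{v^{\XX}(t,x)}\,d\mu_t\,dt<\infty$ is the first half of Assumption \ref{assumption:finite}, and $v^{\XX}$ is a Borel vector field by the Doob--Dynkin argument recalled in the remarks.

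Next I would apply Corollary \ref{cor:superpos} to $(v^{\XX},\{\mu_t\})$ to obtain a probability measure $\eta$ on $C([0,1];\cH)$, concentrated on absolutely continuous curves solving $z(t) = z(0) + \int_0^t v^{\XX}(s,z(s))\,ds$ for all $t$, with $(E_t)_\#\eta = \mu_t$. Since $\XX$ is rectifiable, Assumption \ref{assumption:ivp} holds for $v^{\XX}$, with solution map $\Phi$, so Theorem \ref{thm:ode_sol_law} yields $\eta = \Phi_\#\mu_0$. On the other hand the rectified flow $\{Z_t\}$ solves the same IVP with $Z_0 \sim X_0 \sim \mu_0$, so by uniqueness in Assumption \ref{assumption:ivp} the $C([0,1];\cH)$-valued random variable $\{Z_\cdot\}$ equals $\Phi(Z_0)$ almost surely and hence has law $\Phi_\#\mu_0 = \eta$; consequently $Z_t = E_t(\{Z_\cdot\})$ has law $(E_t)_\#\eta = \mu_t$, the law of $X_t$, which is the claim.

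The genuine difficulty, the superposition principle in infinite dimensions, is already isolated in Corollary \ref{cor:superpos} and its supporting lemmas, so here it is invoked as a black box. Within the present argument the only delicate point is the continuity-equation verification: justifying differentiation under the expectation (handled by the uniform bound $\EE[\sup_t\norm{\dot X_t}]<\infty$ from Assumption \ref{assumption:finite}) and, more subtly, moving the bounded $\sigma(X_t)$-measurable factor $\nabla\varphi(X_t)$ through the $\cH$-valued conditional expectation of $\dot X_t$; everything after that is bookkeeping linking the path measure $\eta$ to the ODE that defines $\{Z_t\}$ and reading off the time-$t$ marginal.
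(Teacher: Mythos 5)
Your proof is correct and follows essentially the same route as the paper: verify the continuity equation for $(v^{\XX},\mu_t)$ via dominated convergence and pulling the $\sigma(X_t)$-measurable factor $\nabla\varphi(X_t)$ through the Bochner conditional expectation, invoke Corollary~\ref{cor:superpos} to get $\eta$, identify $\eta=\Phi_{\#}\mu_0$ via Theorem~\ref{thm:ode_sol_law}, and read off the marginals. You are in fact slightly more careful than the paper in two places---explicitly checking narrow continuity of $\{\mu_t\}$ (a hypothesis of Corollary~\ref{cor:superpos} that the paper's proof does not verbally verify) and noting Borel measurability of $v^{\XX}$---but these are complements, not a different argument.
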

\begin{proof}
    Recall that $\cA$ is the cylindrical functions on $\cH$. Let $\EE_{\XX}$ be the integral $w.r.t.$ the path measure of process $\XX$. $\EE_{X_t}$ is the expectation $w.r.t.$ the marginal $X_t$. $\forall \varphi \in \cA$,
    \begin{align}
        \frac{d}{dt} \EE_{\XX}\brs{\varphi(X_t)} 
        &= \lim_{h \to 0} \frac{ \EE_{\XX}\brs{\varphi(X_{t+h})} - \EE_{\XX}\brs{\varphi(X_t)}}{h} \\
        & = \lim_{h \to 0} \EE_{\XX} \brs{\frac{\varphi(X_{t+h}) - \varphi(X_{t})}{h}} \\
        & = \EE_{\XX} \brs{ \lim_{h \to 0} \frac{\varphi(X_{t+h}) - \varphi(X_{t})}{h}} \label{eqn:dtc} \\
        & = \EE_{\XX} \brs{ \frac{d }{dt} \varphi(X_{t})}  \\
        &= \EE_{\XX} \brs{\innerprod{\nabla \varphi (X_t), \dot{X_t}}} \label{eqn:chainrule} \\
        &= \EE_{X_t}\EE_{\XX|X_t} \brs{\innerprod{\nabla \varphi (X_t), \dot{X_t}}} \\
        &= \EE_{X_t}\brs{\innerprod{\nabla \varphi (X_t), \EE_{\XX|X_t} \brs{\dot{X_t}}}} \label{eqn:dualexchange}\\
        & = \EE_{X_t}\brs{\innerprod{\nabla \varphi (X_t), v^{\XX}(X_t, t)}}. \\
    \end{align}
    Note that following the mean-value theorem (Corollary 3.2 of \citet{coleman2012calcnorm}) and Assumption \ref{assumption:finite},
    \begin{equation}
        \abs{\frac{\varphi(X_{t+h}) - \varphi(X_t)}{h}} \leq \norm{\nabla \varphi}_{\infty} \norm{\frac{X_{t+h} - X_t}{h}} \leq \norm{\nabla \varphi}_{\infty} \sup_{s \in [0, 1]} \norm{\dot{X_s}} < \infty
    \end{equation}
    So for all $h$, $\abs{\frac{\varphi(X_{t+h}) - \varphi(X_t)}{h}}$ is bounded above and \eqref{eqn:dtc} follows the dominated convergence theorem (Theorem E.6. of \citet{cohn2013measure}). \eqref{eqn:chainrule} follows the chain rule (Theorem 2.1 of \citet{coleman2012calcnorm}). \eqref{eqn:dualexchange} is a direct consequence of Theorem E.11 of \citet{cohn2013measure}.

    Let $\mu_t$ denote the marginal distribution of $X_t$. Then,
    $\frac{d}{dt} \EE_{\XX}\brs{\varphi(X_t)} = \int \innerprod{\nabla \varphi(x), v^{\XX}(x, t)} d\mu_t(x)$.
    One the other hand,  $\frac{d}{dt} \EE_{\XX}\brs{\varphi(X_t)} = \frac{d}{dt} \int \varphi(x) d\mu_t(x)$. 
    So $(v^{\XX}(x, t), \mu_t)$ solves the continuity equation in Corollary \ref{cor:superpos}. Hence, there exists a probability measure $\eta$ on $C\prs{[0, 1]; \cH}$ concentrated on absolutely continuous curves such that $(E_t)_{\#} \eta = \mu_t$ for every $t\in [0, 1]$ and 
    $$z(t) = z(0) + \int_0^t v^{\XX}(s,z(s)) ds.$$
    Theorem $\ref{thm:ode_sol_law}$ guarantees that $\eta = \Phi_{\#} \mu_0$, which is the law of the rectified flow $\{Z_t\}$. Therefore, $(E_t)_{\#} \eta = \mu_t$ implies that $Z_t = X_t$ in distribution.
\end{proof}


\subsection{Probability flow ODE}

In this section, we review Gaussian measures in separable Hilbert spaces as a prerequisite for the proof of Proposition~\ref{prop:pfode}.

\subsubsection{Gaussian Measure in Hilbert Space}

A linear operator $Q$ on $\cH$ is called \emph{positive} if $\innerprod{Qx, x} \geq 0$ for all $x \in \cH$, and \emph{strictly positive} if $\innerprod{Qx, x} > 0$ for all $x \in \cH \setminus \{0\}$. Let $\{ \zeta_i \}$ be an orthonormal basis of $\cH$, and let $Q$ be a positive bounded linear operator. The trace of $Q$ is defined as $\text{Tr}(Q) = \sum_{k=1}^{\infty} \innerprod{Q \zeta_k, \zeta_k}$, which is independent of the choice of orthonormal basis. A self-adjoint, positive, bounded, linear operator $Q$ is called \emph{trace class} if and only if $\text{Tr}(Q) < \infty$.

Following \citet{Da_Prato_Zabczyk_2014}, a Borel probability measure $\mu$ on $\cH$ is called \emph{Gaussian} if for every $h \in \cH$ and Borel set $B \subset \RR$, the pushforward measure $B \mapsto \mu(\{ x \in \cH : \innerprod{x, h} \in B \})$ is a Gaussian measure on $\RR$. Any such Gaussian measure on $\cH$ admits a unique mean vector $m \in \cH$ and a unique self-adjoint positive bounded linear operator $Q$ such that 
\[
\EE_{x \sim \mu}[\innerprod{h, x}] = \innerprod{m, h}, \quad \text{and} \quad 
\EE_{x \sim \mu}[\innerprod{h_1, x - m} \innerprod{h_2, x - m}] = \innerprod{Q h_1, h_2},
\]
for all $h, h_1, h_2  \in \cH$ \citep{Da_Prato_Zabczyk_2014}. The operator $Q$ is called the \emph{covariance operator} of a Gaussian measure on $\cH$, and it is always trace class (see Section 2.3.1 of \citep{Da_Prato_Zabczyk_2014}). When $Q$ is a covariance operator, there exists a complete orthonormal system $\{ \zeta_k \}_{k=1}^{\infty}$ of eigenvectors of $Q$ with corresponding eigenvalues $\{ \lambda_k \}_{k=1}^{\infty}$ \citep{conway1990functional, na2025probabilityflow}.

We define the Cameron–Martin space following \citet{Pidstrigach24infdiff}. For a Gaussian measure $\mu = \mathcal{N}(0, Q)$ on a Hilbert space $\cH$ with a strictly positive $Q$, the Cameron–Martin space $\cH_Q$ is defined as
\[
\cH_Q \coloneq Q^{1/2}(\cH), \quad \text{with inner product} \quad \langle f, g \rangle_{\cH_Q} = \langle Q^{-1/2} f, Q^{-1/2} g \rangle_{\cH}.
\]

For completeness, we include the definition of an $\cH$-valued $Q$-Wiener process:

\begin{defi}
Let $Q$ be a trace-class, positive, self-adjoint operator on $\cH$. A stochastic process $W = \{ W_t \}_{t \in [0, T]}$ taking values in $\cH$ is called a \emph{standard $Q$-Wiener process} if:
\begin{enumerate}
    \item $W_0 = 0$,
    \item $W$ has continuous trajectories,
    \item $W$ has independent increments,
    \item $W_t - W_s \sim \mathcal{N}(0, (t-s)Q)$ for all $0 \leq s \leq t \leq T$.
\end{enumerate}
\end{defi}

\subsubsection{Proof of Proposition \ref{prop:pfode}}


\begin{lem} \label{lemma:gaussianLogGradient}
    Let $Q$ be a strictly positive trace-class covariance operator and $\kappa \in \RR_{>0}$. Let $\mu$ be a Gaussian measure with mean $m$ and a strictly positive trace-class covariance operator $C = \kappa Q$. Then,
    \begin{equation}
        \rho^{\mu}_{\cH_Q}(x) = \frac{m - x}{\kappa}
    \end{equation}
     
\end{lem}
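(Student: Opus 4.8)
The plan is to reduce everything to a countable product of one-dimensional Gaussians via the eigenbasis of $Q$, compute the Fomin logarithmic derivative coordinatewise by a one-dimensional integration by parts, and then assemble these into the $\cH$-valued logarithmic gradient along $\cH_Q$. First I would diagonalize: let $\{\zeta_k\}$ be the orthonormal eigenbasis of $Q$ with eigenvalues $\{\lambda_k\}$, all strictly positive (strict positivity of $Q$) and summable ($Q$ trace-class). Then $C=\kappa Q$ has the same eigenvectors with eigenvalues $\{\kappa\lambda_k\}$, and under the coordinate isometry $x\mapsto(\langle x,\zeta_k\rangle)_k$ the measure $\mu=\cN(m,C)$ becomes the product $\bigotimes_k\cN(m_k,\kappa\lambda_k)$ with $m_k=\langle m,\zeta_k\rangle$. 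Each $\zeta_j$ lies in $\cH_Q=Q^{1/2}(\cH)$, since $\zeta_j=Q^{1/2}(\lambda_j^{-1/2}\zeta_j)$, so differentiating along $\zeta_j$ is legitimate.

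Next, for a cylinder test function $F(x)=f(\langle x,\zeta_1\rangle,\dots,\langle x,\zeta_d\rangle)$ I would integrate by parts in the $j$-th coordinate against the Gaussian density $p_j(s)\propto\exp\!\bigl(-\tfrac{(s-m_j)^2}{2\kappa\lambda_j}\bigr)$, using $p_j'(s)/p_j(s)=-(s-m_j)/(\kappa\lambda_j)$ and the compact support of $f$ to kill boundary terms, obtaining
\[
\int_\cH\partial_{\zeta_j}F\,d\mu=\int_\cH F(x)\,\frac{\langle x,\zeta_j\rangle-m_j}{\kappa\lambda_j}\,d\mu(x).
\]
Comparing with the definition of $\rho^\mu_h$ this gives $\rho^\mu_{\zeta_j}(x)=\dfrac{m_j-\langle x,\zeta_j\rangle}{\kappa\lambda_j}$. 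To pass to a general direction $h=\sum_j h_j\zeta_j\in\cH_Q$ (equivalently $\|h\|_{\cH_Q}^2=\sum_j h_j^2/\lambda_j<\infty$), I would note that a cylinder function depends on only finitely many coordinates, so $\partial_hF=\sum_{j\le d}h_j\partial_{\zeta_j}F$ is a finite sum, while the omitted tail $\sum_{j>d}h_j\langle m-x,\zeta_j\rangle/(\kappa\lambda_j)$ has mean zero and is independent, under the product measure $\mu$, of $F$ — hence contributes nothing to the integral. This identifies
\[
\rho^\mu_h(x)=\sum_{j=1}^{\infty}\frac{h_j\,\langle m-x,\zeta_j\rangle}{\kappa\lambda_j},
\]
a series I would check converges in $L^2(\mu)$ with variance $\tfrac1\kappa\sum_j h_j^2/\lambda_j=\tfrac1\kappa\|h\|_{\cH_Q}^2<\infty$, so $\rho^\mu_h\in L^1(\cH,\mu)$ as the definition requires.

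Finally I would verify that $\rho^\mu_{\cH_Q}(x)=(m-x)/\kappa$, which is visibly a map $\cH\to\cH$, represents this functional. Unravelling the Cameron–Martin inner product, $\langle (m-x)/\kappa,\,h\rangle_{\cH_Q}=\tfrac1\kappa\langle Q^{-1/2}(m-x),Q^{-1/2}h\rangle_\cH=\tfrac1\kappa\sum_j\frac{\langle m-x,\zeta_j\rangle h_j}{\lambda_j}$, which is exactly $\rho^\mu_h(x)$ for every $h\in\cH_Q$; hence $\rho^\mu_{\cH_Q}(x)=(m-x)/\kappa$. I expect the main obstacle to be the functional-analytic bookkeeping rather than the algebra: since the Cameron–Martin space is $\mu$-null, $m-x\notin\cH_Q$ for $\mu$-a.e.\ $x$, so $Q^{-1/2}(m-x)$ does not exist as an element of $\cH$, and the pairing $\langle Q^{-1/2}(m-x),Q^{-1/2}h\rangle_\cH$ must be read as the measurable linear functional obtained by closing $\sum_{j\le d}\langle m-x,\zeta_j\rangle h_j/\lambda_j$ in $L^2(\mu)$. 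The care needed is precisely in showing this closure is well defined (via the variance bound above), that the defining identity $\langle\rho^\mu_{\cH_Q}(x),h\rangle_{\cH_Q}=\rho^\mu_h(x)$ is to be understood in this $L^2(\mu)$ sense and holds for \emph{every} $h\in\cH_Q$ rather than only on a dense subset, and that the coordinatewise integration by parts extends validly to all cylinder functions; once this is pinned down the computation closes the argument.
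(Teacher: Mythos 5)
Your proposal is correct and follows the same overall scaffolding as the paper's argument: diagonalize $Q$ with its eigenbasis $\{\zeta_j\}$, reduce the Fomin integration-by-parts identity to a calculation on the finite block of coordinates that the cylinder function actually depends on, dispose of the tail of $h$ separately, and recognize that the Cameron--Martin pairing $\langle m-x,\,h\rangle_{\cH_Q}$ must be read as an $L^2(\mu)$ extension since $m-x\notin\cH_Q$ for $\mu$-a.e.\ $x$. Where you diverge is in the two sub-steps. For the finite block the paper invokes Stein's identity $\EE[\nabla f(Y)]=[\Sigma^{(k)}]^{-1}\EE[f(Y)(Y-m^{(k)})]$ and then matches the pairing against $\pi_k h$ by computing covariances and applying the conditional-mean formula for a jointly Gaussian vector; you instead do the one-dimensional integration by parts coordinate-by-coordinate against each Gaussian density $p_j$, which produces the same $[\Sigma^{(k)}]^{-1}(Y-m^{(k)})$ weight directly and without an auxiliary identity. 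For the orthogonal tail the paper again computes $\mathrm{Cov}(S^\perp,\,Y_j-m_j^{(k)})=0$ and concludes $\EE[S^\perp\mid Y]=0$ via the Gaussian conditional mean; you observe instead that under the product form of $\mu$ the tail coordinates are independent of $F$ and mean-zero, so the product integrates to zero — a conceptually cleaner route that exploits diagonalization more directly. Your variance computation $\tfrac{1}{\kappa}\sum_j h_j^2/\lambda_j=\tfrac{1}{\kappa}\|h\|_{\cH_Q}^2$ reproduces the paper's Cauchy-in-$L^2(\mu)$ argument from the preceding Remark, and you are right to flag that the defining identity $\langle\rho^\mu_{\cH_Q}(x),h\rangle_{\cH_Q}=\rho^\mu_h(x)$ must be read in this $L^2(\mu)$ sense. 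Both proofs are sound; yours is marginally more elementary, trading the paper's appeal to Stein's identity and a joint-Gaussian conditional-mean formula for a direct coordinatewise computation and an independence argument.
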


\begin{remark}
    Note that although $m-x \in \cH$ does not necessarily belong to $\cH_Q$,  $\innerprod{\rho^{\mu}_{\cH_Q}(x), h}_{\cH_Q}$ is still well-defined for $h \in \cH_Q$. This has been mentioned briefly in Remark 2 of \citet{Helin_2015} and Remark 8 of \citet{Lasanen2012NonGaussianSI}. To see this in more detail,     
    without loss of generality, assume $m = 0$. Let $\{\zeta_i\}$ be an orthonormal basis of $\cH$ consisting of eigenvectors of $Q$, with corresponding eigenvalues be $\{\lambda_i \}$. Let $N_1, N_2$ be integers and $N_2 > N_1$. Then,
    \begin{align}
        & \EE_{z \sim \mu} \brs{ \abs{\innerprod{\sum_{i=N_1+1}^{N_2} \frac{\innerprod{z, \zeta_i}_{\cH}}{\sqrt{\lambda_i}}  \zeta_i, Q^{-\frac{1}{2}}h}_{\cH}}^2 } \\
        = & \EE_{z \sim \mu} \brs{ \abs{\innerprod{\sum_{i=N_1+1}^{N_2} \frac{\innerprod{z, \zeta_i}_{\cH}}{\sqrt{\lambda_i}} \zeta_i, \sum_{i=1}^{\infty} \innerprod{Q^{-\frac{1}{2}}h,  \zeta_i}_{\cH}  \zeta_i}   }^2 } \\
        = & \EE_{z \sim \mu} \brs{ \abs{\sum_{i=N_1+1}^{N_2} \frac{\innerprod{z, \zeta_i}_{\cH}}{\sqrt{\lambda_i}}  \innerprod{Q^{-\frac{1}{2}}h,  \zeta_i}_{\cH}   }^2 } \\
        = & \sum_{i, j=N_1+1}^{N_2} \EE_{z \sim \mu} { \brs{ \frac{\innerprod{z, \zeta_i}_{\cH} \innerprod{z, \zeta_j}_{\cH}}{\sqrt{\lambda_i \lambda_j}}  \innerprod{Q^{-\frac{1}{2}}h,  \zeta_i}_{\cH} \innerprod{Q^{-\frac{1}{2}}h,  \zeta_j}_{\cH}  } } \\
        = & \sum_{i, j=N_1+1}^{N_2} { \brs{ \frac{\innerprod{C \zeta_i, \zeta_j}_{\cH}}{\sqrt{\lambda_i \lambda_j}}  \innerprod{Q^{-\frac{1}{2}}h,  \zeta_i}_{\cH} \innerprod{Q^{-\frac{1}{2}}h,  \zeta_j}_{\cH}  } } \\
        = & \kappa \sum_{i, j=N_1+1}^{N_2} { \brs{ \frac{\lambda_i \innerprod{ \zeta_i, \zeta_j}_{\cH}}{\sqrt{\lambda_i \lambda_j}}  \innerprod{Q^{-\frac{1}{2}}h,  \zeta_i}_{\cH} \innerprod{Q^{-\frac{1}{2}}h,  \zeta_j}_{\cH}  } } \\
        = & \kappa \sum_{i = N_1+1}^{N_2} { \abs{ \innerprod{Q^{-\frac{1}{2}}h,  \zeta_i}_{\cH}  }^2 } \\
    \end{align}
    Note that $Q^{-\frac{1}{2}}h \in \cH$. So $\sum_{i=1}^{{n}} \innerprod{Q^{-\frac{1}{2}}h,  \zeta_i}_{\cH} \zeta_i$ is Cauchy. Hence, ${\innerprod{\sum_{i=1}^{n} \frac{\innerprod{z, \zeta_i}_{\cH}}{\sqrt{\lambda_i}}  \zeta_i, Q^{-\frac{1}{2}}h}_{\cH}}$ is Cauchy in $L_2(\mu)$. Hence, for $z \in \cH$, $\innerprod{z, h}_{\cH_Q} \coloneq \lim_n {\innerprod{\sum_{i=1}^n \frac{\innerprod{z, \zeta_i}_{\cH}}{\sqrt{\lambda_i}}  \zeta_i, Q^{-\frac{1}{2}}h}_{\cH}}$ is well-defined in $L_2(\mu)$.
\end{remark}

\begin{proof}
    It is sufficient to show 

    \begin{equation} \label{eq:fomin_target}
        \int_{\cH} \partial_h F(x) d \mu(x) =  -\int_{\cH}F(x)\innerprod{\kappa^{-1}(m-x), h}_{\cH_Q} d\mu(x)
    \end{equation}
    
    Recall that $\{\zeta_i\}$ is an orthonormal basis of $\cH$ consisting of eigenvectors of $Q$. Let the corresponding eigenvalues be $\{\lambda_i \}$, and $F$ is a cylindrical function. Define the projection operator $P_k: \cH_Q \to \RR^d, P_k(x) = [\innerprod{x, \zeta_1}, \dots, \innerprod{x, \zeta_k}]^{tr}$. Let $Y = P_k(X)$ where $X \sim \cN(0, C)$ . Then $Y \sim \mu^{(k)} \coloneq \cN(m^{(k)}, \Sigma^{(k)})$ where $m^{(k)} \in \RR^k, \Sigma^{(k)}\in \RR^{k \times k}$, $m^{(k)}_i = \innerprod{m, \zeta_i}$, and $\Sigma^{(k)}_{ij}=\innerprod{C\zeta_j, \zeta_i}$, following Lemma 2.2.2 of \citet{bogachev1998gaussian} and the characterizations of Gaussian measure on Hilbert space in \citet{na2025probabilityflow}. We first evaluate the Gâteaux differential $\partial_h F(x) $: 

    \begin{align}
        \partial_h F(x) 
        & = \lim_{\delta \to 0} \frac{F(x+\delta h ) - F(x)}{\delta} \\
        & = \lim_{\delta \to 0} \frac{f(P_k(x+\delta h) ) - f(P_k(x))}{\delta} \\
        & = \lim_{\delta \to 0} \frac{f(P_k(x)+\delta P_k(h) ) - f(P_k(x))}{\delta} \\
        & = \partial_{P_k(h)} f(P_k(x)) \\
        & = \innerprod{\nabla f(P_k(x)) , P_k(h)}_{\RR^k},
    \end{align}
    where the last equality follows the property of Gâteaux differential (Chapter 2 appendix, \citep{coleman2012calcnorm}).

    Hence,
    \begin{align}
        \int_{\cH} \partial_h F(x) d \mu(x) 
        &= \int_{\cH} \innerprod{\nabla f(P_k(x)) , P_k(h)}_{\RR^k} d \mu(x) \\
        &= \int_{\RR^k} \innerprod{\nabla f(y) , P_k(h)}_{\RR^k} d \mu^{(k)}(y) \\
        &= \innerprod{\EE_{Y\sim  \mu^{(k)}} [\nabla f(Y)], P_k(h)}_{\RR^k}.
    \end{align}

    Recall that $\Sigma^{(k)}_{ij}=\innerprod{C\zeta_j, \zeta_i} = \lambda_i \mathbbm{1}_{i = j}$ and $C$ is strictly positive. So $\Sigma^{(k)}$ is invertible. By applying Stein's Identity \citep{liu2016Stein}, we have 
    $$
    \EE_{Y\sim  \mu^{(k)}} [\nabla f(Y)] = [\Sigma^{(k)}]^{-1} \EE_{Y\sim  \mu^{(k)}}[f(Y)(Y-m^{(k)})].
    $$

    Hence,
    \begin{align}
        \int_{\cH} \partial_h F(x) d \mu(x)
        &= \innerprod{\EE_{Y\sim  \mu^{(k)}} [\nabla f(Y)], P_k(h)}_{\RR^k} \\
        &= \innerprod{[\Sigma^{(k)}]^{-1} \EE_{Y\sim  \mu^{(k)}}[f(Y)(Y-m^{(k)})], P_k(h)}_{\RR^k} \\
        &= \EE_{Y\sim  \mu^{(k)}} \brs{ f(Y) \innerprod{[\Sigma^{(k)}]^{-1} (Y-m^{(k)}), P_k(h)}_{\RR^k}}.
    \end{align}

    Now that the left-hand side of \eqref{eq:fomin_target} has been reduced to an expectation of an inner product in $\RR^k$, we apply a similar procedure to the right-hand side.
    Let $\pi_k h = \sum_{i=1}^k \innerprod{h, \zeta_i}_{\cH} \zeta_i$ and $h^{\perp} \coloneq h - \pi_k h$. Then, 
    \begin{align}
        &\EE_{X\sim\mu}\brs{F(X)\innerprod{\kappa^{-1}(X-m), h}_{\cH_Q}} \\
        = & \EE_{X\sim\mu}\brs{F(X)\innerprod{\kappa^{-1}(X-m), \pi_k h}_{\cH_Q}} +  \EE_{X\sim\mu}\brs{F(X)\innerprod{\kappa^{-1}(X-m), h^{\perp}}_{\cH_Q}}.
    \end{align}

    It is sufficient to prove \eqref{eq:fomin_target} by showing
    \begin{equation} \label{eq:fomin_rhs1} \EE_{Y\sim  \mu^{(k)}} \brs{ f(Y) \innerprod{[\Sigma^{(k)}]^{-1} (Y-m^{(k)}), P_k(h)}_{\RR^k}} = \EE_{X\sim\mu}\brs{F(X)\innerprod{\kappa^{-1}(X-m), \pi_k h}_{\cH_Q}} \end{equation} 
    and 
    \begin{equation} \label{eq:fomin_rhs2} \EE_{X\sim\mu}\brs{F(X)\innerprod{\kappa^{-1}(X-m), h^{\perp}}_{\cH_Q}} = 0.\end{equation}
    
    We first show \eqref{eq:fomin_rhs1}. Let $S^{(k)} = \innerprod{\kappa^{-1}(X-m), \pi_k h}_{\cH_Q}$. Hence,
    \begin{align}
        S^{(k)} &= \innerprod{\kappa^{-1}Q^{-\frac{1}{2}}(X-m), Q^{-\frac{1}{2}} \pi_k h}_{\cH} \\
        &= \innerprod{X-m, \kappa^{-1} Q^{-1} \pi_k h}_{\cH}.
    \end{align}
    Note that $ Q^{-1} \pi_k h$ is well defined as it is supported on only a finite number of eigenvectors. So $S^{(k)}$ is a Gaussian random variable on $\RR$, by definition of Gaussian measure on $\cH$.

    Following the definition of the covariance operator $Q$ (Lemma 2.15 of \citet{Da_Prato_Zabczyk_2014}),
    \begin{align}
        \text{Cov}(S^{(k)}, Y_j - m_j^{(k)})
        &= \EE_{X\sim \mu} [\innerprod{ X-m, \kappa^{-1} Q^{-1} \pi_k h}_{\cH} \innerprod{X-m, \zeta_j}_{\cH}] \\
        &= \innerprod{\kappa Q(\kappa^{-1} Q^{-1}) \pi_k h, \zeta_j}_{\cH} \\
        &= \innerprod{ \pi_k h, \zeta_j}_{\cH} \\
        &= \innerprod{h, \zeta_j}_{\cH}.
    \end{align}

    Hence, we can apply the conditional mean estimator for a joint Gaussian distribution (Theorem 3 of \citet{holt2023bayesian}),
    \begin{align}
        \EE[S^{(k)}\mid Y] 
        &= \EE[S^{(k)}] + [\innerprod{h, \zeta_1}_{\cH}, \dots, \innerprod{h, \zeta_k}_{\cH}][\Sigma^{(k)}]^{-1} (Y-m^{(k)}) \\
        &= \innerprod{[\Sigma^{(k)}]^{-1} (Y-m^{(k)}), P_k(h)}_{\RR^k}.
    \end{align}
    Note $F(X)$ is measurable by $\sigma(Y)$.
    \begin{align}
        \EE_{X\sim\mu}[F(X)S^{(k)}] &= \EE_{Y\sim \mu^{(k)}} [\EE[F(X) S^{(k)}\mid Y]] \\
        &= \EE_{Y\sim \mu^{(k)}} [f(Y)\EE[S^{(k)}\mid Y]] \\
        &= \EE_{Y\sim \mu^{(k)}} \brs{f(Y) \innerprod{[\Sigma^{(k)}]^{-1} (Y-m^{(k)}), P_k(h)}_{\RR^k} }.
    \end{align}

    It remains to show \eqref{eq:fomin_rhs2}. Similarly, we define $S^{\perp} \coloneq \innerprod{\kappa^{-1}(X-m),  h^{\perp}}_{\cH_Q} = \innerprod{\kappa^{-1} Q^{-\frac{1}{2}}(X-m),   Q^{-\frac{1}{2}}h^{\perp}}_{\cH}$. Then,
    \begin{align}
        S^{\perp} &= \innerprod{\kappa^{-1} \sum_{i=1}^{\infty} \frac{\innerprod{X-m,\zeta_i}_{\cH}}{\sqrt{\lambda_i}} \zeta_i, \sum_{i > k} \frac{\innerprod{h^{\perp}, \zeta_i}_{\cH}}{\sqrt{\lambda_i}} \zeta_i}_{\cH} \\
        &= \kappa^{-1} \sum_{i > k} \frac{1}{\lambda_i} \innerprod{X-m, \zeta_i}_{\cH} \innerprod{h^{\perp}, \zeta_i}_{\cH}
    \end{align}

    For $j \leq k$,

    \begin{align}
        \text{Cov} (S^{\perp}, Y_j-m_j^{(k)})
        &= \EE_{X\sim \mu} \brs{\kappa^{-1} \sum_{i > k} \frac{1}{\lambda_i} \innerprod{X-m, \zeta_i}_{\cH} \innerprod{h^{\perp}, \zeta_i}_{\cH} \innerprod{X-m, \zeta_j}_{\cH}} \\
        & = \kappa^{-1} \sum_{i > k} \frac{1}{\lambda_i} \innerprod{h^{\perp}, \zeta_i}_{\cH} \EE_{X\sim \mu}  \brs{\innerprod{X-m, \zeta_i}_{\cH} \innerprod{X-m, \zeta_j}_{\cH}} \\
        & = \kappa^{-1} \sum_{i > k} \frac{1}{\lambda_i} \innerprod{h^{\perp}, \zeta_i}_{\cH}  \innerprod{\zeta_i, \zeta_j}_{\cH} \\
        & = 0.
    \end{align}

     Again, by Theorem 3 of \citet{holt2023bayesian},
    \begin{align}
        \EE[S^{\perp}\mid Y] 
        &= \EE[S^{\perp}] + [\text{Cov} (S^{\perp}, Y_1-m_1^{(k)}), \dots, \text{Cov} (S^{\perp}, Y_k-m_k^{(k)})][\Sigma^{(k)}]^{-1} (Y-m^{(k)}) = 0.
    \end{align}
    So $\EE_{X\sim\mu}[F(X)S^{\perp}] = \EE_{Y\sim \mu^{(k)}}[\EE[f(Y) S^{\perp}\mid Y]] = \EE_{Y\sim \mu^{(k)}}[f(Y)\EE[ S^{\perp}\mid Y]] = 0$.

    Therefore, 
    $$ \int_{\cH} \partial_h F(x) d \mu(x) =  -\int_{\cH}F(x)\innerprod{\kappa^{-1}(m-x), h}_{\cH_Q} d\mu(x).$$
\end{proof}

We move on to prove Proposition \ref{prop:pfode}.

\begin{proof}[Proof of Proposition \ref{prop:pfode}:]
Since $\sigma_t$ is bounded, the assumptions of Theorem 7.2 of \citep{Da_Prato_Zabczyk_2014} are satisfied, and therefore the SDE has a unique \emph{mild} solution, which is a random process $\{Y_t\}$ satisfying 

\begin{equation}
    Y_t = Y_0 + \int_0^t - \frac{\sigma_s}{2} Y_s ds + \int_0^t \sqrt{\sigma_s} d W_s.
\end{equation}

Let $M_t = \exp{(\int_0^t \frac{\sigma_s}{2} ds)}$. Let $\{\zeta_k \}$ be a orthonormal basis of $\cH$. By Ito's lemma (Theorem 4.32 from \citep{Da_Prato_Zabczyk_2014}),

\begin{align}
    \innerprod{M_t Y_t, \zeta_k} &=  \innerprod{M_0 Y_0, \zeta_k} + \int_0^t \innerprod{M_s \zeta_k, \sqrt{\sigma_s} d W_s} + \int_0^t \innerprod{\frac{d M_s}{ds}Y_s, \zeta_k} + \innerprod{M_t \zeta_k, -\frac{\sigma_s}{2} Y_s} ds \\
    & =  \innerprod{Y_0, \zeta_k} + \int_0^t \innerprod{M_s \zeta_k, \sqrt{\sigma_s} d W_s} \\
    & = \innerprod{Y_0 + \int_0^t M_s \sqrt{\sigma_s} d W_s, \zeta_k },
\end{align}

where the last equality follows Proposition 4.30 from \citep{Da_Prato_Zabczyk_2014}.

Therefore, $Y_t = \exp\prs{-\frac{1}{2}\int_0^t \sigma_s ds} Y_0 + \int_0^t \exp\prs{-\frac{1}{2}\int_s^t \sigma_{\tau} d\tau} \sqrt{\sigma_s} dW_s$.

Following Proposition 4.28 of \cite{Da_Prato_Zabczyk_2014},

$$Y_t \mid Y_0 \sim \cN\prs{\eta(t) Y_0, \kappa(t)Q},$$

where $\eta(t)=\exp\prs{-\frac{1}{2}\int_0^t \sigma_s ds}$ and $\kappa(t) = \int_0^t \exp\prs{-\int_s^t \sigma_{\tau} d\tau} {\sigma_s}ds$.

Recall that $\mu_{t \mid Y_0}$ is the conditional distribution of $Y_t|Y_0$ . Following Lemma~\ref{lemma:gaussianLogGradient}, we have

\begin{equation}
    \rho_{\cH_Q}^{\mu_{t \mid Y_0}}(y) = \frac{\eta(t) Y_0-y}{\kappa(t)}.
\end{equation}

Also, 
\begin{equation}
    Y_t \stackrel{d}{=}  \eta(t) Y_0 + \sqrt{\kappa(t)} U
\end{equation}
where $U\sim \cN(0, Q)$.

Recall that
$$
    Y'_t = \eta(t) Y'_0 + \sqrt{\kappa(t)} U
$$
where $Y'_0 \sim P_{\text{data}}$ and $U \sim \cN(0, Q)$ are sampled independently.

\citet{na2025probabilityflow} aim to solve  
$$\min_{S: [0,1] \times \cH \to \cH} \int_0^1 \EE_{Y_0 \sim P_{\text{data}}} \EE_{Y_t \sim \mu_{t \mid Y_0}} \norm{S(t, Y_t) - \rho_{\cH_Q}^{\mu_{t \mid Y_0}}(Y_t)}^2 dt,$$
where  $S: [0,1] \times \cH$ to $\cH$ is measurable, and in practice parametrized by a neural network. This is equivalent to 

\begin{equation}
    \min_{S: [0,1] \times \cH \to \cH} \int_0^1 \EE_{Y'_0 \sim P_{\text{data}}} \EE_{Y'_t \sim \mu_{t \mid Y'_0}} \norm{S(t, Y'_t) - \rho_{\cH_Q}^{\mu_{t \mid Y'_0}}(Y'_t)}^2 dt,
\end{equation}

because the joint distribution $(Y_0, Y_t) \stackrel{d}{=} (Y'_0, Y'_t)$.

This is a minimum mean squared error problem whose solution is the conditional mean

\begin{align}
    S^*(t, y) &= \EE\brs{ \rho_{\cH_Q}^{\mu_{t \mid Y'_0}}(Y'_t) \mid Y'_t = y} \\ 
    &= \EE\brs{\frac{\eta(t) Y'_0-Y'_t}{\kappa(t)} \mid Y'_t=y}\\
    & = \EE \brs{-\frac{U}{\sqrt{\kappa(t)}} \mid Y'_t=y}.
\end{align}


Note that $\frac{d}{dt}\sqrt{\kappa(t)} =\frac{\dot{\kappa}(t)}{2\sqrt{\kappa(t)}}$, $\dot{\eta}(t)=-\frac{\sigma_t}{2}\eta(t)$, and

\begin{align}
    \kappa(t) &= \eta^2(t) \int_0^t \eta^{-2}(s) \sigma_s ds, \\
    \dot{\kappa}(t) & = 2 \eta(t) \dot{\eta}(t) \int_0^t \eta^{-2} (s) \sigma_s ds + \eta^2(t) \prs{\eta^{-2}(t) \sigma_t} \\
    & = - \sigma_t \eta^2(t) \int_0^t \eta^{-2} (s) \sigma_s ds +  \sigma_t \\
    & = \sigma_t (1 - \kappa(t)).
\end{align}

Hence,

\begin{align}
\dot Y'_t 
      &= \dot{\eta}(t)\,Y'_0 
         + \frac{\dot{\kappa}(t)}{2\sqrt{\kappa(t)}}\,U \\[6pt]
      &= \Bigl(-\frac{\sigma_t}{2}\,\eta(t)\Bigr)Y'_0 \;+\;
         \frac{\sigma_t\!\left[1-\kappa(t)\right]}{2\sqrt{\kappa(t)}}\,U \\[6pt]
      &= -\frac{\sigma_t}{2}\bigl[\eta(t)\,Y'_0
                                  +\sqrt{\kappa(t)}\,U\bigr]
         \;+\;
         \frac{\sigma_t}{2}\,\frac{U}{\sqrt{\kappa(t)}} \\[6pt]
      &= -\frac{\sigma_t}{2}\,Y'_t
         \;-\;
         \frac{\sigma_t}{2}\!
           \Bigl(-\frac{U}{\sqrt{\kappa(t)}}\Bigr).
\end{align}

Therefore,

$$ \dot{Y}'_t = -\frac{1}{2}\sigma_t Y'_t - \frac{1}{2}\sigma_t (-\frac{U}{\sqrt{\kappa(t)}}).$$

So equation~(\ref{eqn:pfode}) is equivalent to
\begin{equation}
dY_t = -\frac{\sigma_{t}}{2} (Y_t + S^*(t, Y_t)) dt = \EE_{} [\dot{Y}'_t \mid Y'_t=Y_t] dt. 
\end{equation}\qedhere
\end{proof}

\section{Convex transport costs and the straightening effect}
\label{sec:supRFproperties}

In finite-dimensional spaces, rectified flows are known to reduce transportation costs and exhibit a straightening effect, both desirable properties in generative modeling \citep{liu2022flow}. These properties extend naturally to the setting of infinite-dimensional Hilbert spaces and follow from the marginal-preserving property established in Theorem~\ref{thm:mainMrginalPreservation}. The proofs largely mirror those in the finite-dimensional case and involve routine but lengthy bookkeeping. We begin by defining the $\operatorname{RectFlow}(\cdot)$ and $\operatorname{Rectify}(\cdot)$ operations.

\begin{defi} 
     Let $X_0$ and $X_1$ be $\cH$-valued random variables. A coupling $\left(X_0, X_1\right)$ is called \emph{rectifiable} if its linear interpolation process $\XX=\left\{t X_1+\right.$ $\left.(1-t) X_0: t \in[0,1]\right\}$ is rectifiable. In this case, the $\ZZ=\left\{Z_t: t \in[0,1]\right\}$ in Equation (\ref{eqn:rectifiedflow}) is called the \textbf{rectified flow} of coupling $\left(X_0, X_1\right)$, denoted as $\ZZ=\operatorname{RectFlow}\left(\left(X_0, X_1\right)\right)$, and $\left(Z_0, Z_1\right)$ is called the \emph{rectified coupling} of $\left(X_0, X_1\right)$, denoted as $\left(Z_0, Z_1\right)=\operatorname{Rectify}\left(\left(X_0, X_1\right)\right)$.
\end{defi}

For any convex function $c$, The following theorem establishes that the $\operatorname{Rectify}(\cdot)$ operator does not increase the transport cost $\EE[c(Z_1 - Z_0)]$, which corresponds to a special case of Monge's formulation of optimal transport (Section 1.2 of \cite{ambrosio2024lectures}). It generalizes Theorem 3.5 from \cite{liu2022flow} to the Hilbert space setting.
\begin{thm}
    Let $X_0, X_1$ be $\cH$-valued random variables such that the pair $(X_0, X_1)$ is rectifiable, and define $(Z_0, Z_1) = \operatorname{Rectify}(X_0, X_1)$. Then for any convex function $c : \cH \to \mathbb{R}$,
    $$  \mathbb{E}\left[c(Z_1 - Z_0)\right] \leq \mathbb{E}\left[c(X_1 - X_0)\right]. $$
\end{thm}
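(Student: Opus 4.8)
The plan is to prove the inequality by a double application of Jensen's inequality, bridged by the marginal-preserving property of Theorem~\ref{thm:mainMrginalPreservation}. Throughout, write $\XX = \{X_t\}$ for the linear interpolation $X_t = tX_1 + (1-t)X_0$, so that $\dot X_t = X_1 - X_0$ and hence $v^{\XX}(t, X_t) = \EE[X_1 - X_0 \mid X_t]$ by the definition of the expected velocity; and let $\{Z_t\}$ be the induced rectified flow, whose paths are $C^1$ by Assumption~\ref{assumption:ivp} and satisfy $\dot Z_t = v^{\XX}(t, Z_t)$, so that $Z_1 - Z_0 = \int_0^1 v^{\XX}(s, Z_s)\,ds$. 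We may assume $\EE[c(X_1 - X_0)] < \infty$, since otherwise the claim is vacuous.

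First I would apply Jensen's inequality for Bochner integrals pathwise: for (almost) every realization of $\{Z_t\}$, since $s \mapsto v^{\XX}(s, Z_s) = \dot Z_s$ is a continuous, hence Bochner-integrable, $\cH$-valued function on $[0,1]$ equipped with normalized Lebesgue measure, and $c$ is convex,
\[
c(Z_1 - Z_0) = c\!\left( \int_0^1 v^{\XX}(s, Z_s)\, ds \right) \le \int_0^1 c\big( v^{\XX}(s, Z_s) \big)\, ds.
\]
Taking expectations and exchanging $\EE$ with $\int_0^1 ds$ (justified by Tonelli after subtracting an affine minorant of $c$, using the integrability conditions in Assumption~\ref{assumption:finite}) gives $\EE[c(Z_1 - Z_0)] \le \int_0^1 \EE\big[ c(v^{\XX}(s, Z_s)) \big]\, ds$. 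By Theorem~\ref{thm:mainMrginalPreservation}, $Z_s \overset{d}{=} X_s$ for every $s \in [0,1]$, so $\EE[c(v^{\XX}(s, Z_s))] = \EE[c(v^{\XX}(s, X_s))]$.

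Next I would apply Jensen's inequality for conditional expectations of Hilbert-space-valued random variables: since $v^{\XX}(s, X_s) = \EE[X_1 - X_0 \mid X_s]$ and $X_1 - X_0$ is Bochner integrable (Assumption~\ref{assumption:finite} forces $\EE\norm{X_1 - X_0} < \infty$),
\[
c\big( v^{\XX}(s, X_s) \big) = c\big( \EE[X_1 - X_0 \mid X_s] \big) \le \EE\big[ c(X_1 - X_0) \mid X_s \big],
\]
and taking expectations yields $\EE[c(v^{\XX}(s, X_s))] \le \EE[c(X_1 - X_0)]$ for every $s \in [0,1]$. Integrating this bound over $s \in [0,1]$ and combining with the previous display gives $\EE[c(Z_1 - Z_0)] \le \int_0^1 \EE[c(X_1-X_0)]\,ds = \EE[c(X_1 - X_0)]$, as claimed.

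The main obstacle is purely technical: justifying the two Jensen inequalities in infinite dimensions. A real-valued convex function on an infinite-dimensional Hilbert space need not be continuous, so to invoke the Bochner and conditional forms of Jensen's inequality rigorously one should either assume $c$ is lower semicontinuous (equivalently, a supremum of continuous affine functionals) — the natural and standard hypothesis in this setting — or restrict attention to the continuous convex functions that arise in practice. Under lower semicontinuity, $c$ admits a global affine minorant $\ell(x) = \innerprod{x, a} + b$, which simultaneously makes $\EE[c(\cdot)]$ well defined in $(-\infty, +\infty]$ and, applied to $c - \ell \ge 0$, legitimizes the Tonelli exchange of $\EE$ and $\int_0^1 ds$ (here the first integrability condition of Assumption~\ref{assumption:finite} controls $\int_0^1 \EE\norm{v^{\XX}(s, X_s)}\,ds$); measurability of $(s, \omega) \mapsto v^{\XX}(s, Z_s(\omega))$ follows from the pathwise continuity of the rectified flow. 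Beyond these measure-theoretic checks, the argument is a direct transcription of the finite-dimensional proof of Theorem~3.5 in \citet{liu2022flow}, and the straightening-type corollaries should follow by the same bookkeeping.
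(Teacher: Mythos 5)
Your proof is correct and follows the same two-step Jensen argument as the paper: a pathwise Jensen inequality for the Bochner integral $\int_0^1 v^{\XX}(s,Z_s)\,ds$, the marginal-preserving property of Theorem~\ref{thm:mainMrginalPreservation} to replace $Z_s$ by $X_s$ in distribution, and then conditional Jensen applied to $v^{\XX}(s,X_s)=\EE[X_1-X_0\mid X_s]$. You are more explicit than the paper about the Tonelli exchange and the lower-semicontinuity of $c$ needed for Jensen's inequality in an infinite-dimensional Hilbert space, but these are technical points the paper's terser proof also relies on implicitly, so the approach is the same.
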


\begin{proof}
    \begin{align}
        \mathbb{E}\left[c\left(Z_1-Z_0\right)\right] & =\mathbb{E}\left[c\left(\int_0^1 v^{\XX}\left(t, Z_t\right) \mathrm{d} t\right)\right] \\
        & \leq \mathbb{E}\left[\int_0^1 c\left(v^{\XX}\left(t, Z_t\right)\right) \mathrm{d} t\right]  \label{ineq:Jensen1}\\
        & =\mathbb{E}\left[\int_0^1 c\left(v^{\XX}\left(t, X_t\right)\right) \mathrm{d} t\right] \\
        & =\mathbb{E}\left[\int_0^1 c\left(\mathbb{E}\left[\left(X_1-X_0\right) \mid X_t\right]\right) \mathrm{d} t\right] \\
        & \leq \mathbb{E}\left[\int_0^1 \mathbb{E}\left[c\left(X_1-X_0\right) \mid X_t\right] \mathrm{d} t\right]  \label{ineq:Jensen2} \\
        & =\int_0^1 \mathbb{E}\left[c\left(X_1-X_0\right)\right] \mathrm{d} t \\
        & =\mathbb{E}\left[c\left(X_1-X_0\right)\right],
    \end{align}
    where we apply Jensen's inequality at \eqref{ineq:Jensen1} and \eqref{ineq:Jensen2}.
\end{proof}


Following \cite{liu2022flow}, we define a coupling $(X_0, X_1)$ to be \emph{straight} if it is a fixed point of the $\operatorname{Rectify}(\cdot)$ operation. Such couplings are practically useful, as their associated process is linear and can be simulated exactly in a single step. As a direct extension of Theorem 3.6 from \cite{liu2022flow} to the Hilbert space setting, the following theorem characterizes this property.

\begin{thm}
    Let $X_0, X_1$ be $\cH$-valued random variables such that the pair $(X_0, X_1)$ is rectifiable. Let $X_t=t X_1+(1-t) X_0$ and $\ZZ=\operatorname{RectFlow}\left(\left(X_0, X_1\right)\right)$. The following statements are equivalent.
    \begin{enumerate}
        \item $\exists$ a strictly convex function $c: \cH \rightarrow \mathbb{R}$, such that $\mathbb{E}\left[c\left(Z_1-Z_0\right)\right]=\mathbb{E}\left[c\left(X_1-X_0\right)\right]$.
        \item $\left(X_0, X_1\right)$ is a fixed point of $\operatorname{Rectify}(\cdot)$, that is, $\left(X_0, X_1\right)=\left(Z_0, Z_1\right)$.
        \item The rectified flow coincides with the linear interpolation process: $\XX=\ZZ$.
        \item The paths of the linear interpolation $\XX$ do not intersect:
            $$ V\left( (X_0, X_1) \right):=\int_0^1 \mathbb{E}\left[\left\|X_1-X_0-\mathbb{E}\left[X_1-X_0 \mid X_t\right]\right\|^2\right] \mathrm{d} t=0. $$
    \end{enumerate}
    Note that $V\left( (X_0, X_1) \right)=0$ indicates that $X_1-X_0=\mathbb{E}\left[X_1-X_0 \mid X_t\right]$ almost surely when $t \sim$ Uniform $([0,1])$, meaning that the lines passing through each $X_t$ is unique, and hence no linear interpolation paths intersect.
    
\end{thm}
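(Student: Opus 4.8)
The plan is to prove the four conditions equivalent by establishing the cycle $(4)\Rightarrow(3)\Rightarrow(2)\Rightarrow(1)\Rightarrow(4)$, adapting Theorem~3.6 of \citet{liu2022flow}; the only genuinely infinite-dimensional ingredients needed are the marginal-preserving property (Theorem~\ref{thm:mainMrginalPreservation}) and the well-posedness Assumption~\ref{assumption:ivp}, both already in hand.

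\emph{$(4)\Rightarrow(3)$:} if $V\big((X_0,X_1)\big)=0$ then, since the integrand is nonnegative, for a.e.\ $t$ we have $v^{\XX}(t,X_t)=\EE[X_1-X_0\mid X_t]=X_1-X_0=\dot{X_t}$ almost surely; by Fubini it follows that almost surely the (continuous) path $t\mapsto X_t$ satisfies $X_t=X_0+\int_0^t v^{\XX}(s,X_s)\,ds$ with $X_0\sim\mu_0$. By Assumption~\ref{assumption:ivp} the solution map $\Phi$ exists, so $\XX=\Phi(X_0)$ a.s.\ and $\mathrm{Law}(\XX)=\Phi_{\#}\mu_0=\mathrm{Law}(\ZZ)$, i.e.\ $\XX\overset{d}{=}\ZZ$. \emph{$(3)\Rightarrow(2)$} is immediate, pushing the common path law forward under $z\mapsto(z(0),z(1))$. \emph{$(2)\Rightarrow(1)$:} rectifiability forces $\EE\norm{X_1-X_0}<\infty$ (by Assumption~\ref{assumption:finite}, since $\dot{X_t}\equiv X_1-X_0$), so $c(x):=\sqrt{1+\norm{x}^2}$ — strictly convex because its restriction to any finite-dimensional subspace has positive-definite Hessian — has $\EE[c(X_1-X_0)]<\infty$ and likewise $\EE[c(Z_1-Z_0)]<\infty$; as $(Z_0,Z_1)\overset{d}{=}(X_0,X_1)$ these expectations coincide.

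\emph{$(1)\Rightarrow(4)$} is the crux. Let $c$ be the strictly convex function from $(1)$, with common finite value $\EE[c(X_1-X_0)]$. Reprising the chain of (in)equalities from the proof of the transport-cost theorem above (Jensen in $t$, marginal preservation $Z_t\overset{d}{=}X_t$, and conditional Jensen),
\[
\EE\left[c(Z_1-Z_0)\right]\;\le\;\EE\left[\int_0^1 c\left(\EE\left[X_1-X_0\mid X_t\right]\right)dt\right]\;\le\;\EE\left[c(X_1-X_0)\right],
\]
and equality of the extremes forces equality throughout. For the conditional-Jensen step, $g(t):=\EE\big[c\big(\EE[X_1-X_0\mid X_t]\big)\big]$ satisfies $g(t)\le\EE[c(X_1-X_0)]$ for every $t$ with $\int_0^1 g(t)\,dt=\EE[c(X_1-X_0)]$, hence $g(t)=\EE[c(X_1-X_0)]$ for a.e.\ $t$; together with the a.s.\ pointwise bound $c\big(\EE[X_1-X_0\mid X_t]\big)\le\EE\big[c(X_1-X_0)\mid X_t\big]$, this gives $c\big(\EE[X_1-X_0\mid X_t]\big)=\EE\big[c(X_1-X_0)\mid X_t\big]$ a.s., for a.e.\ $t$. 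The equality case of conditional Jensen for a strictly convex $c$ then yields $X_1-X_0=\EE[X_1-X_0\mid X_t]$ a.s., for a.e.\ $t$, which is exactly $V\big((X_0,X_1)\big)=0$.

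The main obstacle is this last equality-case argument for $\cH$-valued random variables. The plan: assume $c$ is continuous (lower semicontinuity suffices, since a finite lsc convex function on $\cH$ is continuous and subdifferentiable everywhere by a Baire-category argument); by separability of $\cH$, choose a $\sigma(X_t)$-measurable subgradient $\xi$ of $c$ at $m:=\EE[X_1-X_0\mid X_t]$, so that $c(X_1-X_0)\ge c(m)+\innerprod{\xi,\,(X_1-X_0)-m}$ pointwise; taking $\EE[\,\cdot\mid X_t]$ kills the linear term, so equality in conditional Jensen forces $c(X_1-X_0)=c(m)+\innerprod{\xi,\,(X_1-X_0)-m}$ a.s.; and a strictly convex function lies strictly above each supporting affine functional away from the contact point (a one-line midpoint argument), so $X_1-X_0=m$ a.s. It should also be recorded that $(1)$ is read with the common expectation finite. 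Finally, the closing remark follows at once: $V\big((X_0,X_1)\big)=0$ is equivalent to $X_1-X_0=\EE[X_1-X_0\mid X_t]$ a.s.\ for a.e.\ $t$ (equivalently, for $t\sim\mathrm{Uniform}([0,1])$), so the velocity of the interpolation through $X_t$ is $\mu_t$-a.s.\ unique and distinct interpolation lines cannot intersect.
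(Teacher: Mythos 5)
Your proof follows essentially the same approach as the paper's: it establishes the same cycle of implications, uses the transport-cost inequality chain together with strict convexity for $(1)\Rightarrow(4)$, and invokes the ODE uniqueness of Assumption~\ref{assumption:ivp} for $(4)\Rightarrow(3)$. The extra detail you supply --- an explicit strictly convex $c$ with finite expectation for $(2)\Rightarrow(1)$, and the measurable-subgradient treatment of the equality case of conditional Jensen in Hilbert space, including the observation that some regularity on $c$ (lower semicontinuity, hence continuity) is implicitly required for subdifferentials to exist --- is correct and genuinely clarifies steps the paper dispatches with ``clearly'' and ``since $c$ is strictly convex.''
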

\begin{proof}
    Clearly, $3 \implies 2 \implies 1$. 

    $1 \implies 4$: $\mathbb{E}\left[c\left(Z_1-Z_0\right)\right]=\mathbb{E}\left[c\left(X_1-X_0\right)\right] \implies $ the equalities are achieved in inequalities~\eqref{ineq:Jensen1} and ~\eqref{ineq:Jensen2}. Since $c$ is strictly convex, $(X_1 - X_0) | X_t$ is a constant almost surely in $\XX$ and $t \in [0, 1]$. So $\EE[X_1 - X_0 | X_t] = \EE[X_1 - X_0]$  almost surely in $\XX$ and $t \in [0, 1]$ and $V(\XX) = 0$.

    $4 \implies 3:$ $V(\XX) = 0 \implies \EE[X_1 - X_0 | X_t] = \EE[X_1 - X_0]$  almost surely in $\mu_t$ and $t \in [0, 1]$. We have $\int_0^s\left(X_1-X_0\right) \mathrm{d} t=\int_0^s \mathbb{E}\left[X_1-X_0 \mid X_t\right] \mathrm{d} t=\int_0^s v^{\XX}\left(t, X_t\right) \mathrm{d} t$ for $s \in(0,1]$. Hence, 
    \begin{equation}
        X_t=X_0+\int_0^t\left(X_1-X_0\right) \mathrm{d} t=X_0+\int_0^t v^{\XX}\left(s, X_s\right) \mathrm{d} s.
    \end{equation}
    By definition, $Z_t$ also satisfy 
    \begin{equation}
        Z_t = X_0+\int_0^t v^{\XX}\left(s, X_t\right) \mathrm{d} s.
    \end{equation}
    By Assumption~\ref{assumption:ivp}, for any fixed sample $\omega$, the solution is unique, $i.e.$, $\XX(\cdot, \omega) = \ZZ(\cdot, \omega)$. So $\XX = \ZZ$.
\end{proof}

    We next present a Hilbert space analogue of Theorem 3.7 from \citet{liu2022flow}, which shows that repeated application of the rectified flow progressively improves the alignment of the coupling, resulting in more linear trajectories and reduced path overlap. Following \citet{liu2022flow}, the straightness of a continuously differentiable process $\ZZ = \{Z_t\}$ is measured by by 

    $$S(\ZZ) \coloneq \int_0^1 \EE \brs{  \norm{(Z_1 - Z_0) - \dot{Z_t}}^2  } dt.$$

\begin{thm} \label{thm:straightening}
    Let $X_0, X_1$ be $\cH$-valued random variables such that the pair $(X_0, X_1)$ is rectifiable. Let $\ZZ^k$ the $k$-th rectified flow of $\left(X_0, X_1\right)$, that is, $\ZZ^{k+1}=\operatorname{RectFlow}\left(\left(Z_0^k, Z_1^k\right)\right)$ and $\left(Z_0^0, Z_1^0\right)=\left(X_0, X_1\right)$. Assume each $\left(Z_0^k, Z_1^k\right)$ is rectifiable for $k=0, \ldots, K$.
Then $$
\sum_{k=0}^K S\left(\ZZ^{k+1}\right)+V\left(\left(Z_0^k, Z_1^k\right)\right) \leq \mathbb{E}\left[\left\|X_1-X_0\right\|^2\right].
$$

Hence, if $\mathbb{E}\left[\left\|X_1-X_0\right\|^2\right]<+\infty$, we have $\min _{k \leq K}S\left(\ZZ^k\right)+V\left(\left(Z_0^k, Z_1^k\right)\right)=O(1 / K).$ 
\end{thm}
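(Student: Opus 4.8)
The plan is to reduce everything to an \emph{exact} per-step identity,
\begin{equation} \label{eqn:perstep_plan}
S\bigl(\ZZ^{k+1}\bigr) + V\bigl((Z_0^k, Z_1^k)\bigr) = \EE\brs{\norm{Z_1^k - Z_0^k}^2} - \EE\brs{\norm{Z_1^{k+1} - Z_0^{k+1}}^2},
\end{equation}
after which both claims follow by telescoping and an averaging (pigeonhole) step. We may assume $\EE[\norm{X_1 - X_0}^2] < \infty$, since otherwise the first inequality is vacuous; under this assumption, applying the transport-cost reduction theorem proved earlier in this section with $c(\cdot) = \norm{\cdot}^2$ inductively (each $(Z_0^k, Z_1^k)$ is rectifiable by hypothesis) shows $\EE[\norm{Z_1^k - Z_0^k}^2] < \infty$ for every $k \le K+1$, and Jensen's inequality then forces $\int_0^1 \EE[\norm{\dot Z_t^{k+1}}^2]\, dt < \infty$ as well. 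These finiteness facts are what legitimize the Tonelli/Fubini interchanges below.

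The core is the single-step computation. Fix $k$ and write $\XX^k = \{X_t^k\}$ with $X_t^k = t Z_1^k + (1-t) Z_0^k$, so $\dot X_t^k = Z_1^k - Z_0^k$ for all $t$ and $v^{\XX^k}(t, X_t^k) = \EE[\dot X_t^k \mid X_t^k]$. Conditioning on $X_t^k$ and using the ``take out what is known'' property of the Bochner conditional expectation in $\cH$ (that is, $\EE[\innerprod{Y, W} \mid \cG] = \innerprod{Y, \EE[W \mid \cG]}$ for $\cG$-measurable $Y$), the vectors $v^{\XX^k}(t, X_t^k)$ and $\dot X_t^k - v^{\XX^k}(t, X_t^k)$ are orthogonal in $L^2(\Omega; \cH)$; integrating the resulting Pythagorean identity over $t \in [0,1]$ and noting $\dot X_t^k$ is constant in $t$ gives
\begin{equation} \label{eqn:pythag_plan}
\EE\brs{\norm{Z_1^k - Z_0^k}^2} = \int_0^1 \EE\brs{\norm{v^{\XX^k}(t, X_t^k)}^2}\, dt + V\bigl((Z_0^k, Z_1^k)\bigr),
\end{equation}
since the second term on the right is exactly the defining integrand of $V((Z_0^k, Z_1^k))$. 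By the marginal-preserving property (Theorem~\ref{thm:mainMrginalPreservation}), $Z_t^{k+1} \overset{d}{=} X_t^k$, and by construction $\dot Z_t^{k+1} = v^{\XX^k}(t, Z_t^{k+1})$, so the first term on the right of \eqref{eqn:pythag_plan} equals $\int_0^1 \EE[\norm{\dot Z_t^{k+1}}^2]\, dt$.

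It remains to relate $\int_0^1 \EE[\norm{\dot Z_t^{k+1}}^2]\, dt$ to $\EE[\norm{Z_1^{k+1} - Z_0^{k+1}}^2]$ and $S(\ZZ^{k+1})$. Using $Z_1^{k+1} - Z_0^{k+1} = \int_0^1 \dot Z_s^{k+1}\, ds$ (a Bochner integral), expanding the square in the definition of $S(\ZZ^{k+1})$ and interchanging $\int_0^1$ with $\EE$ shows that the cross term $\int_0^1 \EE[\innerprod{Z_1^{k+1} - Z_0^{k+1}, \dot Z_t^{k+1}}]\, dt$ collapses to $\EE[\norm{Z_1^{k+1} - Z_0^{k+1}}^2]$, hence
\begin{equation} \label{eqn:straightid_plan}
S\bigl(\ZZ^{k+1}\bigr) = \int_0^1 \EE\brs{\norm{\dot Z_t^{k+1}}^2}\, dt - \EE\brs{\norm{Z_1^{k+1} - Z_0^{k+1}}^2}.
\end{equation}
Combining \eqref{eqn:pythag_plan}, the marginal identification $\int_0^1 \EE[\norm{v^{\XX^k}(t,X_t^k)}^2]dt = \int_0^1 \EE[\norm{\dot Z_t^{k+1}}^2]dt$, and \eqref{eqn:straightid_plan} yields \eqref{eqn:perstep_plan}. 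Telescoping over $k = 0, \dots, K$ and using $(Z_0^0, Z_1^0) = (X_0, X_1)$ gives $\sum_{k=0}^K [S(\ZZ^{k+1}) + V((Z_0^k, Z_1^k))] = \EE[\norm{X_1 - X_0}^2] - \EE[\norm{Z_1^{K+1} - Z_0^{K+1}}^2] \le \EE[\norm{X_1 - X_0}^2]$. Finally, the $K+1$ summands are nonnegative and sum to at most $\EE[\norm{X_1 - X_0}^2] < \infty$, so at least one of the pairs $S(\ZZ^{k+1}) + V((Z_0^k, Z_1^k))$, $0 \le k \le K$, is bounded by $\EE[\norm{X_1 - X_0}^2]/(K+1)$, which is the stated $O(1/K)$ bound on the minimum.

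The main obstacle I anticipate is not the algebra but the measure-theoretic hygiene: justifying each interchange of $\int_0^1 (\cdot)\, dt$ with $\EE$, handling the Bochner integral $Z_1^{k+1} - Z_0^{k+1} = \int_0^1 \dot Z_s^{k+1}\, ds$ together with the inner-product identities taken against it, and verifying the $\cH$-valued conditional-expectation orthogonality used for \eqref{eqn:pythag_plan}. All of these rest on the finiteness facts assembled in the first paragraph from rectifiability (Assumption~\ref{assumption:finite}) and $L^2$ transport-cost monotonicity; once they are in place, \eqref{eqn:perstep_plan} and the telescoping argument are routine.
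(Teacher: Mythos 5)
Your proof is correct and follows essentially the same route as the paper: both reduce the theorem to the per-step identity $\EE\norm{Z_1^k - Z_0^k}^2 - \EE\norm{Z_1^{k+1} - Z_0^{k+1}}^2 = S(\ZZ^{k+1}) + V((Z_0^k, Z_1^k))$ via the same two decompositions (the conditional-expectation Pythagorean/orthogonality split for $\EE\norm{Z_1^k - Z_0^k}^2$, and the cross-term collapse $\int_0^1 \EE\innerprod{Z_1^{k+1}-Z_0^{k+1}, \dot Z_t^{k+1}}\,dt = \EE\norm{Z_1^{k+1}-Z_0^{k+1}}^2$ giving $S(\ZZ^{k+1}) = \int_0^1\EE\norm{\dot Z_t^{k+1}}^2 dt - \EE\norm{Z_1^{k+1}-Z_0^{k+1}}^2$), then telescope and average. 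The extra care you take over Bochner integrability, Fubini/Tonelli interchanges, and the $L^2$ orthogonality of $\cH$-valued conditional expectation is sound and only makes explicit steps the paper leaves implicit.
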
 

\begin{proof}
\begin{align}
    \norm{X_1 - X_0}^2 
    & = \norm{X_1 - X_0 - \EE[X_1-X_0\mid X_t]}^2 \\
    & + 2 \innerprod{X_1 - X_0 - \EE[X_1-X_0 \mid X_t], \EE[X_1 - X_0 \mid X_t]} \\
    & + \norm{\EE[X_1-X_0\mid X_t]}^2. \\
\end{align}
Then
\begin{equation} \label{eqn:straightness_decomp1}
    \EE  \norm{X_1 - X_0}^2  = V((X_0, X_1)) + \int_0^1 \EE [\norm{X_1 - X_0 | X_t}^2] dt.
\end{equation}

Note
\begin{equation}
    \norm{(Z_1 - Z_0) - \dot{Z_t}}^2 = \norm{Z_1 - Z_0}^2  - 2\innerprod{Z_1 - Z_0, \dot{Z_t}} + \norm{\dot{Z_t}}^2.
\end{equation}

By moving the integral over $t$ inside the inner product, we get 
\begin{equation} \label{eqn:straightness_decomp2}
    S(\ZZ) = \int_0^1 \EE\norm{\dot{Z_t}}^2 dt - \EE\norm{Z_1 - Z_0}^2.
\end{equation}

By construction, $\dot{Z_t} = \EE[X_1 - X_0 | X_t = Z_t]$. Therefore, $\EE\brs{\norm{\dot{Z_t}}^2} = \EE[\norm{X_1 - X_0 \mid X_t}^2]$. Hence, by Equations \eqref{eqn:straightness_decomp1} and \eqref{eqn:straightness_decomp2},
\begin{equation}
    \EE \norm{X_1 - X_0}^2 - \EE \norm{Z_1 - Z_0}^2 = S(\ZZ) + V((X_0, X_1))
\end{equation}

Applying it to $\ZZ^k$,
$$
\mathbb{E}\left[\left\|Z_1^k-Z_0^k\right\|^2\right]-\mathbb{E}\left[\left\|Z_1^{k+1}-Z_0^{k+1}\right\|^2\right]=S\left(\ZZ^{k+1}\right)+V\left(\left(Z_0^k, Z_1^k\right)\right).
$$
Summing over $k$ leads to the desired inequality.
\end{proof}

A practical implication of Theorem~\ref{thm:straightening} is that single-step sampling becomes feasible after several recursive applications of $\operatorname{RectFlow}(\cdot)$.


\section{Experimental details}
\label{suppsec:experiments}
This section provides additional results and implementation details that supplement the main text. We begin with further qualitative results on CelebA to highlight the diversity and fidelity of samples generated by our method. We then outline the architectural configurations and training procedures used in our experiments. The code used for our main experiments is available at \url{https://anonymous.4open.science/r/Functional-Rectified-Flow-C4FC}.

\subsection{Additional qualitative results on CelebA}
\label{sec:celeba_extra_qual}
To further illustrate the generative quality of the Functional Rectified Flow (FRF) model, we present additional samples. Figures~\ref{fig:celeba_sample2} through~\ref{fig:celeba_sample6} show images generated using the FRF model trained on CelebA $64 \times 64$ dataset. These results highlight the model's ability to consistently produce high-quality and diverse samples across multiple runs.

\begin{figure}[htbp]
    \centering
    \includegraphics[width=0.75\linewidth]{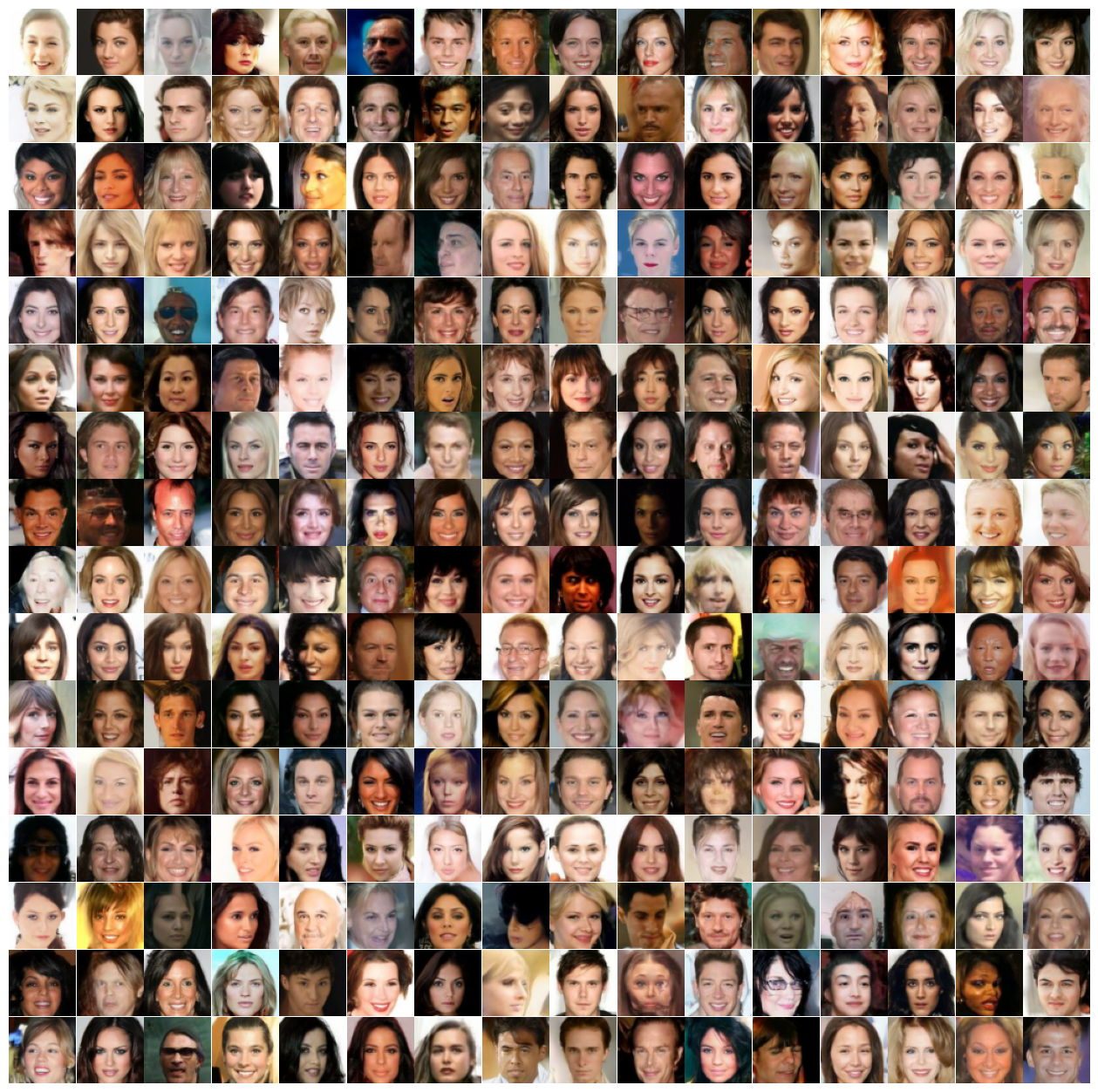}
    \caption{Additional CelebA samples generated by FRF.}
    \label{fig:celeba_sample2}
\end{figure}

\begin{figure}[htbp]
    \centering
    \includegraphics[width=0.75\linewidth]{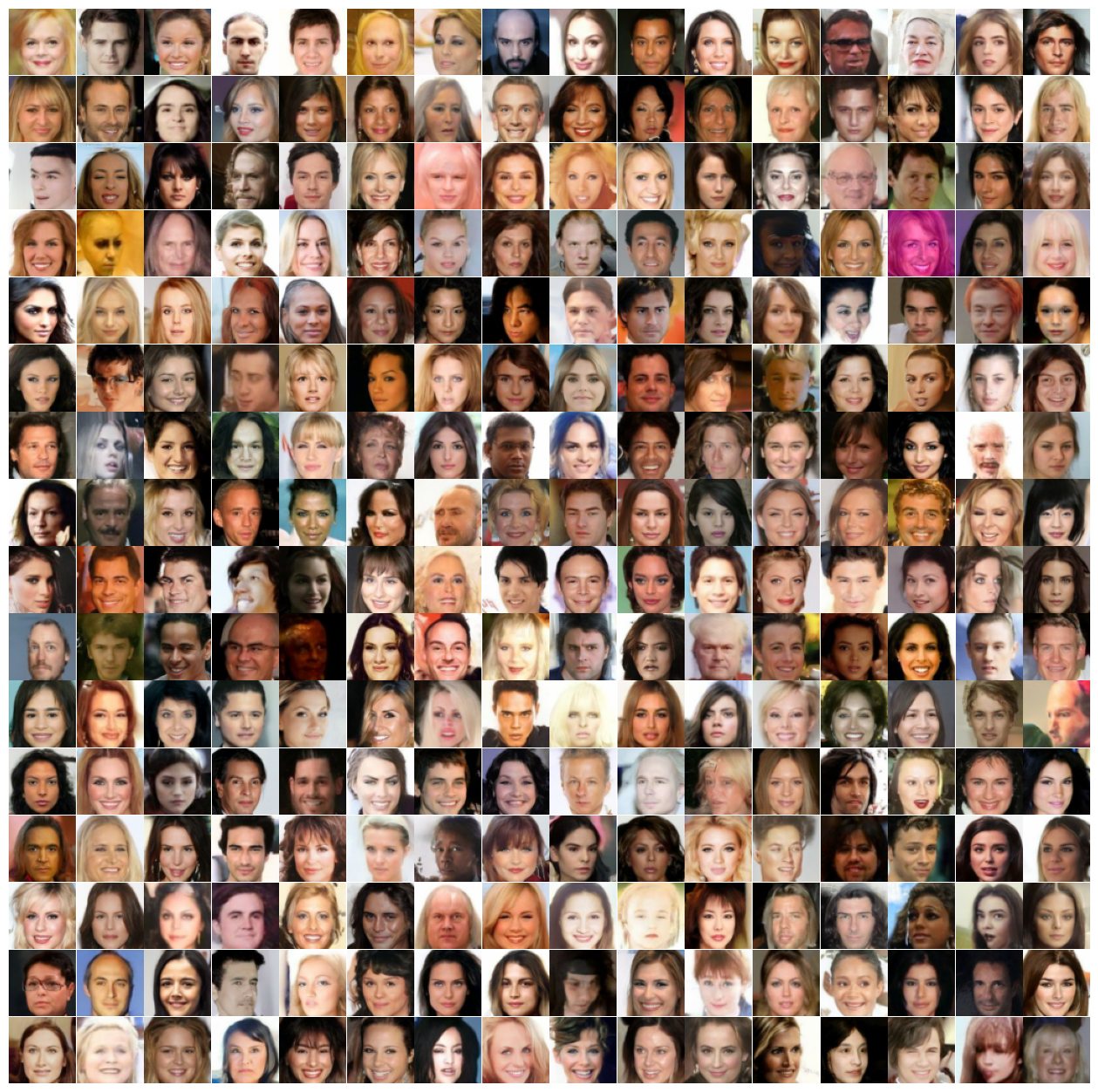}
    \caption{Additional CelebA samples generated by FRF.}
    \label{fig:celeba_sample3}
\end{figure}

\begin{figure}[htbp]
    \centering
    \includegraphics[width=0.75\linewidth]{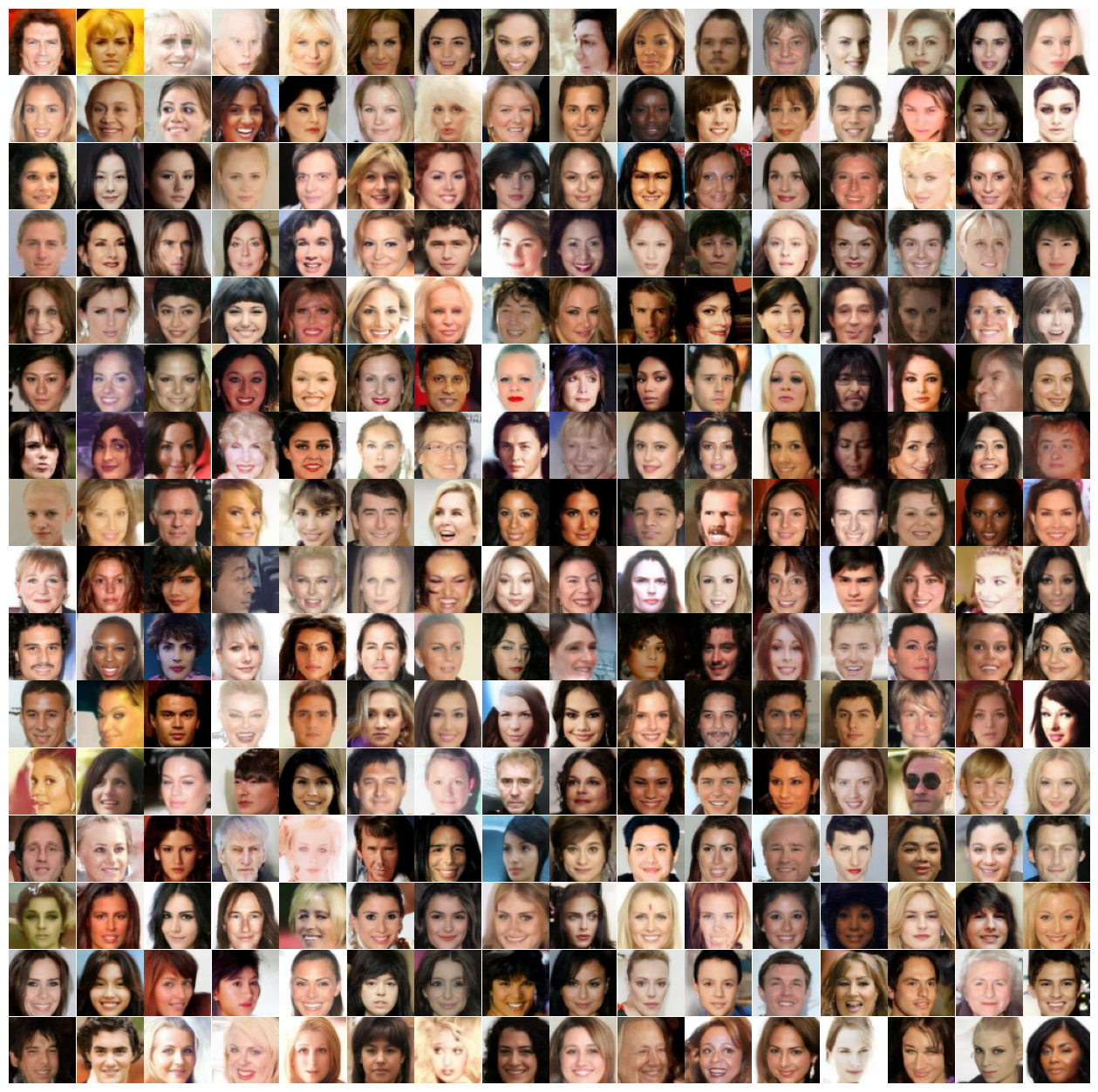}
    \caption{Additional CelebA samples generated by FRF.}
    \label{fig:celeba_sample4}
\end{figure}

\begin{figure}[htbp]
    \centering
    \includegraphics[width=0.75\linewidth]{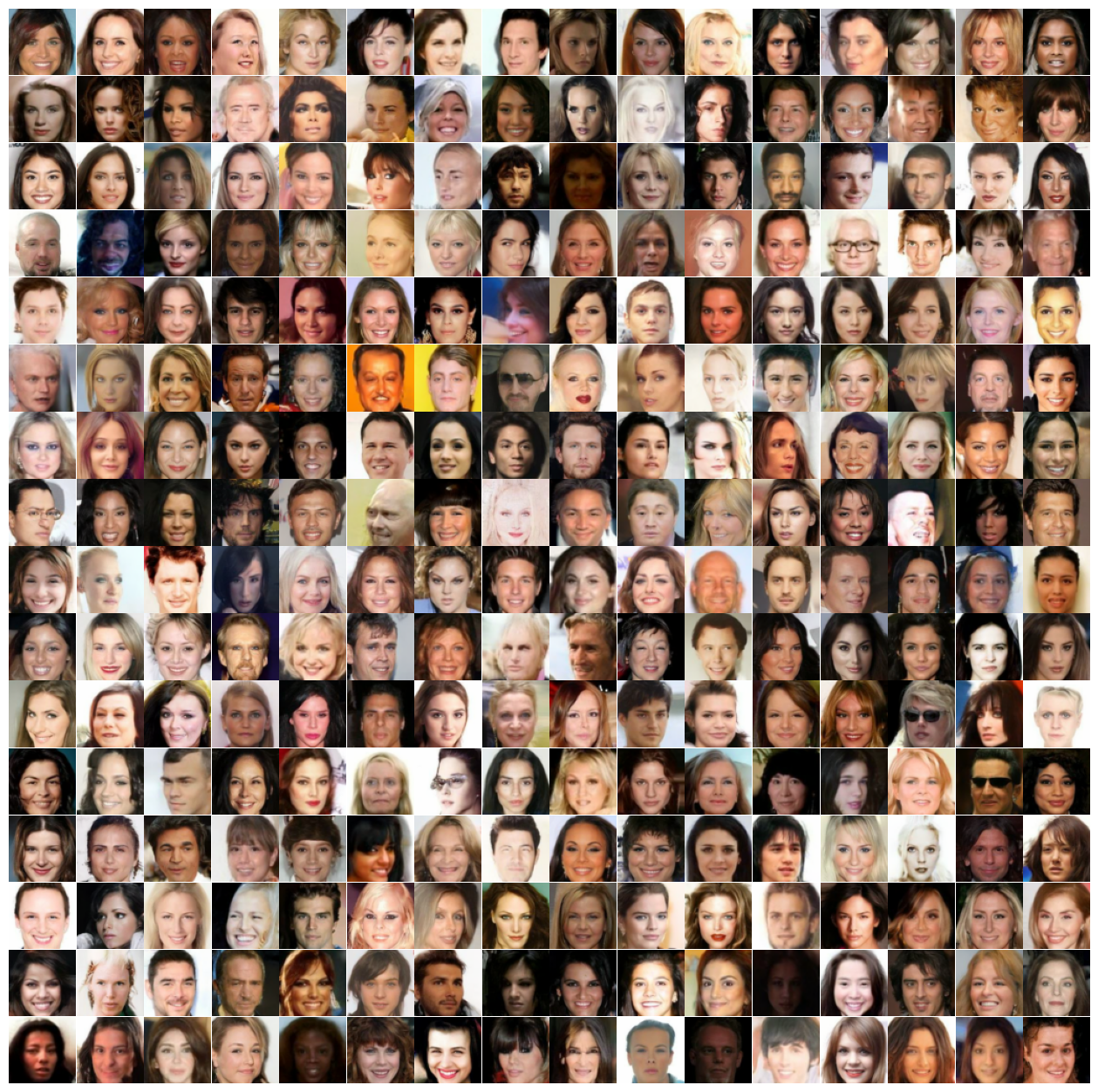}
    \caption{Additional CelebA samples generated by FRF.}
    \label{fig:celeba_sample5}
\end{figure}

\begin{figure}[htbp]
    \centering
    \includegraphics[width=0.75\linewidth]{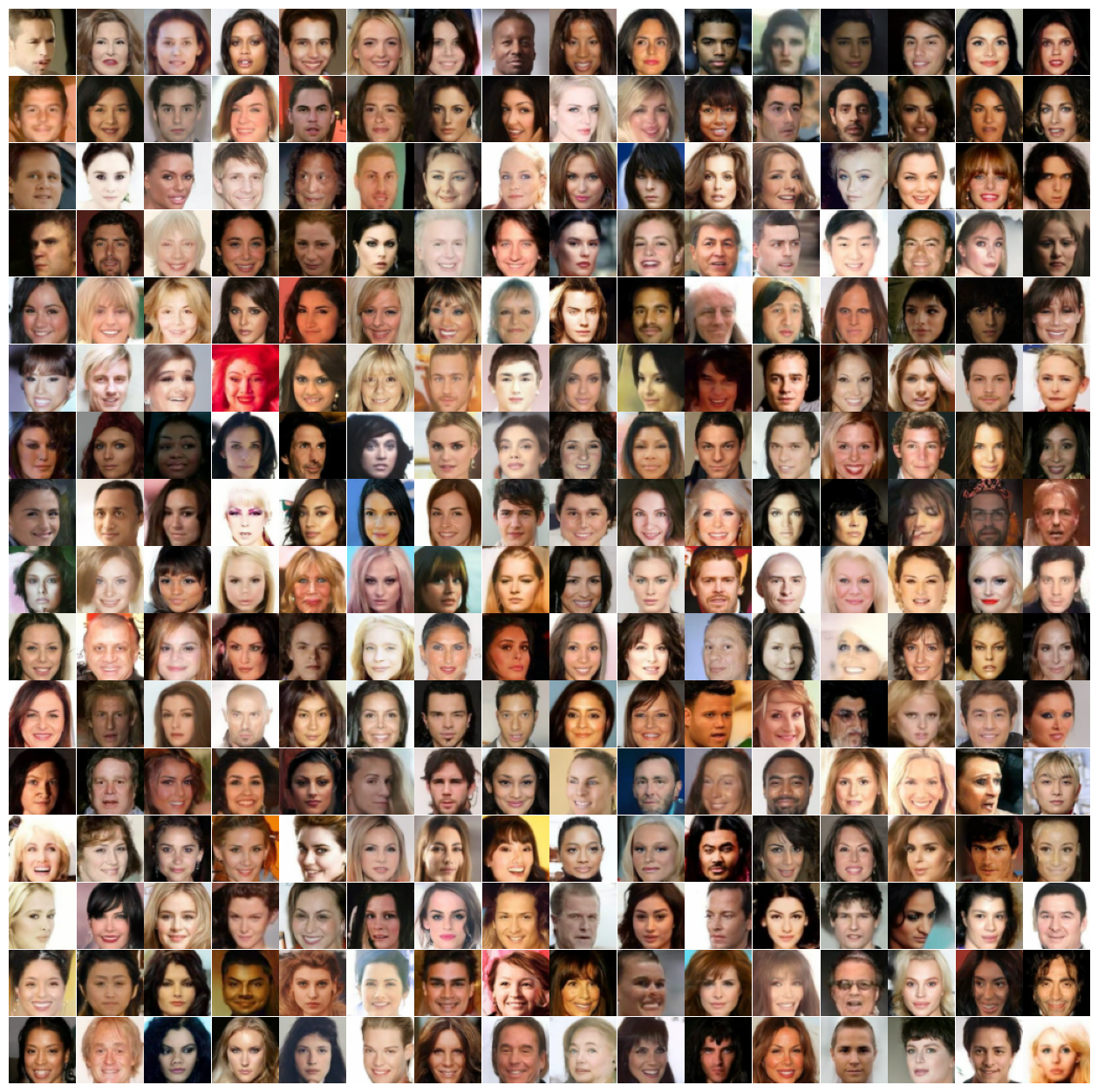}
    \caption{Additional CelebA samples generated by FRF.}
    \label{fig:celeba_sample6}
\end{figure}

\subsection{Architectural details and hyperparameter choices}

\subsubsection{INR on MNIST}
We follow \citet{franzese2023continuoustime} and adopt the original INR architecture proposed by \citet{sitzmann2020implicit}. The network is a fully connected MLP with 8 layers, each containing 128 neurons and using sinusoidal activations \citep{sitzmann2020implicit}. We implement the modulation-based meta-learning framework as in \citet{dupont2022functa} and \citet{finn2017model}. The base network parameters are optimized in the outer loop using the AdaBelief optimizer \citep{zhuang2020adabelief}, with a cosine learning rate schedule ending at $10^{-5}$. The inner loop adapts sample-specific modulation vectors via 3 steps of SGD with a learning rate of $10^{-2}$. Training is performed on 8 NVIDIA A40 GPUs with an effective batch size of $64 \times 8$. The model is trained for $10^6$ steps and takes approximately 3 days.

\subsubsection{Transformer on CelebA}
We follow \citet{franzese2023continuoustime} and adopt the UViT backbone introduced in \citet{bao2022all}. The model uses 2D sinusoidal positional embeddings, and we set the patch size to 1, effectively treating each pixel as a token. Our architecture matches that of \citet{franzese2023continuoustime}, consisting of 7 transformer layers, each comprising an 8-head self-attention mechanism and a fully connected feedforward layer. Skip connections are employed following both \citet{franzese2023continuoustime} and \citet{bao2022all}. For optimization, we use the AdamW optimizer \citep{loshchilov2019decoupledweightdecayregularization} with a cosine warm-up schedule, ending at a learning rate of $2 \times 10^{-4}$. The weight decay is set to $10^{-2}$. Training is conducted for $6 \times 10^5$ steps with an effective batch size of 32, and the learning rate is reduced by a factor of 0.1 during the final $10^5$ steps. The model is trained on 8 NVIDIA A40 GPUs, and the full training takes approximately one week.

\subsection{Neural operator on Navier-Stokes dataset}
We follow \citet{kerrigan2024functional} and adopt the Fourier Neural Operator (FNO) architecture proposed by \citet{kovachki2021neural}. The network comprises 4 Fourier layers with 32 modes and 64 hidden channels, along with 256-dimensional lifting and projection layers, mirroring the configuration used in \citet{kerrigan2024functional}. Optimization is performed using the Adam optimizer with an initial learning rate of $5 \times 10^{-4}$, which is reduced by a factor of 0.1 every 25 epochs. Following \citet{liu2022flow}, we add a small amount of noise to the input data for smoothing. To mitigate redundancy in the dataset, we follow \citet{kerrigan2024functional} and subsample $2 \times 10^4$ datapoints. The model is trained for 300 epochs with a batch size of 128 on a single NVIDIA A100 MIG device, with the full training taking approximately three hours.

\end{document}